\documentclass{article}




    \usepackage[final]{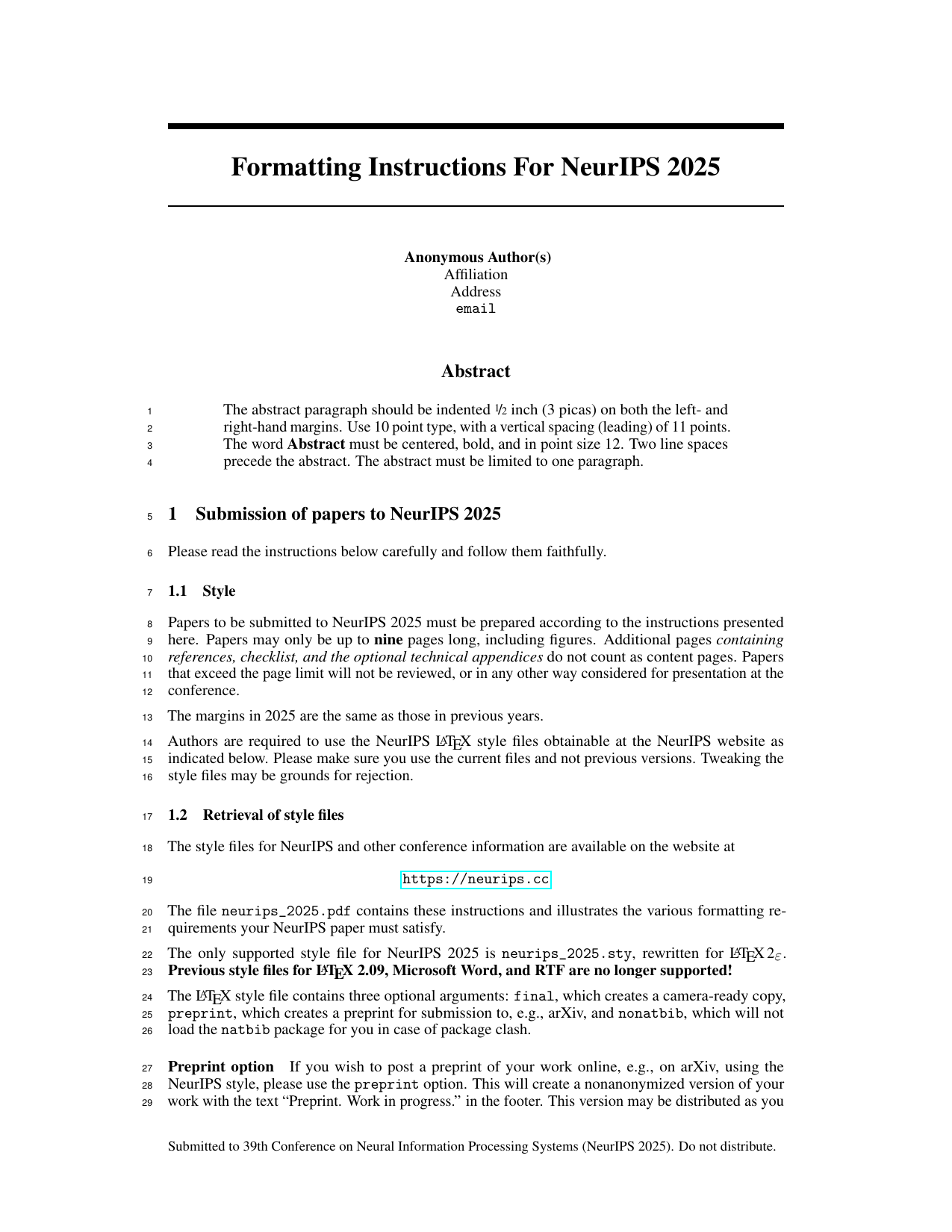}


\usepackage[utf8]{inputenc} 
\usepackage[T1]{fontenc}    
\usepackage{hyperref}       
\usepackage{url}            
\usepackage{booktabs}       
\usepackage{amsmath,amsfonts,amssymb,amsthm,mathtools,bm}       
\usepackage{nicefrac}       
\usepackage{microtype}      
\usepackage{xcolor}         
\usepackage{natbib}

\def\III{\text I}

\def\D{\mathcal D}
\def\F{\mathcal F}

\def\L{\mathcal L}

\def\N{\mathcal N}
\def\P{\mathcal P}

\def\R{\mathcal R}
\def\S{\mathcal S}

\def\EE{\mathbb E}
\def\NN{\mathbb N}
\def\PP{\mathbb P}
\def\RR{\mathbb R}

\def\ZZ{\mathbb Z}

\def\bbeta{\boldsymbol \beta}

\newtheorem{assumption}{Assumption}
\newtheorem{theorem}{Theorem}
\newtheorem{proposition}{Proposition}

\newtheorem{lemma}{Lemma}

\title{Statistical Guarantees for High-Dimensional Stochastic Gradient Descent}

%

\author{%
  Jiaqi Li \\
  Department of Statistics\\
  University of Chicago\\
  Chicago, IL 60637 \\
  \texttt{jqli@uchicago.edu} \\
  \And
  Zhipeng Lou \\
  Department of Mathematics\\ 
  University of California, San Diego\\
  La Jolla, CA 92093 \\
  \texttt{zlou@ucsd.edu} \\
  \AND
  Johannes Schmidt-Hieber \\
  Department of Applied Mathematics\\ 
  University of Twente\\
  Enschede, Netherlands \\
  \texttt{a.j.schmidt-hieber@utwente.nl} \\
  \And
  Wei Biao Wu \\
  Department of Statistics\\
  University of Chicago\\
  Chicago, IL 60637 \\
  \texttt{wbwu@uchicago.edu} \\
}

\begin{document}

\maketitle

\begin{abstract}
  Stochastic Gradient Descent (SGD) and its Ruppert–Polyak averaged variant (ASGD) lie at the heart of modern large-scale learning, yet their theoretical properties in high-dimensional settings are rarely understood. In this paper, we provide rigorous statistical guarantees for constant learning-rate SGD and ASGD in high-dimensional regimes. Our key innovation is to transfer powerful tools from high-dimensional time series to online learning. Specifically, by viewing SGD as a nonlinear autoregressive process and adapting existing coupling techniques, we prove the geometric-moment contraction of high-dimensional SGD for constant learning rates, thereby establishing asymptotic stationarity of the iterates. Building on this, we derive the $q$-th moment convergence of SGD and ASGD for any $q\ge2$ in general $\ell^s$-norms, and, in particular, the $\ell^{\infty}$-norm that is frequently adopted in high-dimensional sparse or structured models. Furthermore, we provide sharp high-probability concentration analysis which entails the probabilistic bound of high-dimensional ASGD. Beyond closing a critical gap in SGD theory, our proposed framework offers a novel toolkit for analyzing a broad class of high-dimensional learning algorithms.
\end{abstract}

\section{Introduction}\label{sec_intro}

Stochastic gradient descent (SGD) has been a cornerstone in large-scale machine learning since the seminal work by \citet{robbins_stochastic_1951}. It is especially efficient in high-dimensional and overparameterized settings where the number of unknown parameters can exceed the number of training samples \citep{arpit_closer_2017,zhang_understanding_2017, he_deep_2016}. SGD can also be combined with regularization techniques such as dropout to prevent overfitting in large networks \citep{krizhevsky_imagenet_2012,srivastava_dropout_2014}. Despite the vast amount of theoretical work on SGD, generalization bounds of SGD in high-dimensional regimes remain limited \citep{ 2023arXiv230111235G}. Considering a strongly convex objective function, we provide statistical guarantees for constant learning-rate SGD and its Ruppert–Polyak averaged variant (ASGD) \citep{ruppert_efficient_1988,polyak_acceleration_1992} in high-dimensional settings.

Specifically, we consider a general optimization problem
\begin{equation}
    \label{eq_goal_sgd}
    \bbeta^* \in \arg\min_{\bbeta\in\RR^d}G(\bbeta), \mbox{ where }
    \bbeta\mapsto G(\bbeta) :=\EE_{\bm{\xi}\sim\Pi} g(\bbeta,\bm{\xi}),
\end{equation}
$g(\cdot)$ is the noise-perturbed measurement of $G(\cdot)$, and $\bm{\xi}$ denotes a random element sampled from some unknown distribution $\Pi$. Given i.i.d.\ random samples $\bm{\xi}_1,\bm{\xi}_2,\ldots$ and some initialization $\bbeta_0\in\RR^d$, the $k$-th SGD iteration is
\begin{equation}
    \label{eq_sgd_recursion}
    \bbeta_k = \bbeta_{k-1} - \alpha\nabla g(\bbeta_{k-1},\bm{\xi}_k),\quad k=1,2,\ldots,
\end{equation}
for some constant learning rate $\alpha>0$, and $\nabla g(\bbeta,\bm{\xi})=\nabla_{\bbeta} g(\bbeta,\bm{\xi})$ the stochastic gradient with respect to $\bbeta$. For $k\ge1$, the ASGD variant is defined by
\begin{equation}
    \label{eq_asgd}
    \bar\bbeta_k = \frac{1}{k}\sum_{i=1}^k\bbeta_i.
\end{equation}
We are interested in the high-dimensional setting where the parameter dimension $d$ can be very large. Here, a notable divide between empirical success and theoretical understanding is that practitioners often employ a large constant learning rate $\alpha$ in~\eqref{eq_sgd_recursion} to accelerate convergence in high-dimensional problems \citep{wu_how_2018,cohen_gradient_2020,cai_large_2024}. However, such choices can induce pronounced non-stationarity in the SGD iterates $\{\bbeta_k\}_{k\in\NN}$ 
which will not converge to a point but oscillates around the mean of a stationary distribution. In other words, $\bbeta_k$ is non-stationary but asymptotically stationary, which converges only in distribution as $k\rightarrow\infty$, while the mean of this distribution differs from the exact minimizer $\bbeta^*$ due to the non-diminishing bias of order $O(\alpha)$ \citep{dieuleveut_bridging_2020,merad_convergence_2023}. Classical theory mostly relies on decaying learning rates \citep{zhang_statistical_2004,nemirovski_robust_2009,jentzen_lower_2020,shi_learning_2023}. To address the non-stationarity issue, we apply powerful tools from nonlinear time series analysis \citep{wu_limit_2004} to online learning, particularly by adapting the coupling techniques to show the geometric-moment contraction of SGD for constant learning rates. Specifically, for any two SGD sequences $\{\bbeta_k\}_{k\in\NN}$ and $\{\bbeta_k'\}_{k\in\NN}$ that share the same random samples but have different initial vectors $\bbeta_0$ and $\bbeta_0'$, we show in Theorem~\ref{thm_gmc} that for all sufficiently small constant learning rates $\alpha$, the initialization is forgotten exponentially fast in the sense that
\begin{equation}
    \label{eq_gmc_intro}
    (\EE|\bbeta_k-\bbeta_k'|_s^q)^{1/q} \le r_{\alpha,s,q}^k|\bbeta_0-\bbeta_0'|_s \ \  \mbox{ holds for all } k \in \NN,
\end{equation}
for contraction speed $0 \le r_{\alpha,s,q} < 1$, and $|\cdot|_s$ the $\ell^s$-norm, that is,
\begin{equation*}
  \big|(v_1, \ldots, v_d)^{\top}\big|_s=\Big(\sum_{i=1}^d|v_i|^s\Big)^{1/s}, \,\, s \ge 1.
\end{equation*} 
This asserts the existence of a limiting stationary distribution of $\bbeta_k$ as $k\rightarrow\infty$, thereby facilitating a systematic convergence theory of SGD even in nonlinear, overparameterized models. 

Building on this new framework, we provide non-asymptotic bounds for higher-order moments of the SGD error in general $\ell^s$-norms for any finite $s\ge2$ beyond the usual $\ell^2$-norm, extendable to max-norm $\ell^{\infty}$ by choosing $s\approx\log(d)$. Notably, the $\ell^{\infty}$-norm is frequently adopted in high‐dimensional sparse or structured estimation \citep{wainwright_high-dimensional_2019}. See for instance, the max-norm convergence of the Lasso and Dantzig selector \citep{lounici_sup-norm_2008}; the pivotal method for sup‐norm bounds of the square‐root Lasso \citep{belloni_square-root_2011}; and the max-norm error control for confidence intervals in high-dimensional regression problems \citep{javanmard_confidence_2013}. In stochastic‐approximation (SA), \citet{wainwright_stochastic_2019} derived $\ell^{\infty}$-norm bounds for Q-learning with decaying learning rates; \citet{chen_concentration_2024} derived maximal concentration bounds for SA under arbitrary norms with decaying learning rates and with contraction as an assumption; \citet{agarwal_stochastic_2012} considered high-dimensional SA for strongly convex objectives with a sparse optimum, but using decaying learning rates and restricting the tails of stochastic gradients to be sub-Gaussian. To date, all the existing results are restricted to low‐dimensional settings or decaying learning rates and do not carry over to overparameterized models with constant learning rates. To address this gap, we derive a sharp \textit{high-dimensional moment inequality} (see Lemma~\ref{lemma_rio_Ls_moment_ineq}) valid for a broad class of learning problems, delivering explicit non-asymptotic bounds of $\EE|\bbeta_k-\bbeta^*|_s^q$ and its ASGD variant for any $q,s\ge2$ with mild conditions, together with matching complexity guarantees, i.e., given some target error $\varepsilon>0$ (see Proposition~\ref{prop_complexity}), the required number of iterations $k$ such that 
\begin{equation*}
    \EE|\bar\bbeta_k-\bbeta^*|_s^q \le \varepsilon.
\end{equation*}
Although moment bounds capture average‐case performance, a single execution of (A)SGD in practice demands high-probability guarantees \citep{valiant_theory_1984,vapnik_nature_2000,bach_non-strongly-convex_2013,durmus_tight_2021,zhong_probabilistic_2024}. Recent advances include a generic high-probability framework for both convex and nonconvex SGD with sub-Gaussian gradient noises \citep{liu_high_2023}, high-probability rates for clipped-SGD with heavy-tailed noises \citep{nguyen_improved_2023}, and high-probability guarantees for nonconvex stochastic approximation via robust gradient clipping \citep{li_high_2022}. However, these established high-probability bounds focus again on decaying learning rates and low dimension. Moreover, early work primarily addressed light‐tailed noises where the gradients are bounded or have exponential‐type moments \citep{nemirovski_robust_2009,rakhlin_making_2012,ghadimi_stochastic_2013,cardot_online_2017,harvey_tight_2019,mou_linear_2020,chen_recursive_2023}. For the cases that only admit a polynomial tail with finite $q$-th moment, \citet{lou_beyond_2022} were the first to derive a Nagaev–type inequality \citep{nagaev_large_1979} for low-dimensional ASGD. The rate was shown to be optimal but their bound heavily relies on the linearity of gradients and is only suitable for decaying learning rates. 
By leveraging a dependency‐adjusted \textit{functional dependence measure} in high‐dimensional time series \citep{zhang_gaussian_2017}, we derive a high‐probability concentration bounds for high‐dimensional ASGD with constant learning rates. Given a tolerance level $\delta\in(0,1)$ and a target error $\varepsilon>0$, we provide bounds for the required number of iterations $k$ to guarantee that
\begin{equation*}
    \PP\big(|\bar\bbeta_k-\bbeta^*|_s\le\varepsilon\big) \ge 1-\delta.
\end{equation*}
This tail-decay result (see Eq.~\eqref{eq_prob_bound}) is proved via a new Fuk-Nagaev-type inequality (see Theorem~\ref{Theorem_Fuk_Nagaev}) and complements our moment and complexity characterizations of large-step stochastic optimization.

\subsection{Our Contributions}
This paper contributes to theoretical advancements for understanding constant learning-rate SGD and its averaged variant (ASGD) in the challenging high-dimensional regime. Our main technical innovations and results include:

\textbf{(1) Handling Constant Learning Rates in High Dimensions.} In practice, large-scale machine learning models commonly deploy fixed, large learning rates to speed up optimization in high-dimensional settings. To address this, we introduce novel coupling techniques inspired by high-dimensional nonlinear time series and establish the asymptotic stationarity of the SGD iterates with arbitrary initialization \textit{(Section~\ref{sec_gmc})}.

\textbf{(2) Generalized Moment Convergence in $\ell^s$- and $\ell^{\infty}$-Norms.} By deriving a sharp high-dimensional moment inequality, we establish explicit, non-asymptotic $q$-th moment bounds for arbitrary $\ell^s$-norms of (A)SGD iterates for any $q\ge2$ and even integers $s$, generalizing previous theory primarily focusing on mean squared error (MSE) convergence with $q=s=2$. Our results extend naturally to the max-norm case (i.e., $\ell^{\infty}$) by selecting $s \approx \log(d)$, that is essential for modern sparse and structured estimation in high-dimensional data \textit{(Section~\ref{sec_moment})}.
    
\textbf{(3) High-Probability Tail Bounds.} While average-case (moment) bounds are informative, single runs require tail guarantees. We derive the first high-probability concentration bounds for ASGD in high-dimensional settings with constant learning rates. By developing a tight Fuk-Nagaev-type inequality using the coupling techniques in nonlinear time series, we control the algorithmic complexity required to achieve targeted accuracy with high confidence \textit{(Section~\ref{sec_concentration})}.

\subsection{Related Works} 

\textit{Stochastic Gradient Descent and its Variants.} The SGD algorithm can be traced back to \citet{robbins_stochastic_1951,kiefer_stochastic_1952}. Popular SGD variants include Nesterov’s accelerated gradient \citep{nesterov_method_1983}, AdaGrad \citep{duchi_adaptive_2011}, AdaDelta \citep{zeiler_adadelta_2012}, Adam \citep{kingma_adam_2014}, AMSGrad \citep{reddi_convergence_2018}, AdamW \citep{loshchilov_decoupled_2018}, SAG \citep{schmidt_minimizing_2017}, SVRG \citep{johnson_accelerating_2013}, SARAH \citep{nguyen_sarah_2017}, SPIDER \citep{fang_spider_2018} and Katyusha \citep{allen-zhu_katyusha_2017}. The theoretical foundations of SGD under decaying learning rates were established in the early studies by \citep{blum_approximation_1954,dvoretzky_stochastic_1956,sacks_asymptotic_1958}, with stronger almost-sure guarantees by \citet{fabian_asymptotic_1968,robbins_convergence_1971,ljung_analysis_1977,lai_stochastic_2003,wang_stochastic_2010}. Existing works for smooth, strongly‐convex objectives with decaying step sizes include \citet{ruppert_efficient_1988,polyak_acceleration_1992,nemirovski_robust_2009,bach_non-strongly-convex_2013,rakhlin_making_2012,2020_Neurips_SGD_Martikopolous} among others. Despite the rich literature on SGD, the theoretical understanding in high-dimensional settings remains limited. Exceptions are  \citet{paquette_sgd_2021,paquette_implicit_2022} who study high-dimensional SGD for the least-squares loss.

\textit{Constant Learning Rate.} 
In high‐dimensional scenarios, constant learning rates prevail due to simpler tuning procedures and faster convergence \citep{wang_large_2022}. More recent theoretical and empirical studies of large‐step SGD include \citet{wu_how_2018,cohen_gradient_2020} and the very recent \citet{cai_large_2024}, which formalize the resurgence of constant‐step methods in modern machine learning. A useful way to analyze constant–step SGD is to treat its iterates as a time‐homogeneous Markov chain \citep{pflug_stochastic_1986}, which makes it possible to characterize its long‐run behavior and stationary law. However, previous works only derived convergence in Wasserstein distance \citep{dieuleveut_bridging_2020,merad_convergence_2023}. Such convergence in probability measures can hardly provide refined (non)-asymptotics such as higher-moment convergence and concentration inequalities, and seems nontrivial to extend to high-dimensional regimes.

\textit{High-Dimensional Nonlinear Time Series.}  
An alternative approach for constant learning-rate SGD is to view it as an iterated random function \citep{dubins_invariant_1966,barnsley_iterated_1985,diaconis_iterated_1999,diaconis_random_2000}, or a nonlinear autoregressive (AR) process. This interpretation facilitates the theory of online learning with non-stationarity and complex dependency structures; see, for example, the recent work by \citet{li_asymptotics_2024} on SGD with dropout regularization building on the GMC framework \citep{wu_limit_2004}. To extend this systematic theory to high-dimensional settings, we adapt the coupling techniques in time series \citep{wu_nonlinear_2005,wu_strong_2007,wu_recursive_2009,wu_asymptotic_2011,xiao_covariance_2012,berkes_komlosmajortusnady_2014,wu_performance_2016,karmakar_optimal_2020}, especially the ones for high-dimensional regimes \citep{zhang_gaussian_2017,zhang_convergence_2021,li_ell2_2024} to online learning algorithms.

\subsection{Notation} 
Denote column vectors in $\RR^d$ by lowercase bold letters $\bm{x}=(x_1, \ldots, x_d)^{\top}$ and the $\ell^s$-norm of $\bm{x}$ by $|\bm{x}|_s=(\sum_{i=1}^d|x_i|^s)^{1/s}$, $s\ge1$. Write $\bm{x}^{\odot s}=(x_1^s,\ldots,x_d^s)^{\top}$. The expectation and covariance of random vectors are respectively denoted by $\EE[\cdot]$ and $\mathrm{Cov}(\cdot)$. For $q>0$ and a random variable $X$, we write $X\in\L^q$ iff $\lVert X\rVert_q=[\EE(|X|^q)]^{1/q}<\infty$. We denote matrices by uppercase letters. Given matrices $A$ and $B$ of compatible dimension, their matrix product is denoted by juxtaposition. Write $A^{\top}$ for the transpose of $A$ and $I_d$ for $d \times d$ identity matrix. For two positive number sequences $(a_n)$ and $(b_n)$, we say $a_n=O(b_n)$ (resp. $a_n\asymp b_n$) if there exists $c>0$ such that $a_n/b_n\le c$ (resp. $1/c\le a_n/b_n\le c$) for all large $n$. 
Let $(x_n)$ and $(y_n)$ be two sequences of random variables. Write $x_n=O_{\PP}(y_n)$ if for $\forall \epsilon>0$, there exists $c>0$ such that $\PP(|x_n/y_n|\le c)>1-\epsilon$ for all large $n$.

\begin{table}[htb!]
    \centering
    \begin{tabular}{cccc}
    \hline\hline
        Notation & Definition & Reference & Index Range\\
        \hline \\[-2ex]
         $\bbeta^*$ & minimizer of the loss function $G(\bbeta)$ & Eq.~\eqref{eq_goal_sgd} & / \\[0.5ex]
         $\bbeta_k$ & SGD iterates  & Eq.~\eqref{eq_sgd_recursion} & $k\in\NN$ \\[0.5ex]
         $\bbeta_k^{\circ}$ & stationary SGD iterates & Thm.~\eqref{thm_sgd_moment} & $k\in\ZZ$\\[0.5ex]     
         $\bar\bbeta_k$ & ASGD iterates & Eq.~\eqref{eq_asgd} & $k\in\NN$\\[0.5ex]
         $\bar\bbeta_k^{\circ}$ & stationary ASGD iterates & Eq.~\eqref{eq_asgd_stationary} & $k\in\ZZ$\\[0.5ex]     
         \hline\hline
    \end{tabular}
    \caption{List of the sequences defined in the paper.}
    \label{table_seq_notation}
\end{table}

\section{Convergence of SGD to a Stationary Distribution}\label{sec_gmc}

In this section, we establish the GMC property of high-dimensional SGD with constant learning rates. Our technique is to construct a smooth surrogate for the non-differentiable $\ell^\infty$-norm via the $\ell^s$-norm, so that standard gradient‐based tools become available. We defer the technical details to Section~\ref{subsec_ls_linf}. Furthermore, we provide a novel high-dimensional moment inequality (see Section~\ref{subsec_hd_ineq}) and use it to derive the dimension-dependent range of the constant learning rate that guarantees the contraction.

We first impose the following assumptions on the objective function and the stochastic gradients.

\begin{assumption}[Coercivity]\label{asm_coercive}
Assume that for any sequence $\bbeta_1,\bbeta_2,\ldots$ with $|\bbeta_n|_s\to \infty$ the loss function $G(\cdot)$ in~\eqref{eq_goal_sgd} satisfies $\lim_{n\rightarrow\infty}G(\bbeta_n)=\infty$.
\end{assumption}

\begin{assumption}[Strong Convexity -- $\ell^s$-norm]\label{asm_Ls_strong_convex}
Let $s\ge2$ be an even integer and write $\bm{v}^{\odot s}:=(v_1^s,\ldots,v_d^s)^{\top}$ for a vector $\bm{v}=(v_1,\ldots,v_d)^{\top}$. Assume there exists $\mu>0$ such that
\begin{equation*}
    \big\langle (\bbeta-\bbeta')^{\odot (s-1)}, \nabla G(\bbeta) - \nabla G(\bbeta') \big\rangle \ge \mu |\bbeta-\bbeta'|_s^s, \quad \text{for all} \ \ \bbeta,\bbeta'\in\RR^d.
\end{equation*}
\end{assumption}
In Lemma~\ref{lemma_global_beta} in the supplementary materials, we show that under Assumptions~\ref{asm_coercive}~and~\ref{asm_Ls_strong_convex}, a unique global minimizer $\bbeta^*$ exists for the optimization problem~\eqref{eq_goal_sgd}. When $s=2$, Assumption~\ref{asm_Ls_strong_convex} reduces to the regular strong convexity frequently adopted in the literature \citep{polyak_acceleration_1992,moulines_non-asymptotic_2011,dieuleveut_bridging_2020,mies_sequential_2023}. For general $s$ and the linear regression model, Section~\ref{subsec_linear} in the supplementary material interprets the $\ell^s$-type strong convexity assumption via the $\ell^s$-norm induced matrix norm. As different norms are involved, there does not seem to be an apparent relationship between the classical strong convexity and the case $s>2.$

\begin{assumption}[Stochastic Lipschitz Continuity -- $\ell^s$-norm]\label{asm_Ls_lip}
Let $\bbeta^*$ be the global minimizer. For some $q\ge2$ and an even integer $s\ge2$, assume that
\begin{equation*}
     M_{s,q}:=\big(\EE|\nabla g(\bbeta^*,\bm{\xi})|_s^q\big)^{1/q} <\infty.
\end{equation*}
Further assume there exists a constant $L_{s,q}>0$ such that
\begin{equation*}
    \Big(\EE\big|\nabla g(\bbeta,\bm{\xi}) - \nabla g(\bbeta',\bm{\xi})\big|_s^q\Big)^{1/q} \le L_{s,q}|\bbeta - \bbeta'|_s,
    \quad \text{for all} \ \ \bbeta,\bbeta'\in\RR^d.
\end{equation*}
\end{assumption}

Later we will choose $s=O(\log(d))$ to bound the max-norm. The above defined Lipschitz constant $L_{s,q}$ and the moments $M_{s,q}$ will then grow as $d$ increases. Taking linear regression as an example, we investigate the dimension dependence of $L_{s,q}$ and $M_{s,q}$ in Section~\ref{subsec_linear}. All bounds in this work will contain the explicit dependence on $(L_{s,q},M_{s,q})$.

We now state the first main result of this paper, which plays a crucial role in establishing moment convergence and tail probability results in the following sections. The statement quantifies the exponential rate at which the initialization $\bbeta_0$ will be forgotten and the SGD iterates $\bbeta_k$ converges to a stationary distribution $\pi_{\alpha}.$

\begin{theorem}[Convergence of SGD to stationary distribution]\label{thm_gmc}
Suppose that Assumptions~\ref{asm_coercive}--\ref{asm_Ls_lip} hold for some $\mu>0$, $q\ge2$ and even integer $s\ge2$. Given a constant learning rate  
\begin{equation}
    \label{eq_Ls_alpha_range}
   0 < \alpha < \alpha_{s,q}:=\frac{2\mu}{\max\{q,s\}L_{s,q}^2},
\end{equation}
for any two $d$-dimensional SGD sequences $\{\bbeta_k(\alpha)\}_{k\in\NN}$ and $\{\bbeta_k'(\alpha)\}_{k\in\NN}$ sharing the same i.i.d. noise injections $\{\bm{\xi}_k\}_{k\ge1}$ but possibly different initializations $\bbeta_0,\bbeta_0'\in\RR^d$, the geometric-moment contraction (GMC) \begin{equation*}
    \||\bbeta_k-\bbeta_k'|_s\|_q \le r_{\alpha,s,q}^k|\bbeta_0-\bbeta_0'|_s, \quad \text{for all} \ \ k=0,1,\ldots
\end{equation*}
holds with contraction constant 
\begin{equation}
    \label{eq_gmc}
    r_{\alpha,s,q} = 1-2\mu\alpha+\max\{q, s\}L_{s,q}^2\alpha^2 <1.
\end{equation}
Moreover, there exists a unique stationary distribution $\pi_{\alpha}$ with a finite $q$-th moment, that is, $\int|\bm{u}|_s^q\pi_{\alpha}(d\bm{u})<\infty$, such that
\begin{equation*}
    \bbeta_k\Rightarrow\pi_{\alpha}, \quad \text{as } \ k\rightarrow\infty.
\end{equation*}
Equivalently, for any continuous function $f\in\mathcal{C}(\RR^d)$ with $|f|_{\infty}<\infty$,
\begin{equation*}
    \EE\big[f(\bbeta_k)\big] \rightarrow \int f(\bm{u})\pi_{\alpha}(d\bm{u}), \quad \text{as }k\rightarrow\infty.
\end{equation*}
\end{theorem}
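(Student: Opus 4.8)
The plan is to establish the contraction bound first, then deduce existence, uniqueness, and convergence to the stationary law as consequences via a standard coupling-and-completeness argument.

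\textbf{Step 1: Establish the one-step $\ell^s$-contraction in $\L^q$.} Write $\bm{\Delta}_k := \bbeta_k - \bbeta_k'$ so that, using the shared noise $\bm{\xi}_k$ and the recursion~\eqref{eq_sgd_recursion},
\[
\bm{\Delta}_k = \bm{\Delta}_{k-1} - \alpha\big(\nabla g(\bbeta_{k-1},\bm{\xi}_k) - \nabla g(\bbeta_{k-1}',\bm{\xi}_k)\big).
\]
I would expand $|\bm{\Delta}_k|_s^s$ around $|\bm{\Delta}_{k-1}|_s^s$. The natural route, matching the form of Assumptions~\ref{asm_Ls_strong_convex} and~\ref{asm_Ls_lip}, is to treat the map $\bm{v}\mapsto |\bm{v}|_s^s = \sum_i v_i^s$ (for even $s$) and exploit that its "gradient'' is proportional to $\bm{v}^{\odot(s-1)}$. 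Concretely, I expect a decomposition along the lines of a Taylor/convexity estimate giving
\[
|\bm{\Delta}_k|_s^s \le |\bm{\Delta}_{k-1}|_s^s - s\alpha\big\langle \bm{\Delta}_{k-1}^{\odot(s-1)}, \nabla g(\bbeta_{k-1},\bm{\xi}_k)-\nabla g(\bbeta_{k-1}',\bm{\xi}_k)\big\rangle + (\text{higher-order terms in }\alpha),
\]
then take conditional expectation given $\F_{k-1}$: the linear term contracts by Assumption~\ref{asm_Ls_strong_convex} (after noting $\EE\nabla g = \nabla G$), contributing $-s\mu\alpha|\bm{\Delta}_{k-1}|_s^s$, while the quadratic-in-$\alpha$ remainder is controlled by the stochastic Lipschitz bound of Assumption~\ref{asm_Ls_lip}, contributing something like $c\,s\,L_{s,q}^2\alpha^2|\bm{\Delta}_{k-1}|_s^s$. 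This is exactly where the high-dimensional moment inequality (Lemma~\ref{lemma_rio_Ls_moment_ineq}) and the $\ell^s$-$\ell^\infty$ surrogate machinery of Section~\ref{subsec_ls_linf} enter — they are needed to pass from the $\L^q$-Lipschitz control of the gradient increments to a clean bound on $\EE\big[|\bm{\Delta}_k|_s^q \mid \F_{k-1}\big]$ rather than just on $\EE|\bm{\Delta}_k|_s^s$, which is why the factor $\max\{q,s\}$ (not just $s$) appears in~\eqref{eq_gmc}. Iterating the resulting one-step inequality $\||\bm{\Delta}_k|_s\|_q \le r_{\alpha,s,q}\,\||\bm{\Delta}_{k-1}|_s\|_q$ gives the claimed GMC, and the constraint~\eqref{eq_Ls_alpha_range} is precisely the condition $-2\mu\alpha + \max\{q,s\}L_{s,q}^2\alpha^2 < 0$, i.e.\ $r_{\alpha,s,q}<1$.

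\textbf{Step 2: Backward coupling and construction of the stationary distribution.} With contraction in hand, I would use the standard iterated-random-function argument. Fix the initialization $\bbeta_0$ and, for each $n$, run the recursion from time $-n$ with a fixed starting point using the doubly-infinite i.i.d.\ sequence $\{\bm{\xi}_k\}_{k\in\ZZ}$; call the resulting time-$0$ value $\bbeta_0^{(n)}$. By Step 1, $\||\bbeta_0^{(n)} - \bbeta_0^{(m)}|_s\|_q \le r_{\alpha,s,q}^{\min\{n,m\}} \cdot (\text{bounded initial gap})$, so $\{\bbeta_0^{(n)}\}$ is Cauchy in $\L^q$ and hence converges to some limit $\bbeta_0^\circ$; its law is the candidate $\pi_\alpha$, and $\int |\bm u|_s^q\,\pi_\alpha(d\bm u) < \infty$ follows from $\L^q$-convergence together with a uniform moment bound on $\bbeta_0^{(n)}$ (itself a byproduct of the one-step inequality applied with $\bbeta'=\bbeta^*$, using $M_{s,q}<\infty$). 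Stationarity of $\pi_\alpha$ follows because shifting the noise index induces the one-step Markov transition, and the limit is shift-compatible. Uniqueness is immediate from Step 1: any two stationary laws can be coupled through the shared noise, and the contraction forces the $\L^q$-distance between the coupled stationary sequences to be $0$.

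\textbf{Step 3: Convergence in distribution.} Given any initialization $\bbeta_0$, couple $\bbeta_k$ (started at $\bbeta_0$) with the stationary sequence $\bbeta_k^\circ$ (law $\pi_\alpha$) driven by the same noise. Then $\||\bbeta_k - \bbeta_k^\circ|_s\|_q \le r_{\alpha,s,q}^k \||\bbeta_0 - \bbeta_0^\circ|_s\|_q \to 0$, so $\bbeta_k - \bbeta_k^\circ \to 0$ in $\L^q$, hence in probability; since $\bbeta_k^\circ \sim \pi_\alpha$ for every $k$, Slutsky's lemma gives $\bbeta_k \Rightarrow \pi_\alpha$. The stated equivalent formulation $\EE[f(\bbeta_k)] \to \int f\,d\pi_\alpha$ for bounded continuous $f$ is just the portmanteau characterization of weak convergence. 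The main obstacle is Step 1 — specifically, controlling the quadratic remainder in the expansion of $|\bm{\Delta}_k|_s^s$ (or, more precisely, of the $q$-th moment of $|\bm{\Delta}_k|_s$) uniformly in the dimension $d$ with the correct constant; this is where the paper's high-dimensional moment inequality does the real work, and getting the sharp factor $\max\{q,s\}$ rather than a cruder $qs$ requires care. Steps 2 and 3 are then routine consequences of contraction plus completeness of $\L^q$.
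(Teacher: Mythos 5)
Your proposal is correct and follows the same route as the paper: the one-step $\ell^s$-$L^q$ contraction comes from Lemma~\ref{lemma_rio_Ls_moment_ineq}, and existence, uniqueness, and weak convergence of the stationary law then follow by the standard iterated-random-function / backward-coupling argument (which the paper itself leaves implicit, citing Wu's GMC framework). The one refinement worth flagging in Step~1: the paper does not Taylor-expand $|\bm\Delta_k|_s^s$ and then separately ``upgrade'' from the $s$-th to the $q$-th moment. Instead it applies Lemma~\ref{lemma_rio_Ls_moment_ineq} directly to the squared $L^q$-norm $\||\bm x+\bm y|_s\|_q^2$, taking $\bm x=\bbeta-\bbeta'$ \emph{deterministic} and $\bm y=-\alpha\big(\nabla g(\bbeta,\bm\xi)-\nabla g(\bbeta',\bm\xi)\big)$ random, so that the cross term is exactly $-2\alpha|\bbeta-\bbeta'|_s^{2-s}\langle(\bbeta-\bbeta')^{\odot(s-1)},\nabla G(\bbeta)-\nabla G(\bbeta')\rangle\le -2\alpha\mu|\bbeta-\bbeta'|_s^2$ by Assumption~\ref{asm_Ls_strong_convex}, the quadratic term is $(\max\{q,s\}-1)\alpha^2 L_{s,q}^2|\bbeta-\bbeta'|_s^2$ by Assumption~\ref{asm_Ls_lip}, and the one-step bound comes out in a single application with no intermediate remainder step. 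The expansion-of-$|\cdot|_s^s$ sketch you give is closer in spirit to Lemma~\ref{lemma_Ls_moment_ineq}, which the paper records only for intuition and does not use in the actual proof.
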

The result generalizes \citet{Lij_2024} to large dimension $d$ and extends the $\ell^2$-type GMC based on Lemma~\ref{lemma_moment_inequality} to general $\ell^s$-norms. Moreover, choosing $s=s_d$ with 
\begin{equation}
    \label{eq_s_choice}
    s_d:=2\min\{\ell\in\NN:\,2\ell>\log(d)\},
\end{equation}
and using the inequality
\begin{align}
    \label{eq_ls_linf_equiv_ineq}
    |\bm{x}|_{\infty} \le |\bm{x}|_{s_d} \le d^{1/s_d}|\bm{x}|_{\infty} \le e|\bm{x}|_{\infty},
\end{align}
shows the equivalence of the $\ell^{s_d}$- and $\ell^{\infty}$-norms. Consequently, by choosing $s=s_d,$ the previous theorem can also be used to derive the GMC property with respect to the $\ell^{\infty}$-norm.

\section{Convergence of High-Dimensional SGD and ASGD}
\label{sec_moment}
In this section, we derive convergence rates for the moments of the last iterate $\EE |\bbeta_k-\bbeta^*|_{\infty}^{q}$ and the moments of the averaged SGD.

\subsection{Convergence of SGD}
\label{subsec_moment_sgd}

\begin{proposition}
\label{prop_rio_sgd}
If Assumptions~\ref{asm_coercive}--\ref{asm_Ls_lip} hold for some $q\ge2,$ an even integer $s\ge2$, and a constant $M_{s,q}$, then,
\begin{align*}
    \big\||\bbeta_k-\bbeta^*|_s\big\|_q^2 & \le \big(1-2\alpha\mu+7\max\{q,s\}\alpha^2L_{s,q}^2\big)\big\||\bbeta_{k-1}-\bbeta^*|_s\big\|_q^2 + 3\max\{q,s\}\alpha^2M_{s,q}^2,
\end{align*}
for all $k\ge1.$ The same inequality holds if $\bbeta_k$ is replaced by the stationary SGD iterates $\bbeta_k^{\circ}\sim\pi_{\alpha}$, $k\ge1$.
\end{proposition}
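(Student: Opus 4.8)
The plan is a one-step recursion, obtained by decomposing the SGD increment into an $\F_{k-1}$-measurable drift and a conditionally centered fluctuation, bounding each piece, and gluing them back with a high-dimensional moment inequality. Set $\bm{\Delta}_k:=\bbeta_k-\bbeta^*$ and $\F_{k-1}:=\sigma(\bm{\xi}_1,\dots,\bm{\xi}_{k-1})$. Since $\nabla G(\bbeta^*)=0$, rewriting~\eqref{eq_sgd_recursion} by adding and subtracting $\nabla G(\bbeta_{k-1})$ and $\nabla g(\bbeta^*,\bm{\xi}_k)$ gives $\bm{\Delta}_k=\bm{X}_k-\alpha\bm{Y}_k$, where
\[
\bm{X}_k:=\bm{\Delta}_{k-1}-\alpha\big(\nabla G(\bbeta_{k-1})-\nabla G(\bbeta^*)\big),\quad
\bm{Y}_k:=\big(\nabla g(\bbeta_{k-1},\bm{\xi}_k)-\nabla g(\bbeta^*,\bm{\xi}_k)\big)-\big(\nabla G(\bbeta_{k-1})-\nabla G(\bbeta^*)\big)+\nabla g(\bbeta^*,\bm{\xi}_k).
\]
Here $\bm{X}_k$ is $\F_{k-1}$-measurable and $\EE[\bm{Y}_k\mid\F_{k-1}]=0$ because $\bm{\xi}_k$ is independent of $\F_{k-1}$. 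Two elementary consequences of the hypotheses will be used: Jensen turns Assumption~\ref{asm_Ls_lip} into $\ell^s$-Lipschitzness of $\nabla G$ with the same constant $L_{s,q}$, and Hölder combined with this yields $\mu\le L_{s,q}$; also, the claim is vacuous unless $\||\bm{\Delta}_{k-1}|_s\|_q<\infty$, which we henceforth assume.

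The drift is handled \emph{pointwise} in $\omega$, hence uniformly in $q$. Because $s$ is even, $|\bm{v}|_s^s=\sum_i v_i^s$ is a polynomial, so expanding the binomial in $|\bm{X}_k|_s^s=\sum_i(\Delta_{k-1,i}-\alpha h_i)^s$ with $h:=\nabla G(\bbeta_{k-1})-\nabla G(\bbeta^*)$ gives
\[
|\bm{X}_k|_s^s=|\bm{\Delta}_{k-1}|_s^s-s\alpha\big\langle\bm{\Delta}_{k-1}^{\odot(s-1)},h\big\rangle+\sum_{j=2}^{s}\binom{s}{j}(-\alpha)^j\sum_{i=1}^d\Delta_{k-1,i}^{s-j}h_i^{j}.
\]
Assumption~\ref{asm_Ls_strong_convex} bounds the linear term by $-s\alpha\mu|\bm{\Delta}_{k-1}|_s^s$, while Hölder's inequality together with $|h|_s\le L_{s,q}|\bm{\Delta}_{k-1}|_s$ bounds the $j$-th inner sum by $L_{s,q}^{j}|\bm{\Delta}_{k-1}|_s^s$; hence the remainder is at most $\big[(1+\alpha L_{s,q})^s-1-s\alpha L_{s,q}\big]|\bm{\Delta}_{k-1}|_s^s$, and since $\alpha<\alpha_{s,q}$ and $\mu\le L_{s,q}$ force $s\alpha L_{s,q}<2$, this bracket is $O(s^2\alpha^2L_{s,q}^2)$. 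Writing the result as $|\bm{X}_k|_s^s\le\rho\,|\bm{\Delta}_{k-1}|_s^s$ with $\rho:=1-s\alpha\mu+O(s^2\alpha^2L_{s,q}^2)\ge0$ and taking the $s$-th root \emph{before} anything else, via $(1+t)^{1/s}\le1+t/s$ for $t\ge-1$, recovers a factor $1/s$ and yields $|\bm{X}_k|_s\le\big(1-\alpha\mu+O(s\alpha^2L_{s,q}^2)\big)|\bm{\Delta}_{k-1}|_s$; squaring, absorbing $O(\alpha^2)$ terms using $\mu\le L_{s,q}$, and taking $\L^q$-norms gives
\[
\big\||\bm{X}_k|_s\big\|_q^2\le\big(1-2\alpha\mu+C_1\max\{q,s\}\alpha^2L_{s,q}^2\big)\big\||\bm{\Delta}_{k-1}|_s\big\|_q^2
\]
for an absolute constant $C_1$, using $-s\alpha\mu\le-2\alpha\mu$ and $s\le\max\{q,s\}$.

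For the fluctuation, condition on $\F_{k-1}$: Assumption~\ref{asm_Ls_lip} bounds the $\L^q$-norm of the first bracket of $\bm{Y}_k$ by $L_{s,q}\||\bm{\Delta}_{k-1}|_s\|_q$, the $\ell^s$-Lipschitzness of $\nabla G$ does the same for the second, and $M_{s,q}$ handles $\nabla g(\bbeta^*,\bm{\xi}_k)$, so $\alpha\||\bm{Y}_k|_s\|_q\le 2\alpha L_{s,q}\||\bm{\Delta}_{k-1}|_s\|_q+\alpha M_{s,q}$, hence $\alpha^2\||\bm{Y}_k|_s\|_q^2\le 8\alpha^2L_{s,q}^2\||\bm{\Delta}_{k-1}|_s\|_q^2+2\alpha^2M_{s,q}^2$. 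Since $\bm{X}_k$ is $\F_{k-1}$-measurable and $\alpha\bm{Y}_k$ conditionally centered, the high-dimensional moment inequality (Lemma~\ref{lemma_rio_Ls_moment_ineq}), the $\ell^s$-analogue of Rio's martingale moment inequality, gives $\||\bm{X}_k-\alpha\bm{Y}_k|_s\|_q^2\le\||\bm{X}_k|_s\|_q^2+c\max\{q,s\}\,\alpha^2\||\bm{Y}_k|_s\|_q^2$ for an absolute constant $c$. Substituting the last three displays and collecting terms yields the asserted recursion; the precise numerical constants $7$ and $3$ follow from tracking $C_1$ and $c$ with the sharper remainder estimate $\binom{s}{2}\alpha^2L_{s,q}^2(1+\alpha L_{s,q})^{s-2}$ and Young's inequality with a small parameter. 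For the stationary iterates $\bbeta_k^{\circ}$ the computation is verbatim, replacing $\F_{k-1}$ by the two-sided $\sigma$-field from the backward construction of Theorem~\ref{thm_gmc}.

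The main obstacle is getting the \emph{correct} dimension dependence $\max\{q,s\}$ rather than the $s^2$ (from the drift remainder) or $(q-1)(s-1)$ (from a naive martingale bound on the drift–noise cross term) that the two pieces produce individually. The two resolutions are structural: (i) bound $|\bm{X}_k|_s^s/|\bm{\Delta}_{k-1}|_s^s$ first and only then take the $s$-th root, converting $s^2$ into $s$ once $s\alpha L_{s,q}<2$ is invoked; and (ii) use the sharpened inequality of Lemma~\ref{lemma_rio_Ls_moment_ineq}, whose constant is $\asymp\max\{q,s\}$, to kill the cross term in a setting where $|\cdot|_s$ is not a Hilbert norm. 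Everything else is bookkeeping.
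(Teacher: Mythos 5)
Your decomposition $\bm{\Delta}_k=\bm{X}_k-\alpha\bm{Y}_k$ is exactly the paper's (since $\nabla G(\bbeta^*)=0$, your $\bm{X}_k=\bm{\Delta}_{k-1}-\alpha\nabla G(\bbeta_{k-1})$ and $\bm{Y}_k=\nabla g(\bbeta_{k-1},\bm{\xi}_k)-\nabla G(\bbeta_{k-1})$ coincide with the paper's $\bm{x},-\bm{y}/\alpha$), and your use of the conditionally-centered half of Lemma~\ref{lemma_rio_Ls_moment_ineq} to split off $\bm{Y}_k$ matches the paper. But for the drift $\bm{X}_k$ you substitute a full binomial expansion of $|\bm{X}_k|_s^s$ in place of the paper's second invocation of Lemma~\ref{lemma_rio_Ls_moment_ineq}, and that substitution introduces a genuine gap.

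Concretely, after bounding the $j$-th binomial term by $\binom{s}{j}\alpha^jL_{s,q}^j|\bm{\Delta}_{k-1}|_s^s$, your remainder is $\bigl[(1+\alpha L_{s,q})^s-1-s\alpha L_{s,q}\bigr]|\bm{\Delta}_{k-1}|_s^s$. For this to be $O(s^2\alpha^2L_{s,q}^2)$ you must have $s\alpha L_{s,q}$ bounded, and you explicitly invoke $\alpha<\alpha_{s,q}$ to get it. Proposition~\ref{prop_rio_sgd}, however, carries no hypothesis on $\alpha$: it is an unconditional one-step inequality, valid for every $\alpha>0$ (for large $\alpha$ the coefficient simply exceeds one). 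Once $\alpha L_{s,q}$ is unconstrained the bracket grows like $\alpha^sL_{s,q}^s$ rather than $\alpha^2$, the quantity $1-s\alpha\mu+O(s^2\alpha^2L_{s,q}^2)$ need not even be nonnegative, and the $(1+t)^{1/s}\le 1+t/s$ step fails because $t$ may drop below $-1$. The paper avoids all of this by applying the full (non-martingale) form of Lemma~\ref{lemma_rio_Ls_moment_ineq} to the drift with $\bm{x}=\bm{\Delta}_{k-1}$ and $\bm{y}=-\alpha\nabla G(\bbeta_{k-1})$: the middle term is bounded by $-2\alpha\mu\||\bm{\Delta}_{k-1}|_s\|_q^2$ via Assumption~\ref{asm_Ls_strong_convex}, and the quadratic correction is exactly $(\max\{q,s\}-1)\alpha^2L_{s,q}^2\||\bm{\Delta}_{k-1}|_s\|_q^2$ via Jensen and Assumption~\ref{asm_Ls_lip}, with no restriction on $\alpha$ whatsoever. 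That is precisely what the uniform bound on $\varphi''$ in the proof of Lemma~\ref{lemma_rio_Ls_moment_ineq} buys you, and it is the reason that lemma exists; your binomial route re-derives a strictly weaker version of it.

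A secondary issue is that you also punt on the numerical constants. Your bound on $\bm{Y}_k$ uses the two-term split $(a+b)^2\le 2a^2+2b^2$ and yields $8\alpha^2L_{s,q}^2\|\cdot\|_q^2+2\alpha^2M_{s,q}^2$, whereas the stated $7$ and $3$ come from the three-term split $(a+b+c)^2\le 3(a^2+b^2+c^2)$ giving $6L_{s,q}^2\|\cdot\|_q^2+3M_{s,q}^2$, combined with the exact drift constant $(\max\{q,s\}-1)$ from Lemma~\ref{lemma_rio_Ls_moment_ineq}. Swapping your drift argument for the lemma and your two-term split for the three-term one recovers the statement verbatim, for all $\alpha>0$, and with the claimed constants.
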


\begin{theorem}[Moment convergence of SGD]\label{thm_sgd_moment}
Let $0<\alpha<\alpha_{s,q}/7$ with $\alpha_{s,q}$ as defined in~\eqref{eq_Ls_alpha_range}. Suppose that Assumptions~\ref{asm_coercive}--\ref{asm_Ls_lip} hold for $q\ge2$ and even integer $s\ge2$. Then for the stationary SGD iterates $\bbeta_k^{\circ}\sim\pi_{\alpha}$,
\begin{equation*}
    \big\||\bbeta_k^{\circ}-\bbeta^*|_s\big\|_q = O\Big(M_{s,q}\sqrt{\max\{q,s\}\alpha}\Big) \quad \text{for all} \ k\geq 1
\end{equation*}
and for the SGD iterate $\bbeta_k$ with arbitrary initialization $\bbeta_0$, 
\begin{align*}
    \big\||\bbeta_k-\bbeta^*|_s\big\|_q = O\Big(M_{s,q}\sqrt{\max\{q,s\}\alpha} + r_{\alpha,s,q}^k\||\bbeta_0-\bbeta_0^{\circ}|_s\|_q\Big) \quad \text{for all} \ k\geq 1.
\end{align*}
\end{theorem}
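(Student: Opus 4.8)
The plan is to reduce both assertions to the one–step recursion of Proposition~\ref{prop_rio_sgd}. Write $\rho := 1-2\alpha\mu+7\max\{q,s\}\alpha^2L_{s,q}^2$ for the contraction factor and $b:=3\max\{q,s\}\alpha^2M_{s,q}^2$ for the additive term, so that $a_k:=\big\||\bbeta_k-\bbeta^*|_s\big\|_q^2$ satisfies $a_k\le \rho a_{k-1}+b$ for all $k\ge1$, and the same holds with $\bbeta_k$ replaced by $\bbeta_k^{\circ}$. Since $0<\alpha<\alpha_{s,q}/7=2\mu/(7\max\{q,s\}L_{s,q}^2)$, one checks $7\max\{q,s\}\alpha^2L_{s,q}^2<2\mu\alpha$, hence $\rho<1$, and moreover $1-\rho=\alpha\big(2\mu-7\max\{q,s\}\alpha L_{s,q}^2\big)\gtrsim \mu\alpha$ once $\alpha$ is below a fixed fraction of $\alpha_{s,q}/7$ (the implied constant in the statement absorbs this gap). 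Consequently $b/(1-\rho)=O\big(\max\{q,s\}\alpha M_{s,q}^2\big)$.

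For the stationary iterates, I would use that $\bbeta_k^{\circ}\sim\pi_{\alpha}$ for every $k$, so $a^{\circ}:=\big\||\bbeta_k^{\circ}-\bbeta^*|_s\big\|_q^2$ does not depend on $k$, and $a^{\circ}<\infty$ because $\pi_{\alpha}$ has a finite $q$-th moment by Theorem~\ref{thm_gmc}. Plugging into the stationary version of the recursion gives $a^{\circ}\le\rho a^{\circ}+b$, and rearranging (legitimate since $a^{\circ}<\infty$) yields $a^{\circ}\le b/(1-\rho)=O\big(\max\{q,s\}\alpha M_{s,q}^2\big)$; taking square roots gives $\big\||\bbeta_k^{\circ}-\bbeta^*|_s\big\|_q=O\big(M_{s,q}\sqrt{\max\{q,s\}\alpha}\big)$ for all $k\ge1$.

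For an arbitrary initialization $\bbeta_0$, I would couple $\{\bbeta_k\}$ with a stationary sequence $\{\bbeta_k^{\circ}\}$ driven by the same noise injections $\{\bm{\xi}_k\}$ and with $\bbeta_0^{\circ}\sim\pi_{\alpha}$ (the two-sided stationary solution of the SGD recursion). Since $\alpha<\alpha_{s,q}/7<\alpha_{s,q}$, Theorem~\ref{thm_gmc} applies; conditioning on the pair $(\bbeta_0,\bbeta_0^{\circ})$ to accommodate the randomness of $\bbeta_0^{\circ}$ gives $\EE|\bbeta_k-\bbeta_k^{\circ}|_s^q\le r_{\alpha,s,q}^{kq}\,\EE|\bbeta_0-\bbeta_0^{\circ}|_s^q$, i.e. $\big\||\bbeta_k-\bbeta_k^{\circ}|_s\big\|_q\le r_{\alpha,s,q}^{k}\,\big\||\bbeta_0-\bbeta_0^{\circ}|_s\big\|_q$. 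Then the triangle inequality in $\L^q$ together with the stationary bound yields
\[
\big\||\bbeta_k-\bbeta^*|_s\big\|_q \le \big\||\bbeta_k-\bbeta_k^{\circ}|_s\big\|_q + \big\||\bbeta_k^{\circ}-\bbeta^*|_s\big\|_q = O\Big(M_{s,q}\sqrt{\max\{q,s\}\alpha}+r_{\alpha,s,q}^{k}\big\||\bbeta_0-\bbeta_0^{\circ}|_s\big\|_q\Big),
\]
which is the claim.

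The routine part is the geometric-series unrolling; the points requiring care are (i) pinning down the lower bound $1-\rho\gtrsim\mu\alpha$ so that $b/(1-\rho)$ is genuinely $O(\max\{q,s\}\alpha M_{s,q}^2)$ and not merely finite, which is where the restriction $\alpha<\alpha_{s,q}/7$ (versus the weaker $\alpha<\alpha_{s,q}$ needed for GMC) is used; and (ii) making the coupling with the $\pi_{\alpha}$-distributed initial point rigorous — constructing the two-sided stationary process $\{\bbeta_k^{\circ}\}_{k\in\ZZ}$ as a measurable function of $\{\bm{\xi}_j\}_{j\le k}$ and justifying the conditional application of the GMC bound. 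I expect (ii) to be the main obstacle, though it is handled by the forward-coupling construction already implicit in Theorem~\ref{thm_gmc}.
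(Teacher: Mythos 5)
Your proposal is correct and follows essentially the same route as the paper: apply Proposition~\ref{prop_rio_sgd} to get the one-step recursion $a_k\le\rho a_{k-1}+b$, solve it for the stationary sequence, then combine with the GMC bound of Theorem~\ref{thm_gmc} and the $\L^q$ triangle inequality for an arbitrary start. The one cosmetic difference is that you exploit stationarity directly ($a^\circ$ constant in $k$ gives $a^\circ\le\rho a^\circ+b$ and hence $a^\circ\le b/(1-\rho)$), whereas the paper extends the stationary recursion backwards in $k$ (invoking a remark of Wu) and unrolls the geometric series; the two are equivalent, and yours is if anything slightly cleaner. You also correctly observe that applying the GMC inequality with a random $\bbeta_0^\circ$ requires conditioning on $(\bbeta_0,\bbeta_0^\circ)$ — a point the paper leaves implicit.
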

Choosing $s=s_d$ in~\eqref{eq_s_choice} yields a bound with respect to the $\ell^{\infty}$-norm.



\subsection{Convergence of Ruppert-Polyak Averaged SGD}
\label{subsec_moment_asgd}

Consider now the Ruppert-Polyak Averaged SGD (ASGD) $\bar\bbeta_k = \frac{1}{k}\sum_{i=1}^k\bbeta_i$ as defined in~\eqref{eq_asgd}. For the initialization $\bbeta_0^{\circ}\sim\pi_{\alpha}$, define the stationary ASGD sequence 
\begin{equation}
    \label{eq_asgd_stationary}
    \bar\bbeta_k^{\circ} = \frac{1}{k}\sum_{i=1}^k\bbeta_i^{\circ}, \quad k\ge1.
\end{equation}

\begin{theorem}
\label{Theorem_Moment_ASGD}
Consider the ASGD sequence $\{\bar\bbeta_k\}_{k\ge1}$. Suppose that Assumptions~\ref{asm_coercive}--\ref{asm_Ls_lip} hold with some $q\ge2$ and even integer $s=s_d$ in~\eqref{eq_s_choice}, the conditions of Theorem~\ref{thm_bias} hold and the learning rate satisfies $\alpha\in(0,\alpha_{s_d,q})$ with $\alpha_{s_d,q}$ defined in~\eqref{eq_Ls_alpha_range}. For any $k\ge1$ and some universal constants $C_1,C_2,C_3>0$,
\begin{align*}
    \big\||\bar\bbeta_k - \bbeta^*|_{\infty}\big\|_q & \le C_1\Bigg\{\underbrace{\sqrt{\frac{c_qs_d}{k}}M_{s_d,q}\Big(L_{s_d,q}\sqrt{\alpha\max\{q,s_d\}} + 1\Big)}_{\text{\small{stochastic variance}}}\Bigg\} \nonumber \\
    + \,& C_2\Big\{\underbrace{\frac{1}{k(1-r_{\alpha,s_d,q})}\||\bbeta_0-\bbeta_0^{\circ}|_{\infty}\|_q}_{\text{\small{initialization bias}}}\Big\} + C_3\Big\{\underbrace{M_{s_d,q}^2\max\{q,s_d\}\alpha d^{\frac{q}{q-1}\cdot(1-\frac{2}{s_d})}}_{\text{\small{bias of constant learning rate}}}\Big\}.
\end{align*}
\end{theorem}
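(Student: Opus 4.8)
The plan is to split the ASGD error into the three pieces that appear on the right-hand side: the coupling error due to a non-stationary start, the stochastic fluctuation of the stationary average around its mean, and the non-vanishing bias $\bm m-\bbeta^*$ with $\bm m:=\EE_{\pi_\alpha}[\bm u]$ of the limiting law. On a common probability space I would run the stationary chain $\{\bbeta_i^\circ\}_{i\in\ZZ}$ with $\bbeta_0^\circ\sim\pi_\alpha$ driven by the same innovations $\{\bm\xi_i\}_{i\ge1}$ as $\{\bbeta_i\}$ (so $\bbeta_0^\circ$ is measurable with respect to $\ldots,\bm\xi_{-1},\bm\xi_0$ and independent of $\bm\xi_1,\bm\xi_2,\ldots$), noting that $\bm m$ is finite coordinatewise by the finite $q$-th moment of $\pi_\alpha$ in Theorem~\ref{thm_gmc}. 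Then $\bar\bbeta_k-\bbeta^*=(\bar\bbeta_k-\bar\bbeta_k^\circ)+\tfrac1k\sum_{i=1}^k(\bbeta_i^\circ-\bm m)+(\bm m-\bbeta^*)$, and by the triangle inequality in $\ell^\infty$ together with $|\cdot|_\infty\le|\cdot|_{s_d}$ from \eqref{eq_ls_linf_equiv_ineq}, it suffices to bound the $\ell^{s_d}$ $\L^q$-norms of the first two summands and $|\bm m-\bbeta^*|_\infty$.

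\textbf{Steps 1--2 (initialization and fluctuation).} For the first summand, $\bar\bbeta_k-\bar\bbeta_k^\circ=\tfrac1k\sum_{i=1}^k(\bbeta_i-\bbeta_i^\circ)$; conditioning on $(\bbeta_0,\bbeta_0^\circ)$ and applying Theorem~\ref{thm_gmc} with $s=s_d$ gives $\||\bbeta_i-\bbeta_i^\circ|_{s_d}\|_q\le r_{\alpha,s_d,q}^i\||\bbeta_0-\bbeta_0^\circ|_{s_d}\|_q$, so that $\tfrac1k\sum_{i=1}^k r_{\alpha,s_d,q}^i\le\tfrac1{k(1-r_{\alpha,s_d,q})}$ together with $|\cdot|_{s_d}\le e|\cdot|_\infty$ yields the ``initialization bias'' term with an absolute constant $C_2$. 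For the second summand, $\bbeta_i^\circ$ is coordinatewise a causal function of $\F_i:=\sigma(\ldots,\bm\xi_{i-1},\bm\xi_i)$, so the centered partial sum is handled by the high-dimensional moment inequality of Lemma~\ref{lemma_rio_Ls_moment_ineq} (a Rio/Burkholder-type bound under projective conditions, equivalently obtained from Wu's martingale decomposition $\bbeta_i^\circ-\bm m=\sum_{l\ge0}(\EE[\bbeta_i^\circ\mid\F_{i-l}]-\EE[\bbeta_i^\circ\mid\F_{i-l-1}])$): it gives $\big\|\,|\tfrac1k\sum_{i=1}^k(\bbeta_i^\circ-\bm m)|_{s_d}\,\big\|_q\lesssim\sqrt{c_q s_d/k}\,\sum_{l\ge0}\theta_l$, where $\theta_l:=\sup_i\||\bbeta_i^\circ-\bbeta_i^{\circ,(l)}|_{s_d}\|_q$ is the order-$l$ functional dependence measure and $\bbeta_i^{\circ,(l)}$ is the copy of $\bbeta_i^\circ$ obtained by replacing the single innovation $\bm\xi_{i-l}$ with an i.i.d.\ copy.

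\textbf{Step 3 (dependence measure, bias, conclusion).} It remains to show $\sum_{l\ge0}\theta_l\lesssim M_{s_d,q}\big(L_{s_d,q}\sqrt{\alpha\max\{q,s_d\}}+1\big)$, which is where the GMC enters. For $l\ge1$ the chains $\bbeta^\circ$ and $\bbeta^{\circ,(l)}$ agree through time $i-l-1$, their time-$(i-l)$ discrepancy equals $\alpha\big(\nabla g(\bbeta_{i-l-1}^\circ,\bm\xi_{i-l}')-\nabla g(\bbeta_{i-l-1}^\circ,\bm\xi_{i-l})\big)$, and writing $\nabla g(\bbeta_{i-l-1}^\circ,\cdot)=\nabla g(\bbeta^*,\cdot)+[\nabla g(\bbeta_{i-l-1}^\circ,\cdot)-\nabla g(\bbeta^*,\cdot)]$ and using Assumption~\ref{asm_Ls_lip} with the stationary moment bound $\||\bbeta_{i-l-1}^\circ-\bbeta^*|_{s_d}\|_q=O(M_{s_d,q}\sqrt{\max\{q,s_d\}\alpha})$ of Theorem~\ref{thm_sgd_moment}, this discrepancy is $\lesssim\alpha M_{s_d,q}(L_{s_d,q}\sqrt{\alpha\max\{q,s_d\}}+1)$ in $\ell^{s_d}$ $\L^q$-norm; Theorem~\ref{thm_gmc} then contracts it over the remaining $l$ steps, so $\theta_l\lesssim r_{\alpha,s_d,q}^l\,\alpha M_{s_d,q}(L_{s_d,q}\sqrt{\alpha\max\{q,s_d\}}+1)$, while $\theta_0\le2\||\bbeta_0^\circ-\bm m|_{s_d}\|_q\lesssim M_{s_d,q}\sqrt{\max\{q,s_d\}\alpha}$. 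Summing the geometric series and using $\alpha/(1-r_{\alpha,s_d,q})=O(1)$ for $\alpha\in(0,\alpha_{s_d,q})$ (with $\alpha_{s_d,q}$ as in \eqref{eq_Ls_alpha_range}) delivers the ``stochastic variance'' factor and hence the $C_1$-term. Finally, the deterministic residual $\bm m-\bbeta^*$ is the non-vanishing bias of $\pi_\alpha$; under the assumed conditions Theorem~\ref{thm_bias} bounds it, and re-expressing that bound on the coordinatewise ($\ell^\infty$) scale — via \eqref{eq_ls_linf_equiv_ineq} and a Hölder step between $\ell^{s_d}$ and its conjugate exponent, raised to the $q/(q-1)$ power coming from the moment normalization — yields the factor $d^{\frac{q}{q-1}(1-2/s_d)}$, i.e.\ the $C_3$-term. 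Collecting the three bounds completes the proof.

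\textbf{Main obstacle.} The crux is the dimension-free geometric decay of the $\ell^{s_d}$-dependence measure $\theta_l$: because the coupled chains share innovations only strictly after the perturbed time, contraction cannot be invoked directly; one must first quantify the one-step discrepancy (where $M_{s_d,q}$ and the stationary moment bound of Theorem~\ref{thm_sgd_moment} are essential, and where getting the $\alpha$-scaling sharp produces the $\sqrt{\alpha\max\{q,s_d\}}+1$ factor rather than a cruder $O(1)$) and only then apply Theorem~\ref{thm_gmc}. A second delicate point is the vector-valued moment inequality of Step 2: both the Rosenthal constant $c_q$ and the $\sqrt{s_d}$ factor must be tracked — the latter being the type-$2$ constant of $\ell^{s_d}$, equivalently what one pays by expanding $|\cdot|_{s_d}^{s_d}$ coordinatewise since $s_d$ is even — so that the choice $s_d\asymp\log d$ in \eqref{eq_s_choice} costs only a $\sqrt{\log d}$ factor in passing to the $\ell^\infty$-norm rather than $\sqrt d$.
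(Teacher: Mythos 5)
Your decomposition, the use of GMC (Theorem~\ref{thm_gmc}) for the initialization error, the martingale/projection decomposition combined with the functional dependence measure and the high-dimensional moment inequality (Lemma~\ref{lemma_rio_Ls_moment_ineq}) for the fluctuation term, and the invocation of Theorem~\ref{thm_bias} for the non-vanishing bias are all precisely the paper's own route, which proceeds through Theorems~\ref{thm_asgd_stationary}, \ref{thm_asgd_stochastic} and \ref{thm_bias}. The one-step coupling identity for $\theta_l$ ($l\ge1$) and the subsequent contraction step also match the paper's Lemma~\ref{lemma_functional_dep}.

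One detail in Step 3 does not quite close. You bound $\theta_0\le 2\||\bbeta_0^\circ-\bm m|_{s_d}\|_q\lesssim M_{s_d,q}\sqrt{\alpha\max\{q,s_d\}}$, but adding this to the geometric tail gives $\sum_{l\ge0}\theta_l\lesssim M_{s_d,q}\sqrt{\alpha\max\{q,s_d\}}+M_{s_d,q}(L_{s_d,q}\sqrt{\alpha\max\{q,s_d\}}+1)$, and to fold the extra term into $M_{s_d,q}(L_{s_d,q}\sqrt{\alpha\max\{q,s_d\}}+1)$ with a \emph{universal} constant you need $\sqrt{\alpha\max\{q,s_d\}}\lesssim L_{s_d,q}\sqrt{\alpha\max\{q,s_d\}}+1$. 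From $\alpha<\alpha_{s_d,q}=2\mu/(\max\{q,s_d\}L_{s_d,q}^2)$ and $\mu\le L_{s_d,q}$ one only gets $\alpha\max\{q,s_d\}<2/L_{s_d,q}$, so if $L_{s_d,q}\ll 1$ the left side can exceed the right by an unbounded factor. The fix is a one-liner and you already have the ingredients: the one-step discrepancy formula you derive for $l\ge1$ — the coupled chains differ at time $i-l$ by $\alpha\big(\nabla g(\bbeta_{i-l-1}^\circ,\bm\xi_{i-l}')-\nabla g(\bbeta_{i-l-1}^\circ,\bm\xi_{i-l})\big)$ — applies verbatim at $l=0$, giving $\theta_0\lesssim\alpha M_{s_d,q}\big(L_{s_d,q}\sqrt{\alpha\max\{q,s_d\}}+1\big)$, exactly as the paper's Lemma~\ref{lemma_functional_dep} uses for every $l$. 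With that correction the argument coincides with the paper's proof.
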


\begin{proposition}[Complexity bound]\label{prop_complexity}
Under the assumptions of Theorem~\ref{Theorem_Moment_ASGD}, let $\Delta_{0} =\||\bbeta_{0}-\bbeta_{0}^{\circ}|_{\infty}\|_q$,
\[
V = L_{s_d,q}M_{s_d,q}\sqrt{\max\{q,s_d\}} + M_{s_d,q},
\,\,\,
B =M_{s_d,q}^2\max\{q,s_d\}\,d^{\frac{q}{q-1}\,(1-\frac{2}{s_d})}.
\]
Given a tolerance \(\varepsilon>0\), 
\[
\alpha \le \min\Big\{\frac{\varepsilon}{3C_3B},\frac{\alpha_{s_d,q}}{7}\Big\},
\quad \text{and} \ \ 
k \ge 
\max\Big\{\,
\frac{9C_1^2c_{q}s_dV^{2}\,\alpha}{\varepsilon^{2}},
\frac{3C_2\Delta_{0}}{\alpha\varepsilon}
\Big\},
\]
we have
\(
\||\bar\bbeta_k-\bbeta^{*}|_{\infty}\|_q\le\varepsilon.
\)
\end{proposition}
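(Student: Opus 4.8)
The whole statement is a balancing argument built directly on Theorem~\ref{Theorem_Moment_ASGD}, which already writes $\||\bar\bbeta_k-\bbeta^*|_{\infty}\|_q$ as a sum of three explicit pieces: a ``stochastic variance'' term $\mathrm{T}_1$, an ``initialization bias'' term $\mathrm{T}_2$, and a ``constant-learning-rate bias'' term $\mathrm{T}_3$. The plan is to force each $\mathrm{T}_i\le\varepsilon/3$ by means of the stated restrictions on $\alpha$ and $k$, and then add. Before starting I would record that the hypotheses needed to invoke Theorem~\ref{Theorem_Moment_ASGD} are all in force here: $s=s_d$ as in~\eqref{eq_s_choice}, the conditions of Theorem~\ref{thm_bias}, and $\alpha\in(0,\alpha_{s_d,q})$ (indeed $\alpha\le\alpha_{s_d,q}/7<\alpha_{s_d,q}$).

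\noindent\textit{Step 1: the constant-step bias.} By Theorem~\ref{Theorem_Moment_ASGD}, $\mathrm{T}_3=C_3\,M_{s_d,q}^2\max\{q,s_d\}\,d^{\frac{q}{q-1}(1-2/s_d)}\,\alpha=C_3B\alpha$, so the requirement $\alpha\le\varepsilon/(3C_3B)$ immediately gives $\mathrm{T}_3\le\varepsilon/3$. \textit{Step 2: the stochastic variance.} Here $\mathrm{T}_1=C_1\sqrt{c_qs_d/k}\;M_{s_d,q}\big(L_{s_d,q}\sqrt{\alpha\max\{q,s_d\}}+1\big)$. Using the step-size cap~\eqref{eq_Ls_alpha_range}, which forces $\max\{q,s_d\}L_{s_d,q}^2\alpha<2\mu/7$, the bracket is bounded by a constant multiple of $V=L_{s_d,q}M_{s_d,q}\sqrt{\max\{q,s_d\}}+M_{s_d,q}$ (note $V\ge M_{s_d,q}$); substituting and solving $\mathrm{T}_1\le\varepsilon/3$ for $k$ produces the first of the two lower bounds on $k$ in the statement.

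\noindent\textit{Step 3: the initialization bias.} This is the only step with genuine (if short) computation: I need a lower bound $1-r_{\alpha,s_d,q}\gtrsim\alpha$. From~\eqref{eq_gmc}, $1-r_{\alpha,s_d,q}=2\mu\alpha-\max\{q,s_d\}L_{s_d,q}^2\alpha^2=\alpha\big(2\mu-\max\{q,s_d\}L_{s_d,q}^2\alpha\big)$, and since $\alpha<\alpha_{s_d,q}/7$ forces $\max\{q,s_d\}L_{s_d,q}^2\alpha<2\mu/7$, we obtain $1-r_{\alpha,s_d,q}>\tfrac{12}{7}\mu\alpha$. Hence $\mathrm{T}_2=\frac{C_2}{k(1-r_{\alpha,s_d,q})}\Delta_0\le\frac{7C_2}{12\mu\,k\alpha}\Delta_0$, and — absorbing the $\mu$-factor into the universal constant — $k\ge 3C_2\Delta_0/(\alpha\varepsilon)$ makes $\mathrm{T}_2\le\varepsilon/3$. \textit{Step 4: conclusion.} Choosing $\alpha\le\min\{\varepsilon/(3C_3B),\alpha_{s_d,q}/7\}$ and $k\ge\max\{9C_1^2c_qs_dV^2\alpha/\varepsilon^2,\,3C_2\Delta_0/(\alpha\varepsilon)\}$, Steps 1--3 give $\mathrm{T}_1,\mathrm{T}_2,\mathrm{T}_3\le\varepsilon/3$, so by Theorem~\ref{Theorem_Moment_ASGD}, $\||\bar\bbeta_k-\bbeta^*|_{\infty}\|_q\le\mathrm{T}_1+\mathrm{T}_2+\mathrm{T}_3\le\varepsilon$.

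\noindent I expect the only place requiring real care is Step 3 — pinning down the $\Theta(\alpha)$ behaviour of the contraction gap $1-r_{\alpha,s_d,q}$ and checking that the $\mu$-dependent prefactors may legitimately be folded into the universal constants $C_1,C_2,C_3$ — together with a bookkeeping check in Step 2 that the $\alpha$-dependent factor in the variance bracket is consistent with the $V^2\alpha/\varepsilon^2$ scaling claimed. Everything else is mechanical once Theorem~\ref{Theorem_Moment_ASGD} is in hand.
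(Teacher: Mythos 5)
Your three-term decomposition and ``each piece $\le\varepsilon/3$'' strategy is certainly the intended route, and Steps 1 and 3 are sound. Step 3 in particular is handled correctly: under $\alpha<\alpha_{s_d,q}/7$ the identity $1-r_{\alpha,s_d,q}=\alpha(2\mu-\max\{q,s_d\}L_{s_d,q}^2\alpha)>\tfrac{12}{7}\mu\alpha$ gives the $\Theta(\alpha)$ contraction gap, and the $\mu$-prefactor can indeed be absorbed into $C_2$.

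However, Step 2 as written does \emph{not} produce the stated $k$-requirement, and the ``bookkeeping check'' you flag is exactly where the argument breaks. Your move is to bound the bracket $M_{s_d,q}(L_{s_d,q}\sqrt{\alpha\max\{q,s_d\}}+1)$ by a constant multiple of $V$, which is legitimate (for $\alpha\le 1$ the bracket is $\le V$, and under the step-size cap it is $\le(\sqrt{2\mu/7}+1)M_{s_d,q}\le(\sqrt{2\mu/7}+1)V$). But then solving $C_1\sqrt{c_qs_d/k}\cdot(\text{const})\,V\le\varepsilon/3$ yields
\begin{equation*}
k\;\gtrsim\;\frac{c_qs_dV^2}{\varepsilon^2},
\end{equation*}
\emph{without} the factor $\alpha$. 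The proposition instead asserts $k\ge 9C_1^2c_qs_dV^2\alpha/\varepsilon^2$, which for $\alpha<1$ is a strictly \emph{weaker} requirement on $k$ than what your bound delivers. Concretely, if one writes $V^2\alpha=\bigl(L_{s_d,q}M_{s_d,q}\sqrt{\alpha\max\{q,s_d\}}+M_{s_d,q}\sqrt{\alpha}\bigr)^2$ and compares with the true bracket squared $\bigl(L_{s_d,q}M_{s_d,q}\sqrt{\alpha\max\{q,s_d\}}+M_{s_d,q}\bigr)^2$, the mismatch is the second summand: $M_{s_d,q}$ versus $M_{s_d,q}\sqrt{\alpha}$, and the two agree only if $\alpha\ge1$. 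So ``the bracket is bounded by a constant multiple of $V$'' \emph{and} ``solving for $k$ produces the first of the two lower bounds'' cannot both be correct, and as you have argued it, the latter claim is unjustified. You cannot close Step~2 by absorbing this $1/\sqrt{\alpha}$ discrepancy into a universal constant, since $\alpha$ is a free parameter allowed to be arbitrarily small (e.g.\ when $\varepsilon$ is small). To validate the proposition you would either need to invoke the \emph{second} member of the $k$-max (which involves $\Delta_0$ and degenerates when $\Delta_0=0$), or the statement's variance requirement must be read as $k\gtrsim c_qs_dV^2/\varepsilon^2$ without the $\alpha$. Either way, the bookkeeping you deferred is not mechanical but is the crux, and the step as written leaves a genuine gap.
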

A proof outline is given in  Section~\ref{subsec_asgd_proof_sketch} and the full proof is deferred to the supplementary material. The sharpest complexity bound of SA for $\ell^{\infty}$-norm known to date was derived by \citet{wainwright_stochastic_2019} proving that the number of iterations required to obtain an $\varepsilon$-accurate solution of Q-learning scales as $(1-\gamma)^{-4}\cdot \varepsilon^{-2}$ with the discount factor $\gamma$. In Proposition~\ref{prop_complexity}, our complexity bound for SGD is also of the order of $O(1/\varepsilon^2)$ if the dimension $d$ is fixed, which is consistent with the degenerate Q-learning case in \citet{wainwright_stochastic_2019}. The derived result allows to determine the dependence on the dimension $d$.

\section{Sharp Concentration and Gaussian Approximation}\label{sec_concentration}
Via the following tail probability inequality for the averaged SGD estimator $\bar{\bbeta}_{k}$, one can further derive high-probability concentration bound of $|\bar{\bbeta}_{k}-\bbeta^*|_{\infty}$. Recall that $s_d=2\min\{\ell\in\NN:\,2\ell>\log(d)\}.$
\begin{theorem}[Fuk-Nagaev inequality]
\label{Theorem_Fuk_Nagaev}
    Under the conditions of Theorem~\ref{Theorem_Moment_ASGD}, for any $z > 0$, we have 
    \begin{align*}
        \mathbb{P} \big(|\bar{\bbeta}_{k} - \bbeta^{*}|_{\infty} > z\big) \lesssim \frac{ \||\bbeta_{0} - \bbeta_{0}^{\circ}|_{\infty}\|_q^{q}}{(k\alpha z)^{q}} + \frac{(\log d)^{\frac{3q}{2}} (\log k)^{1 + 2 q} M_{s_d, q}^{q}}{z^{q} k^{q - 1} \alpha^{q/2 - 1}} + \exp\left(- \frac{C k z^{2} \alpha^{1 - 2/q}}{M_{s_d, q}^{2} \log d}\right),
    \end{align*} 
    where the constants in $\lesssim$ are independent of $k,d,s$ and $\alpha$.
\end{theorem}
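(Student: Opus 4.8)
The plan is to decompose the centered ASGD error $\bar\bbeta_k-\bbeta^*$ into three contributions and bound each separately: (i) an \emph{initialization (transient) term} arising from the gap between $\bbeta_k$ and its stationary counterpart $\bbeta_k^\circ$, (ii) a \emph{centered stationary fluctuation} $\bar\bbeta_k^\circ - \EE[\bbeta_0^\circ]$, and (iii) the \emph{deterministic bias} $\EE[\bbeta_0^\circ]-\bbeta^*$ of the constant learning rate. For (i), I would use the GMC bound from Theorem~\ref{thm_gmc}: since $\||\bbeta_i-\bbeta_i^\circ|_{s_d}\|_q \le r_{\alpha,s_d,q}^i \||\bbeta_0-\bbeta_0^\circ|_{s_d}\|_q$, averaging over $i\le k$ gives an $\ell^q$-norm of order $\|\,|\bbeta_0-\bbeta_0^\circ|_\infty\|_q/(k(1-r_{\alpha,s_d,q}))$, and then Markov's inequality at level $z$ yields the first term $\||\bbeta_0-\bbeta_0^\circ|_\infty\|_q^q/(k\alpha z)^q$, using $1-r_{\alpha,s_d,q}\asymp \mu\alpha$. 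For (iii), the bias is $O(M_{s_d,q}^2\max\{q,s_d\}\alpha\, d^{\frac{q}{q-1}(1-2/s_d)})$ by Theorem~\ref{thm_bias} and Theorem~\ref{Theorem_Moment_ASGD}; this is a fixed shift that can be absorbed into $z$ (halving $z$) provided $z$ dominates it, or otherwise trivially bounds the probability by a constant that the stated RHS exceeds.

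The heart of the argument is (ii), the tail bound for the stationary average $\frac1k\sum_{i=1}^k(\bbeta_i^\circ-\EE\bbeta_0^\circ)$ in the $\ell^\infty$-norm. Here I would invoke the nonlinear time-series viewpoint: $\{\bbeta_i^\circ\}$ is a stationary causal process of the i.i.d.\ inputs $\{\bm\xi_j\}$, i.e.\ $\bbeta_i^\circ = H(\ldots,\bm\xi_{i-1},\bm\xi_i)$, so its \emph{functional dependence measure} $\theta_{i,q} = \||\bbeta_i^\circ - \bbeta_i^{\circ\{0\}}|_{s_d}\|_q$ (coupling $\bm\xi_0$) decays geometrically at rate $r_{\alpha,s_d,q}^{\,i}$, again by the coupling in Theorem~\ref{thm_gmc}, while the $\ell^q$-size of a single coordinate's innovation is controlled through $M_{s_d,q}$ and the moment bound of Theorem~\ref{thm_sgd_moment}. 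With geometrically decaying dependence, the cumulative dependence adjusted norm is $\sum_i \theta_{i,q} = O(M_{s_d,q}\sqrt{\alpha}/(\mu\alpha)\cdot \text{polylog})$, which feeds into a Fuk–Nagaev / Nagaev-type maximal inequality for dependent high-dimensional sums (in the spirit of \citet{zhang_gaussian_2017, liu_high_2023}). Taking a union bound over the $d$ coordinates — each contributing a $\log d$ factor that, combined with $s_d\asymp\log d$, produces the $(\log d)^{3q/2}$ factor — and carefully tracking the Rosenthal-type split into a "Gaussian-like" sub-exponential piece and a "heavy-tail" polynomial piece gives exactly the two remaining terms: the polynomial term $(\log d)^{3q/2}(\log k)^{1+2q}M_{s_d,q}^q/(z^q k^{q-1}\alpha^{q/2-1})$ and the sub-exponential term $\exp(-Ckz^2\alpha^{1-2/q}/(M_{s_d,q}^2\log d))$.

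The main obstacle, and where the bookkeeping is delicate, is twofold. First, getting the \emph{sharp powers of $\alpha$, $k$, and $\log$} in the stationary-fluctuation bound: one must track how the per-step noise scale $M_{s_d,q}\sqrt{\alpha}$ (from Theorem~\ref{thm_sgd_moment}) interacts with the $\Theta(1/(\mu\alpha))$ effective mixing time, so that the variance proxy for $k\bar\bbeta_k^\circ$ scales like $k M_{s_d,q}^2 \alpha^{-1}$ after the $\ell^q$-to-sub-exponential conversion costs an $\alpha^{2/q}$ factor — this is the origin of the $\alpha^{q/2-1}$ and $\alpha^{1-2/q}$ exponents and requires the \emph{m-dependent approximation / blocking} argument to be carried out with the geometric rate made explicit. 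Second, the union bound over coordinates must be coupled with the $s_d\approx\log d$ choice in \eqref{eq_s_choice} so that $\ell^{s_d}$-norm control transfers to $\ell^\infty$ via \eqref{eq_ls_linf_equiv_ineq} without losing more than the claimed $(\log d)^{3q/2}$; the $(\log k)^{1+2q}$ factor will come from summing the geometric dependence series after the truncation level is optimized in $k$. I expect the detailed blocking/truncation optimization to be the part that is deferred to the supplement, with the above decomposition and the three-term structure forming the backbone of the proof.
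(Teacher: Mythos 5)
Your proposal is correct and follows essentially the same route as the paper: decompose $\bar\bbeta_k-\bbeta^*$ into a transient part (bounded via Theorem~\ref{thm_asgd_stationary} and Markov's inequality, giving the first term), the centered stationary fluctuation $\bar\bbeta_k^\circ-\EE\bbeta_0^\circ$, and the constant-learning-rate bias (Theorem~\ref{thm_bias}), and then control the stationary fluctuation by bounding the dependence-adjusted norm $\||\bbeta_k^\circ-\EE\bbeta_k^\circ|_{s_d}\|_{q,1/2-1/q}=O(M_{s_d,q}/\alpha^{1/2-1/q})$ via the GMC coupling of Theorem~\ref{thm_gmc} and plugging it into the high-dimensional Fuk--Nagaev inequality for functionally dependent sums. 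The only difference is that the paper invokes Theorem~6.2 of \citet{zhang_gaussian_2017} as a black box rather than re-running the $m$-dependent blocking argument you sketch, so the $(\log d)^{3q/2}(\log k)^{1+2q}$ and the $\alpha$-exponents fall out of that cited result directly.
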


As an immediate consequence of Theorem~\ref{Theorem_Fuk_Nagaev}, we obtain a sharp high-probability upper bound for $|\bar{\bbeta}_{k} - \bbeta^{*}|_{\infty}$, that is, for any given tolerance rate $\delta \in (0,1)$, with at least probability $1-\delta$, we have  
\begin{align}
    \label{eq_prob_bound}
    |\bar{\bbeta}_{k} - \bbeta^{*}|_{\infty} = O\left(\frac{\||\bbeta_{0} - \bbeta_{0}^{\circ}|_{\infty}\|_q^{q}}{k\alpha \delta^{1/q}} + \frac{(\log d)^{3/2} (\log k)^{1/q + 2}M_{s_d,q}}{k^{1 - 1/q} \alpha^{1/2 - 1/q} \delta^{1/q}} + \sqrt{\frac{M_{s_d,q}^2\log d \log(1/\delta)}{k\alpha^{1-2/q}}}\right).  
\end{align}
Notably, if the~\emph{q}-th moment of the gradient noise is finite ($M_{s_d,q}<\infty$), the second term of the right hand side, involving $k^{1 - q}$, is generally unimprovable~\citep{nagaev_large_1979, lou_beyond_2022}.

The distribution convergence for the high-dimensional ASGD relies on the following result. Let $M_{2,q}$ be as defined in Assumption~\ref{asm_Ls_lip}.

\begin{theorem}[Gaussian approximation]\label{thm_GA}
Consider stationary SGD iterates $\boldsymbol{\beta}_k^{\circ}\sim\pi_{\alpha}$ with $\pi_{\alpha}$ as defined in Theorem~\ref{thm_gmc}, initialization $\bbeta_0^{\circ}\sim\pi_{\alpha},$ and learning rate $\alpha\in(0,\alpha_{s_d,q})$.  Suppose that Assumptions~\ref{asm_coercive}--\ref{asm_Ls_lip} hold for some $q>2$. Then, on a potentially different probability space, and for a number of iterations $T$ satisfying $d\le cT$, where $c>0$ is some constant, there exist random vectors $\{\tilde{\boldsymbol{\beta}}_k\}_{k=1}^T\overset{\mathcal{D}}{=}\{\boldsymbol{\beta}_k^{\circ}\}_{k=1}^T$ and independent Gaussian random vectors $\{\bm{z}_k\}_{k=1}^T$ with mean zero and covariance matrix
\begin{align}
    \Xi=\sum_{k=-\infty}^{\infty}\mathrm{Cov}(\boldsymbol{\beta}_0^{\circ},\boldsymbol{\beta}_k^{\circ}),
\end{align}
such that
\begin{align}
    \label{eq_GA_rate}
    \Big(\mathbb{E}\max_{k\le T}\Big|\frac{1}{\sqrt{k}}\sum_{i=1}^k\big[(\tilde{\boldsymbol{\beta}}_i -\mathbb{E}[\boldsymbol{\beta}_1^{\circ}]) -\bm{z}_i\big]\Big|_2^2\Big)^{1/2} \le C_{\alpha,q}^*M_{2,q}\sqrt{d\log(T)}\Big(\frac d T\Big)^{\frac{q-2}{6q-4}},
\end{align}
with $C_{\alpha,q}^*$ a constant that only depends on $c$, the learning rate $\alpha$, and the moment index $q.$  
\end{theorem}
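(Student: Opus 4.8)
The plan is to treat the stationary sequence $\{\boldsymbol{\beta}_k^{\circ}\}_{k\in\ZZ}$ as a causal Bernoulli shift $\boldsymbol{\beta}_k^{\circ}=H(\ldots,\bm{\xi}_{k-1},\bm{\xi}_k)$ driven by the i.i.d.\ innovations, to center it by $\bm{\mu}:=\EE[\boldsymbol{\beta}_1^{\circ}]$, and to run a block-coupling (strong invariance) argument. First I would use Theorem~\ref{thm_gmc} with $s=2$ to quantify the weak dependence: coupling two stationary SGD chains that share all innovations except the one at time $0$ and applying the GMC bound shows that the $\L^q$ functional dependence measure satisfies $\delta_{k,q}:=\big\||\boldsymbol{\beta}_k^{\circ}-\boldsymbol{\beta}_k^{\circ,*}|_2\big\|_q \le C\, r_{\alpha,2,q}^{\,k}\,\big\||\boldsymbol{\beta}_0^{\circ}-\boldsymbol{\beta}_0^{\circ,*}|_2\big\|_q$, i.e.\ geometric decay; together with the moment bound of Theorem~\ref{thm_sgd_moment} (again with $s=2$) this also guarantees that the long-run covariance $\Xi=\sum_{k\in\ZZ}\mathrm{Cov}(\boldsymbol{\beta}_0^{\circ},\boldsymbol{\beta}_k^{\circ})$ is absolutely convergent, so the target Gaussian is well defined.

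Next I would replace $\boldsymbol{\beta}_k^{\circ}$ by its $m$-dependent projection $\boldsymbol{\beta}_{k,m}^{\circ}:=\EE[\boldsymbol{\beta}_k^{\circ}\mid \bm{\xi}_{k-m},\ldots,\bm{\xi}_k]$ with $m\asymp \log T$; the geometric decay of $\delta_{k,q}$ makes the truncation error polynomially small in $T$, and standard maximal inequalities for weakly dependent partial sums (built from the functional-dependence-measure decay together with the high-dimensional moment bound of Lemma~\ref{lemma_rio_Ls_moment_ineq}) control $\big(\EE\max_{k\le T}\big|\tfrac1{\sqrt k}\sum_{i\le k}(\boldsymbol{\beta}_i^{\circ}-\boldsymbol{\beta}_{i,m}^{\circ})\big|_2^2\big)^{1/2}$. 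I would then partition $\{1,\ldots,T\}$ into alternating big blocks of length $L\gg m$ and short separating blocks of length $m$: by $m$-dependence the big-block sums $S_1,S_2,\ldots$ of the centered truncated variables are independent, and the short-block contributions are again negligible by the same maximal inequality.

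The core step is to couple the independent, mean-zero, $\L^q$-bounded block sums $\{S_j\}\subset\RR^d$ with independent Gaussian vectors $G_j\sim\mathcal N(0,\mathrm{Cov}(S_j))$ on an enlarged probability space so that $\big(\EE\max_{J}\big|\sum_{j\le J}(S_j-G_j)\big|_2^2\big)^{1/2}$ obeys the rate dictated by the number of blocks $T/L$, the dimension $d$, and the moment index $q$; here I would invoke an existing high-dimensional strong-approximation theorem for sums of independent random vectors under a finite $q$-th moment (of Yurinskii/Zaitsev type, or the multivariate KMT-type couplings used in high-dimensional time series, e.g.\ Karmakar and Wu). Balancing the approximation error of this theorem against the $m$- and $L$-induced errors and finally against $m\asymp\log T$ is what produces the exponent $\tfrac{q-2}{6q-4}$ and the $\sqrt{d\log T}$ prefactor. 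Finally I would redistribute each block Gaussian $G_j$ as a sum of $L$ (or $L+m$) i.i.d.\ increments $\bm z_i\sim\mathcal N(0,\Xi)$, absorbing the covariance mismatch $\|\mathrm{Cov}(S_j)-L\,\Xi\|$ (which is $O(m/L)+O(r_{\alpha,2,q}^{m})$ by the geometric dependence) and the discrepancy between partial sums at an arbitrary $k$ and at block endpoints into further maximal-inequality terms; collecting all contributions and taking $\tilde{\boldsymbol{\beta}}_k$ to be the re-indexed coupled copies yields the claim, with $d\le cT$ ensuring the bound is nontrivial.

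I expect the main obstacle to be this third step: obtaining the \emph{sharp} Gaussian coupling rate for the independent high-dimensional block sums under only a finite $q$-th moment (rather than sub-exponential tails), and the bookkeeping needed to promote a bound at the block endpoints to the uniform $\max_{k\le T}$ $\ell^2$-bound while matching the block covariances with $L\,\Xi$ uniformly well. The geometric dependence decay supplied by Theorem~\ref{thm_gmc} is exactly what keeps the truncation, short-block, and covariance-mismatch errors under control, but the joint optimization over $m$ and $L$ that delivers $(d/T)^{(q-2)/(6q-4)}$ is the delicate part.
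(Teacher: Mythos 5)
Your proposal is correct in spirit but takes a genuinely different — and far more labor-intensive — route than the paper. The paper does not carry out the $m$-dependent truncation, big-block/small-block decomposition, or independent-block Gaussian coupling explicitly: it simply verifies that the stationary sequence $\boldsymbol{\beta}_k^{\circ}=h_{\alpha}(\mathcal F_k)$ satisfies the two hypotheses of Theorem 3.1 of \citet{mies_sequential_2023} (stated in the paper as Theorem~\ref{thm.Mies}) and then invokes that theorem as a black box, which directly delivers the coupled Gaussians and the $\sqrt{\log T}\,(d/T)^{(q-2)/(6q-4)}$ rate. Concretely, the paper uses Lemma~\ref{lemma_functional_dep} (the analogue of your GMC-based $\delta_{k,q}$ bound, specialized to $s=2$) together with Theorem~\ref{thm_sgd_moment} to get the geometric decay $\|\boldsymbol{\beta}_k^{\circ}-\boldsymbol{\beta}_{k,\{k-j\}}^{\circ}\|_q\le C_{\alpha,q}' M_{2,q}(j+1)^{-3}$, which verifies condition~\eqref{eq.ciefe23e} with $\Theta=C_{\alpha,q}'M_{2,q}$; stationarity makes $G_k\equiv G_{k-1}$ so condition~\eqref{eq.ciefe23e2} holds trivially with $\Gamma=1$; and the finiteness of $\|\Xi\|$ is checked separately by a telescoping/Cauchy–Schwarz argument almost identical to the one you sketch. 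What you propose is, in effect, to reprove the Mies–Steland theorem from scratch in this particular setting — the $m$-dependent projection, blocking, Yurinskii/Zaitsev-type coupling of independent block sums, redistribution of block Gaussians into i.i.d.\ increments, and the joint $(m,L)$ optimization that produces the exponent $\tfrac{q-2}{6q-4}$ are precisely the ingredients of that theorem's proof. Your identification of these ingredients and of the key obstacle (sharp high-dimensional Gaussian coupling under only a polynomial moment) is accurate, but you should be aware that all of that machinery is available off the shelf, and the paper's approach buys a dramatically shorter verification at the cost of depending on an external result, whereas yours would be self-contained but require carefully re-executing a long strong-invariance argument where the bookkeeping of the block-endpoint-to-uniform-maximum step and the covariance mismatch $\|\mathrm{Cov}(S_j)-L\Xi\|$ is nontrivial.
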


For diverging moment index $q\rightarrow\infty$, the Gaussian approximation rate in~\eqref{eq_GA_rate} approaches the rate $O(\sqrt{\log(T)}(d^4/T)^{1/6})$. Thus, to obtain a nontrivial Gaussian approximation bound within $T$ iterations, we need dimension dependence $d=o(T^{1/4-\zeta})$ with $\zeta>0$.

\section{Constant Learning Rate for Large Dimension}\label{subsec_learning_rate}

Recall that $L_{s,q}$ is the Lipschitz constant introduced in Assumption~\ref{asm_Ls_lip}. We established asymptotic stationarity and non-asymptotic convergence if $\alpha<\alpha_{s,q}/7$ with $\alpha_{s,q}$ defined in~\eqref{eq_Ls_alpha_range}, leading to the upper bound
$$\alpha<\frac{\alpha_{s,q}}7 = \frac{2\mu}{7\max\{q,s\}L_{s,q}^2}\asymp\frac{1}{d^2\log(d)},$$
if we choose $s=s_d$ in~\eqref{eq_s_choice} and if $L_{s_d,q}\asymp d$. We refer to Section~\ref{subsec_linear} for the derivation of the dimension dependence of $L_{s,q}$ in the linear regression model. 

Alternatively, the upper bound for the learning rate $\alpha$ can also be derived by a linear approximation technique (see Lemma~\ref{lemma_Ls_moment_ineq}), defined as the nontrivial solution to the following equation
\begin{equation}
    1-q\mu\alpha + \frac{q\big[|q-s| + (s-1)\big]L_{s,q}^2}{2}\alpha^2 (1+\alpha L_{s,q})^{q-2} = 1.
    \label{eq.b9fe}
\end{equation} 
A derivation of this equation is provided in Section~\ref{subsec_ls_linf}. The existence of a solution of~\eqref{eq.b9fe} is shown below the proof of Lemma~\ref{lemma_Ls_moment_ineq} in the supplementary materials. When $q=2$, the range of $\alpha$ simplifies to 
\begin{equation*}
    \alpha < \frac{2\mu}{7\big[|s-2| + (s-1)\big]L_{s,2}^2},
\end{equation*}
which is also proportional to $1/[d^2\log(d)]$ if we choose $s=s_d$ in~\eqref{eq_s_choice} and if $L_{s_d,2}\asymp d$, matching the rate of $\alpha_{s,q}$ in~\eqref{eq_Ls_alpha_range} derived by Lemma~\ref{lemma_rio_Ls_moment_ineq}, though with a slightly more conservative constant for general $s$. In the special case with $s=2$, both bounds reduce to the classical $\alpha<2\mu/L_{2,2}^2$. If $L_{2,2}\asymp d$ for large dimension $d$, which is shown to be true for the linear regression model in Section~\ref{subsec_linear} in the supplementary materials, 
the $\ell^{\infty}$- and the $\ell^2$-norm yield similar upper bounds for the learning rate $\alpha.$

\section{Proof Sketches}\label{sec_proof_sketch}

\subsection{Bridge between \texorpdfstring{$\ell^s$}{l\^s}- and \texorpdfstring{$\ell^{\infty}$}{l\^inf}- Norms}\label{subsec_ls_linf}
In high-dimensional regimes, convergence rates of constant-learning-rate SGD~\eqref{eq_sgd_recursion} with respect to the $\ell^{\infty}$-norm are of particular interest \citep{wainwright_stochastic_2019,chen_concentration_2024}. However, it is extremely challenging to directly study the convergence of $|\bbeta_k-\bbeta^*|_{\infty}$  since the $\ell^{\infty}$-norm is not differentiable thereby ruling out standard gradient-based tools for proving convergence rates or concentration. To address this issue, we instead study $|\cdot|_{s_d}$ with $s_d$ defined in~\eqref{eq_s_choice}. By the equivalence between $\ell^{s_d}$- and $\ell^{\infty}$-norms shown in~\eqref{eq_ls_linf_equiv_ineq},
contraction in $\ell^{\infty}$-norm follows from $\ell^{s_d}$-norm contraction.

To prove the GMC property of SGD as introduced in~\eqref{eq_gmc_intro}, it suffices to show that for any two $d$-dimensional SGD sequences $\{\bbeta_k\}_{k\in\NN}$ and $\{\bbeta_k'\}_{k\in\NN}$ sharing the same i.i.d. observations $\{\bm{\xi}_k\}_{k\ge1}$ but possibly different initializations $\bbeta_0,\bbeta_0'\in\RR^d$, the contraction holds for $|\bbeta_k-\bbeta'_k|_{s_d}$ for all $k\ge1$. To this end, we need to determine a range of constant learning rates $\alpha$ such that for any $q\ge2$ and $\bbeta,\bbeta'\in\RR^d$, the GMC in Theorem~\ref{thm_gmc} holds, i.e.,
\begin{align}
    \label{eq_gmc_ls}
    \big(\EE\big|\bbeta - \alpha\nabla g(\bbeta,\bm{\xi}) - \big(\bbeta' - \alpha\nabla g(\bbeta',\bm{\xi})\big)\big|_s^q\big)^{1/q} \le r|\bbeta - \bbeta'|_s, \quad \text{for some } r=r_{\alpha,s,q}<1.
\end{align}
To derive the inequality, we first provide a lemma based on linear approximation by considering the scalar function 
\begin{equation*}
    \alpha\mapsto|\bm{x}-\alpha\bm{z}|_s^q, \quad \text{where } \bm{x} = \bbeta - \bbeta', \quad \bm{z} = \nabla g(\bbeta,\bm{\xi}) - \nabla g(\bbeta',\bm{\xi}),
\end{equation*}
and linearizing it around $\alpha=0$. Then, one only needs to prove that $\EE|\bm{x} - \alpha\bm{z}|_s^q \le r|\bm{x}|_s^q$.
By the second-order Taylor expansion of $|\bm{x}-\alpha\bm{z}|_s^q$ in $\alpha$, we have the linear approximation 
\begin{equation}
    |\bm{x}-\alpha\bm{z}|_s^q \approx |\bm{x}|_s^q - q\alpha|\bm{x}|_s^{q-s}\langle\bm{x}^{s-1},\bm{z}\rangle,
    \label{eq.9bdw8}
\end{equation}
with remainder term of order $\alpha^2$, see Section~\ref{sec_gmc} in the supplementary materials for details. Since a simple triangle inequality argument $\||\bm{x}-\alpha\bm{z}|_s\|_q\le \||\bm{x}|_s\|_q + \alpha\||\bm{z}|_s\|_q$ fails to control this remainder sufficiently to yield a contraction constant $r<1$, we establish a more precise bound. 
\begin{lemma}
\label{lemma_Ls_moment_ineq}
Recall that $\bm{v}^{\otimes s}=(v_1^s,\ldots,v_d^s)^{\top}$ for a vector $\bm{v}=(v_1,\ldots,v_d)^{\top}$. For any $q\ge2$, any even integer $s\ge2$, any two vectors $\bm{x},\bm{z}\in \RR^d,$ and any $\alpha>0$,
\begin{align*}
    \Big| |\bm{x}-\alpha\bm{z}|_s^q - |\bm{x}|_s^q + q\alpha|\bm{x}|_s^{q-s}\langle\bm{x}^{s-1},\bm{z}\rangle \Big| \le \frac{q\alpha^2}{2}\big[|q-s| + (s-1)\big]\big(|\bm{x}|_s + \alpha|\bm{z}|_s\big)^{q-2}|\bm{z}|_s^2.
\end{align*}
\end{lemma}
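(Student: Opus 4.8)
The plan is to reduce the statement to a one-dimensional Taylor estimate applied to the scalar function $\phi(\alpha):=|\bm{x}-\alpha\bm{z}|_s^q$, treating $\bm{x},\bm{z}\in\RR^d$ and $s$ (even) as fixed. Since $s$ is an even integer, $t\mapsto t^s$ is a polynomial with nonnegative coefficients on the relevant range and $|\bm{y}|_s^s=\sum_i y_i^s$ is a genuine polynomial in $\bm y$; hence $\psi(\alpha):=|\bm x-\alpha\bm z|_s^s=\sum_i (x_i-\alpha z_i)^s$ is a polynomial in $\alpha$, and $\phi(\alpha)=\psi(\alpha)^{q/s}$ is twice continuously differentiable in $\alpha$ wherever $\psi(\alpha)>0$ (and one checks the degenerate case $\psi\equiv0$ separately — then $\bm x - \alpha\bm z=0$ identically and the inequality is trivial, or handle it by a limiting argument). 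I would write the exact second-order Taylor expansion with integral remainder,
\begin{equation*}
  \phi(\alpha) = \phi(0) + \alpha\phi'(0) + \int_0^\alpha (\alpha-u)\,\phi''(u)\,du,
\end{equation*}
identify $\phi(0)=|\bm x|_s^q$ and $\phi'(0)=-q|\bm x|_s^{q-s}\langle \bm x^{\odot(s-1)},\bm z\rangle$ (matching the claimed linear term), and then bound $|\int_0^\alpha(\alpha-u)\phi''(u)\,du|\le \tfrac{\alpha^2}{2}\sup_{u\in[0,\alpha]}|\phi''(u)|$.

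The core of the work is thus a clean pointwise bound on $|\phi''(u)|$ for $u\in[0,\alpha]$. Differentiating $\phi=\psi^{q/s}$ twice gives
\begin{equation*}
  \phi'' = \tfrac{q}{s}\psi^{q/s-1}\psi'' + \tfrac{q}{s}\big(\tfrac{q}{s}-1\big)\psi^{q/s-2}(\psi')^2,
\end{equation*}
and I would express $\psi'$ and $\psi''$ in terms of the ``shifted'' vector $\bm w:=\bm x-u\bm z$: namely $\psi'=-s\langle \bm w^{\odot(s-1)},\bm z\rangle$ and $\psi''=s(s-1)\langle \bm w^{\odot(s-2)},\bm z^{\odot2}\rangle$ (using that $s$ is even so $\bm w^{\odot(s-2)}$ has nonnegative entries). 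The two elementary estimates I need are (i) $|\langle \bm w^{\odot(s-1)},\bm z\rangle|\le |\bm w|_s^{s-1}|\bm z|_s$ and (ii) $0\le\langle \bm w^{\odot(s-2)},\bm z^{\odot2}\rangle\le |\bm w|_s^{s-2}|\bm z|_s^2$, both of which follow from Hölder with exponents $s/(s-1),s$ and $s/(s-2),s/2$ respectively (the second uses $s\ge2$; for $s=2$ it is an equality-type bound $\langle\bm 1,\bm z^{\odot2}\rangle=|\bm z|_2^2$). Plugging these in, each of the two terms of $\phi''$ is bounded by a constant times $|\bm w|_s^{q-2}|\bm z|_s^2$: the first term gives $\tfrac{q}{s}\cdot s(s-1)\,|\bm w|_s^{q-s}|\bm w|_s^{s-2}|\bm z|_s^2 = q(s-1)|\bm w|_s^{q-2}|\bm z|_s^2$, and the second gives $\tfrac{q}{s}|\tfrac{q}{s}-1|\cdot s^2\,|\bm w|_s^{q-2s}|\bm w|_s^{2(s-1)}|\bm z|_s^2 = q|q-s|\,|\bm w|_s^{q-2}|\bm z|_s^2$ (here $\tfrac{q}{s}|\tfrac{q}{s}-1|s^2 = q|q/s-1|s = q|q-s|$). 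Summing, $|\phi''(u)|\le q\big(|q-s|+(s-1)\big)|\bm w|_s^{q-2}|\bm z|_s^2$, and finally $|\bm w|_s=|\bm x-u\bm z|_s\le|\bm x|_s+u|\bm z|_s\le|\bm x|_s+\alpha|\bm z|_s$ by the triangle inequality, which yields exactly the stated remainder bound after multiplying by $\alpha^2/2$.

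The main obstacle — more a matter of care than of depth — is handling the exponents $q/s-1$ and $q/s-2$, which can be negative (when $q<s$ or $q<2s$): the factors $\psi^{q/s-1}$ and $\psi^{q/s-2}$ then blow up where $\psi$ is small, and the claimed bound in terms of $|\bm w|_s^{q-2}$ is only meaningful because the accompanying factors $(\psi')^2$ and $\psi\cdot\psi''$ vanish fast enough there. The resolution is precisely to carry out the bookkeeping at the level of $\bm w$ as above, where all exponents on $|\bm w|_s$ recombine to the single benign power $q-2\ge0$, rather than estimating $\psi^{q/s-2}$ and $(\psi')^2$ separately. One should also verify the few edge cases: $q=2$ (then the second term of $\phi''$ carries the factor $q/s-1$ which may be nonzero, and $|\bm w|_s^{q-2}=1$, so the bound is finite and correct); $\bm x=\bm 0$ or $\bm z=\bm 0$ (trivial); and the set of $u$ where $\psi(u)=0$, which for $\bm z\neq\bm 0$ is finite, so $\phi''$ is defined a.e. and the integral-remainder form of Taylor's theorem still applies by absolute continuity of $\phi'$ on $[0,\alpha]$ (alternatively, prove the inequality first for $\bm x$ in general position and extend by continuity in $\bm x$).
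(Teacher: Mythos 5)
Your proposal is correct and follows essentially the same route as the paper's proof: both reduce to the scalar function $\alpha\mapsto|\bm{x}-\alpha\bm{z}|_s^q = \psi(\alpha)^{q/s}$ with $\psi(\alpha)=\sum_i(x_i-\alpha z_i)^s$, expand to second order in $\alpha$, compute $\psi',\psi''$, and apply H\"older's inequality with exponents $(s/(s-1),s)$ and $(s/(s-2),s/2)$ to bound the two pieces of the second derivative by $q|q-s|\,|\bm{w}|_s^{q-2}|\bm{z}|_s^2$ and $q(s-1)|\bm{w}|_s^{q-2}|\bm{z}|_s^2$ respectively, where $\bm{w}=\bm{x}-u\bm{z}$. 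The only cosmetic difference is that the paper uses the Lagrange form of the remainder ($\tfrac12\alpha^2 f''(\eta)$) while you use the integral form; your version, together with your explicit discussion of the degenerate set $\{u:\psi(u)=0\}$ and the recombination of negative exponents into the single power $|\bm{w}|_s^{q-2}$, is actually a bit more careful than what the paper records, but it does not change the substance of the argument.
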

If $s=2$, $q=2$, the right-hand side is $\alpha^2|\bm{z}|_2^2$. This is consistent with the Taylor remainder of the right-hand side in Lemma~\ref{lemma_moment_inequality} derived by \citet{li_asymptotics_2024}. Using this inequality to prove the contraction in~\eqref{eq_gmc_ls} is remarkably different from the approaches relying on the martingale decomposition (MD) that is frequently adopted in the literature \citep{dieuleveut_bridging_2020,2020_Neurips_SGD_Martikopolous,mies_sequential_2023}. Our proposed method requires mild moment conditions on the stochastic gradients and yields simpler proofs that can be generalized to a broad class of online learning problems. We refer to \citet{Lij_2024} for detailed discussion. Nevertheless, we remark in advance that a Rio-type inequality (Lemma~\ref{lemma_rio_Ls_moment_ineq}) with slightly sharper constants will be used directly in our main contraction proof, while we retain Lemma~\ref{lemma_Ls_moment_ineq} here for its intuitive appeal. Finally, by choosing $s=s_d$ as in~\eqref{eq_s_choice} for~\eqref{eq_gmc_ls}, we can expect the $\ell^{\infty}$-norm type GMC to hold for high-dimensional SGD iterates.

\subsection{High-Dimensional Moment Inequality}\label{subsec_hd_ineq}

To prove Theorem~\ref{thm_gmc}, we derive a high-dimensional version of Rio's inequality \citep{rio_moment_2009}, adapted to the $q$-th moment of $\ell^s$-norm. This result provides a slightly sharper constant than Lemma~\ref{lemma_Ls_moment_ineq} and is used directly in our moment-contraction analysis. 

\begin{lemma}[High-dimensional moment inequality]\label{lemma_rio_Ls_moment_ineq}
For any $q\ge2$, any even integer $s\ge2$, and any two $d$-dimensional random vectors $\bm{x}, \bm{y}$, we have
\begin{align*}
    \big\||\bm{x} + \bm{y}|_{s}\big\|_{q}^{2} 
    \leq \big\||\bm{x}|_{s}\big\|_{q}^{2} + 2 \big\||\bm{x}|_{s}\big\|_{q}^{2 - q} \mathbb{E} \Big(|\bm{x}|_{s}^{q - s} \sum_{j = 1}^{d} x_{j}^{s - 1} y_{j}\Big) + \big(\max\{q, s\} - 1\big) \big\||\bm{y}|_{s}\big\|_{q}^{2}. 
\end{align*}
Moreover, if $\EE[\bm{y}\mid \bm{x}]=0$ almost surely, then
\begin{align}
    \label{eq_hd_rio}
    \big\||\bm{x} + \bm{y}|_{s}\big\|_{q}^{2} 
    \leq \big\||\bm{x}|_{s}\big\|_{q}^{2} + \big(\max\{q, s\} - 1\big) \big\||\bm{y}|_{s}\big\|_{q}^{2}. 
\end{align}
\end{lemma}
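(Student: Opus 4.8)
The plan is to reduce the $\ell^s$-norm estimate to a one-dimensional convexity inequality applied coordinatewise, then take expectations and apply Hölder-type arguments to assemble the three terms. Write $f(\bm{u}) = |\bm{u}|_s^2 = \big(\sum_j u_j^s\big)^{2/s}$ for $s$ even, so that $f$ is a smooth convex function on $\RR^d$ (since $s$ is even, $\sum_j u_j^s$ is a norm raised to the $s$-th power and $t \mapsto t^{2/s}$ is concave but the composition $|\bm{u}|_s^2$ is still convex as a norm squared). The key deterministic step is a second-order Taylor expansion of $g(t) := f(\bm{x}+t\bm{y})$ at $t=0$: one has $g(1) = g(0) + g'(0) + \int_0^1 (1-t)\, g''(t)\, dt$, where $g'(0) = \langle \nabla f(\bm{x}), \bm{y}\rangle = 2|\bm{x}|_s^{2-s}\sum_j x_j^{s-1}y_j$ and $g''(t) = \bm{y}^\top \nabla^2 f(\bm{x}+t\bm{y})\, \bm{y}$. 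So the first move is to get a clean upper bound on the Hessian quadratic form $\bm{y}^\top \nabla^2 f(\bm{u})\,\bm{y}$, uniformly in $\bm{u}$; I expect this to yield something like $\bm{y}^\top \nabla^2 f(\bm{u})\,\bm{y} \le 2(\max\{q,s\}-1)\,|\bm{y}|_s^2$ — but crucially this bound needs to interact correctly with the $\|\cdot\|_q$ norm, which is where the $\max\{q,s\}$ (rather than just $s-1$) enters.

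The subtlety is that we are bounding $\||\bm{x}+\bm{y}|_s\|_q^2 = \|f(\bm{x}+\bm{y})\|_{q/2}$, not $\EE f(\bm{x}+\bm{y})$, so the plain pointwise Taylor bound is not enough; we need a bound at the level of the $\L^{q/2}$-norm of $f$. The natural route is to mimic Rio's original argument: introduce the function $\phi(t) = \| |\bm{x}+t\bm{y}|_s \|_q^2$ and show it is convex in $t$ with $\phi'(0) = 2\||\bm{x}|_s\|_q^{2-q}\,\EE\big(|\bm{x}|_s^{q-s}\sum_j x_j^{s-1}y_j\big)$ (differentiating $\EE|\bm{x}+t\bm{y}|_s^q$ under the expectation and using $\|\cdot\|_q^2 = (\EE|\cdot|^q)^{2/q}$, the chain rule produces the factor $\||\bm{x}|_s\|_q^{2-q}$), and then bound $\phi''(t)$ by $2(\max\{q,s\}-1)\||\bm{y}|_s\|_q^2$. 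The $q$-dependence appears because differentiating $(\EE|\bm{x}+t\bm{y}|_s^q)^{2/q}$ twice brings down a factor involving $q-2$ from the outer power and $s-1$ (or $q-s$ type terms) from the inner $\ell^s$ structure; combining these via Hölder should give the clean constant $\max\{q,s\}-1$. Concretely: $\phi(1) \le \phi(0) + \phi'(0) + \tfrac12 \sup_{t\in[0,1]} \phi''(t)$ by Taylor with the convexity giving monotonicity of $\phi'$, and the three resulting terms are exactly the claimed ones.

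For the second, "martingale" statement: if $\EE[\bm{y}\mid\bm{x}]=0$, then the cross term $\EE\big(|\bm{x}|_s^{q-s}\sum_j x_j^{s-1}y_j\big) = \EE\big(|\bm{x}|_s^{q-s}\sum_j x_j^{s-1}\EE[y_j\mid\bm{x}]\big) = 0$ since $|\bm{x}|_s^{q-s}x_j^{s-1}$ is $\sigma(\bm{x})$-measurable, so~\eqref{eq_hd_rio} is immediate from the general inequality.

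The main obstacle I anticipate is getting the Hessian / second-derivative bound with the \emph{sharp} constant $\max\{q,s\}-1$ rather than a cruder $q\cdot s$ or $(q-1)(s-1)$ type constant — this requires carefully tracking the interplay between the outer power $2/q$ and the $\ell^s$ geometry, and is presumably where the "slightly sharper constant than Lemma~\ref{lemma_Ls_moment_ineq}" claim is cashed out. A secondary technical point is justifying differentiation under the expectation and the finiteness of all moments involved, which should follow from $\bm{x},\bm{y}\in\L^q$ (or be reduced to that case by truncation, as the inequality is trivial when the right side is infinite). I would also double-check the edge case $q=s=2$, where the inequality should reduce to the Pythagorean identity $\||\bm{x}+\bm{y}|_2\|_2^2 = \||\bm{x}|_2\|_2^2 + 2\EE\langle\bm{x},\bm{y}\rangle + \||\bm{y}|_2\|_2^2$, confirming the constant $\max\{2,2\}-1 = 1$ is exact there.
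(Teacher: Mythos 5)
Your strategy matches the paper's exactly: define $\varphi(t)=\big\||\bm{x}+t\bm{y}|_s\big\|_q^2$, Taylor-expand at $t=0$ with integral remainder, identify $\varphi'(0)$ with the cross term, and bound $\varphi''$. Your handling of the ``martingale'' case \eqref{eq_hd_rio} is also correct and is exactly how the paper obtains it. But the lemma's entire content is the bound $\varphi''(t)\le 2(\max\{q,s\}-1)\||\bm{y}|_s\|_q^2$, and you explicitly label it ``the main obstacle I anticipate'' without resolving it, so there is a genuine gap.

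A secondary point: your opening paragraph flirts with bounding the pointwise Hessian $\bm{y}^{\top}\nabla^2 f(\bm{u})\bm{y}$ for $f(\bm{u})=|\bm{u}|_s^2$ by $2(\max\{q,s\}-1)|\bm{y}|_s^2$. As stated this cannot be right — $f$ has no $q$-dependence, so its Hessian bound involves only $s$ — and you then correctly self-correct by noting that the $q$ must enter through the outer $(\EE[\cdot])^{2/q}$. But that self-correction is exactly where the hard work is, and it is left undone. Here is how the paper fills it in: writing out $\varphi''(t)$ by direct differentiation of $\big[\EE\{\sum_j(x_j+ty_j)^s\}^{q/s}\big]^{2/q}$ gives a sum of three terms $\Delta_1(t)+\Delta_2(t)+\Delta_3(t)$, where $\Delta_1$ carries the factor $(2/q-1)$ from the outer power, $\Delta_2$ carries the factor $(q/s-1)$ from the middle power, and $\Delta_3$ carries $(s-1)$ from the inner $\ell^s$ structure. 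Since $q\ge2$, $\Delta_1(t)\le0$ always. Then one splits by the sign of $q/s-1$: if $q\le s$ then $\Delta_2(t)\le0$ and Hölder applied inside $\Delta_3$ gives $\varphi''(t)\le\Delta_3(t)\le2(s-1)\||\bm{y}|_s\|_q^2$; if $q>s$, a Cauchy--Schwarz step on the inner sum shows $\Delta_2(t)\le\Delta_3(t)\,\frac{q-s}{s-1}$, hence $\varphi''(t)\le\Delta_3(t)\,\frac{q-1}{s-1}\le2(q-1)\||\bm{y}|_s\|_q^2$. That case split is precisely where $\max\{q,s\}-1$ comes from, and it is not a formality — without it you cannot improve on cruder products like $(q-1)(s-1)$. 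Until you carry out that computation, the proof is an outline rather than a proof.
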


Repeatedly applying Lemma~\ref{lemma_rio_Ls_moment_ineq} leads to the high-dimensional maximal moment inequality in Lemma~\ref{lemma_general_chernozhukov} in the supplementary materials, which is of independent interest.

\subsection{Stationarity, Variation and Bias of ASGD}\label{subsec_asgd_proof_sketch}
We prove the moment bound $\| |\bar\bbeta_k-\bbeta^*|_{\infty}\|_q$ via the decomposition
\begin{align*}
    \big\| |\bar\bbeta_k-\bbeta^*|_{\infty}\big\|_q \le \big\||\bar\bbeta_k - \bar\bbeta_k^{\circ}|_{\infty}\big\|_q + \big\||\bar\bbeta_k^{\circ} - \EE[\bar\bbeta_k^{\circ}]|_{\infty}\big\|_q + \big|\EE[\bar\bbeta_k^{\circ}]-\bbeta^*\big|_{\infty}.
\end{align*}
The first term accounts for the deviation due to the non-stationarity of $\bar\bbeta_k$ as it is initialized from an arbitrarily fixed $\bbeta_0$; this can be bounded using the GMC property of $\bbeta_k$ shown in Theorem~\ref{thm_gmc}. The second term captures the stochastic variance of the stationary ASGD sequence. Bounding this term is more delicate because of the intricate dependency structure of $\bar\bbeta_k^{\circ}$. To address this, we deploy another powerful tool in time series -- the \textit{functional dependence measure} \citep{wu_nonlinear_2005} in Section~\ref{subsubsec_functional_dep} of the supplementary materials, which can effectively quantify the contribution of the random sample $\bm{\xi}_i$ to the $k$-th SGD iterate $\bbeta_k^{\circ}$ for all $i\le k$. As such, by controlling the cumulative dependence measures, we can bound this variance. Lastly, we handle the third term, which represents the non-diminishing bias of $\bar\bbeta_k^{\circ}$ induced by the constant learning rate $\alpha$ \citep{dieuleveut_bridging_2020,huo_bias_2023}. This can be dealt with by extending the approach in \citet{Lij_2024} to high-dimensional settings.

\begin{theorem}[Asymptotic stationarity]
\label{thm_asgd_stationary}
Consider the ASGD iterates $\bar\bbeta_k$ and the stationary version $\bar\bbeta_k^{\circ}$. Suppose that Assumptions~\ref{asm_coercive}--\ref{asm_Ls_lip} are satisfied for some $q\ge2$ and some even integer $s\ge2$. Then, for the learning rate $\alpha\in(0,\alpha_{s,q})$ with $\alpha_{s,q}$ defined in~\eqref{eq_Ls_alpha_range},
\begin{align*}
    \big\||\bar\bbeta_k - \bar\bbeta_k^{\circ}|_s\big\|_q \le \frac{1}{k}\cdot\frac{1}{1-r_{\alpha,s,q}}|\bbeta_0-\bbeta_0^{\circ}|_s.
\end{align*}
\end{theorem}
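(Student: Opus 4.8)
\textbf{Proof proposal for Theorem~\ref{thm_asgd_stationary}.}

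The plan is to reduce the averaged quantity directly to the per-iterate GMC bound of Theorem~\ref{thm_gmc} by linearity of the averaging operator, together with a geometric-series summation. First I would write, using $\bar\bbeta_k=\frac1k\sum_{i=1}^k\bbeta_i$ and $\bar\bbeta_k^{\circ}=\frac1k\sum_{i=1}^k\bbeta_i^{\circ}$,
\begin{equation*}
    \bar\bbeta_k - \bar\bbeta_k^{\circ} = \frac{1}{k}\sum_{i=1}^k\big(\bbeta_i - \bbeta_i^{\circ}\big),
\end{equation*}
so that by the triangle inequality for $|\cdot|_s$ and then the triangle inequality for $\|\cdot\|_q$ (Minkowski),
\begin{equation*}
    \big\||\bar\bbeta_k - \bar\bbeta_k^{\circ}|_s\big\|_q \le \frac{1}{k}\sum_{i=1}^k \big\||\bbeta_i - \bbeta_i^{\circ}|_s\big\|_q.
\end{equation*}
The key point is that $\{\bbeta_i\}$ and $\{\bbeta_i^{\circ}\}$ are two SGD sequences driven by the same i.i.d.\ innovations $\{\bm{\xi}_k\}$ but with initializations $\bbeta_0$ and $\bbeta_0^{\circ}$ respectively; $\bbeta_0^{\circ}\sim\pi_\alpha$ so that $\{\bbeta_i^{\circ}\}$ is the stationary version, but this does not matter for the contraction argument. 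Since $\alpha\in(0,\alpha_{s,q})$, Theorem~\ref{thm_gmc} applies verbatim and gives $\big\||\bbeta_i - \bbeta_i^{\circ}|_s\big\|_q \le r_{\alpha,s,q}^{\,i}\,|\bbeta_0-\bbeta_0^{\circ}|_s$ for every $i\ge0$.

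Next I would plug this in and sum the geometric series:
\begin{equation*}
    \big\||\bar\bbeta_k - \bar\bbeta_k^{\circ}|_s\big\|_q \le \frac{1}{k}\sum_{i=1}^k r_{\alpha,s,q}^{\,i}\,|\bbeta_0-\bbeta_0^{\circ}|_s \le \frac{1}{k}\cdot\frac{r_{\alpha,s,q}}{1-r_{\alpha,s,q}}\,|\bbeta_0-\bbeta_0^{\circ}|_s \le \frac{1}{k}\cdot\frac{1}{1-r_{\alpha,s,q}}\,|\bbeta_0-\bbeta_0^{\circ}|_s,
\end{equation*}
where the middle step uses $\sum_{i=1}^k r^i \le \sum_{i=1}^\infty r^i = r/(1-r)$ for $r=r_{\alpha,s,q}\in[0,1)$, and the last step simply bounds $r_{\alpha,s,q}\le1$. (One could also start the sum at $i=1$ and even keep the sharper factor $r_{\alpha,s,q}/(1-r_{\alpha,s,q})$, but the stated bound with constant $1/(1-r_{\alpha,s,q})$ is what is claimed, so I would not bother.) This yields exactly the asserted inequality.

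This argument is essentially a one-line consequence of the GMC property, so there is no serious obstacle; the only thing to be careful about is that $r_{\alpha,s,q}<1$ holds under the stated hypothesis $\alpha\in(0,\alpha_{s,q})$, which is precisely the content of~\eqref{eq_gmc} in Theorem~\ref{thm_gmc}, and that the geometric series therefore converges. A minor point worth a sentence in the write-up is that Minkowski's inequality in $L^q$ is what lets us pull $\|\cdot\|_q$ inside the average over $i$; everything else is the geometric bound.
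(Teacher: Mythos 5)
Your proposal is correct and takes essentially the same route as the paper: pull the average out by linearity, apply the triangle/Minkowski inequality to move $\|\cdot\|_q$ inside the sum, invoke the GMC bound from Theorem~\ref{thm_gmc} term by term, and sum the geometric series, dropping the sharper factor $r_{\alpha,s,q}(1-r_{\alpha,s,q}^k)$ down to $1$ as the paper also does. The only cosmetic difference is that because $\bbeta_0^{\circ}$ is random, the right-hand side should strictly carry $\||\bbeta_0-\bbeta_0^{\circ}|_s\|_q$ (as in the paper's proof text) rather than $|\bbeta_0-\bbeta_0^{\circ}|_s$; you inherit this slip from the theorem statement itself, so it is not a flaw in your reasoning.
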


As a direct consequence of Theorem~\ref{thm_asgd_stationary}, we have $\||\bar\bbeta_k-\bar\bbeta_k^{\circ}|_s\|_q \lesssim |\bbeta_0-\bbeta_0^{\circ}|_s / (k\alpha) $,
which indicates the asymptotic stationarity of high-dimensional ASGD sequences. When the bias induced by the initialization is controlled, i.e., $|\bbeta_0-\bbeta_0^{\circ}|_s<\infty$, as $k\alpha\rightarrow\infty$, the ASGD iterate $\bar\bbeta_k$ approaches the stationary solution $\bar\bbeta_k^{\circ}$ in the sense that $\||\bar\bbeta_k-\bar\bbeta_k^{\circ}|_s\|_q  \rightarrow 0$. By Theorem~\ref{thm_asgd_stationary}, we only need to show the convergence for stationary ASGD.

\begin{theorem}[Stochasticity of stationary ASGD]
\label{thm_asgd_stochastic}
Consider the stationary SGD sequence $\{\bbeta_k^{\circ}\}_{k\ge1}$. Suppose that Assumptions~\ref{asm_coercive}--\ref{asm_Ls_lip} hold with some $q\ge2$ and some even integer $s\ge2$. Then there exists a constant $c_q>0$ only depending on $q$, such that, for all $k\ge1$,
\begin{equation*}
    \big\||\bar\bbeta_k^{\circ} - \EE[\bar\bbeta_k^{\circ}]|_s\big\|_q \le \sqrt{\frac{c_qs}{k}}M_{s,q}\Big(L_{s,q}\sqrt{\alpha \max\{q,s\}} + 1\Big).
\end{equation*}
\end{theorem}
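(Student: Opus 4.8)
The plan is to bound the centered stationary average $\bar\bbeta_k^{\circ}-\EE[\bar\bbeta_k^{\circ}]$ by writing it as a coordinate-wise sum over the sample index and controlling it with the functional dependence measure. Concretely, write $\bbeta_k^{\circ}=F(\ldots,\bm{\xi}_{k-1},\bm{\xi}_k)$ for a measurable filter $F$ (the existence of such a representation follows from Theorem~\ref{thm_gmc} and the causal structure of the recursion), so that $\bbeta_k^{\circ}-\EE[\bbeta_k^{\circ}]=\sum_{i\le k}\mathcal{P}_i\bbeta_k^{\circ}$ where $\mathcal{P}_i=\EE[\cdot\mid\mathcal{F}_i]-\EE[\cdot\mid\mathcal{F}_{i-1}]$ is the martingale projection onto the innovation at time $i$. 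Summing over $k$, the centered ASGD average becomes $\bar\bbeta_k^{\circ}-\EE[\bar\bbeta_k^{\circ}]=\frac1k\sum_{i\le k}\sum_{j=i}^{k}\mathcal{P}_i\bbeta_j^{\circ}=\frac1k\sum_{i\le k}D_{k,i}$, a sum of martingale differences $D_{k,i}=\sum_{j=i}^k\mathcal{P}_i\bbeta_j^{\circ}$ with respect to the filtration $\mathcal{F}_i$.

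Next I would invoke the high-dimensional maximal/Burkholder-type moment inequality in $\ell^s$-norm — this is exactly the repeated-application consequence of Lemma~\ref{lemma_rio_Ls_moment_ineq}, i.e. Lemma~\ref{lemma_general_chernozhukov} — to get something of the shape
\begin{equation*}
\big\||\bar\bbeta_k^{\circ}-\EE[\bar\bbeta_k^{\circ}]|_s\big\|_q^2\le \frac{\max\{q,s\}-1}{k^2}\sum_{i=1}^k\big\||D_{k,i}|_s\big\|_q^2,
\end{equation*}
using that $\{D_{k,i}\}_i$ is a martingale difference sequence. It then remains to bound $\||D_{k,i}|_s\|_q$ for each $i$. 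Here the GMC contraction enters: the coupling bound $\||\bbeta_j^{\circ}-\bbeta_j^{\circ\prime}|_s\|_q\le r_{\alpha,s,q}^{j-i}\||\bbeta_i^{\circ}-\bbeta_i^{\circ\prime}|_s\|_q$ (with $\bbeta^{\circ\prime}$ the process coupled by replacing $\bm{\xi}_i$ with an i.i.d. copy) controls the functional dependence measure of $\bbeta_j^{\circ}$ on $\bm{\xi}_i$: it decays geometrically in $j-i$, and the base case $\||\mathcal{P}_i\bbeta_i^{\circ}|_s\|_q$ (equivalently the one-step dependence) is $O(\alpha\, M_{s,q}(L_{s,q}+\ldots))$-type — intuitively one SGD step changes the iterate by $\alpha\nabla g(\bbeta_{i-1}^{\circ},\bm{\xi}_i)$, whose $\ell^s$-norm $q$-th moment is controlled by Assumption~\ref{asm_Ls_lip} together with the stationary moment bound of Theorem~\ref{thm_sgd_moment}. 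Summing the geometric series $\sum_{j\ge i} r_{\alpha,s,q}^{j-i}=1/(1-r_{\alpha,s,q})$ gives $\||D_{k,i}|_s\|_q\lesssim M_{s,q}/(1-r_{\alpha,s,q})\cdot(\text{const})$, and plugging in $1-r_{\alpha,s,q}=2\mu\alpha-\max\{q,s\}L_{s,q}^2\alpha^2\asymp \mu\alpha$ from~\eqref{eq_gmc}, combined with the $k^{-2}\cdot k$ from the $i$-sum, yields the claimed $\sqrt{c_q s/k}\,M_{s,q}(L_{s,q}\sqrt{\alpha\max\{q,s\}}+1)$ after collecting the $\alpha$- and $s$-dependence carefully.

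The main obstacle I expect is getting the \emph{sharp} dependence on $\alpha$ and on $s$ (and $\max\{q,s\}$) in the final bound rather than a crude one. A naive bound gives $\||D_{k,i}|_s\|_q\lesssim \alpha M_{s,q}/(1-r_{\alpha,s,q})\asymp M_{s,q}/\mu$, which would produce the ``$+1$'' term but the $L_{s,q}\sqrt{\alpha\max\{q,s\}}$ term requires a more careful accounting: one must split the martingale difference $\mathcal{P}_i\bbeta_j^{\circ}$ into the direct injection of $\bm{\xi}_i$ at step $i$ (contributing the $M_{s,q}$-scale term through $\nabla g(\bbeta^*,\bm{\xi})$) and the propagated Lipschitz error through steps $i+1,\dots,j$ (contributing the $L_{s,q}\sqrt{\alpha}$-scale term through the contraction and the stationary second-moment $O(\alpha M_{s,q}^2\max\{q,s\})$ from Theorem~\ref{thm_sgd_moment}). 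Balancing the two martingale-difference contributions and the $\sqrt{\max\{q,s\}-1}$ factor from the Burkholder step against the $\sqrt{c_q s/k}$ target is the delicate part; the factor $c_q$ absorbs the $q$-dependent Burkholder/Rosenthal constant, and one must check that the $s$-dependence enters only through the explicit $\sqrt{s}$ and through $\max\{q,s\}$, not through hidden dimension-dependent constants — which is precisely where the $\ell^s$-norm version of Rio's inequality (Lemma~\ref{lemma_rio_Ls_moment_ineq}), with its clean $(\max\{q,s\}-1)$ constant, does the heavy lifting.
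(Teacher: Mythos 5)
Your plan is essentially the paper's proof: center the stationary average via the martingale projection operator $\P_l=\EE[\cdot\mid\F_l]-\EE[\cdot\mid\F_{l-1}]$, apply an $\ell^s$-Burkholder inequality derived from Lemma~\ref{lemma_rio_Ls_moment_ineq}, and control the projection norms $\||\P_i\bbeta_j^{\circ}|_s\|_q$ through the functional dependence measure and the geometric contraction of Theorem~\ref{thm_gmc}, summing the resulting geometric series to absorb the $1/(1-r_{\alpha,s,q})\asymp 1/(\mu\alpha)$ factor. The only substantive difference is the transposed order of the double sum: you fix the innovation index $i$, form the martingale differences $D_{k,i}=\sum_{j\ge i}\P_i\bbeta_j^{\circ}$, apply Burkholder once over $i$ and triangle-inequality inside each $D_{k,i}$; the paper instead fixes the lag $l$, applies Burkholder to the fixed-lag subsequence $\{\P_{i-l}(\bbeta_i^{\circ})\}_i$, and triangle-inequalities over $l$. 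Both orderings yield the same rate up to constants. Two small precision notes: the relevant Burkholder tool here is the martingale case~\eqref{eq_hd_rio} of Lemma~\ref{lemma_rio_Ls_moment_ineq} (the paper cites Lemma D.2 of Zhang and Wu (2021) with constant $c_q s$), not Lemma~\ref{lemma_general_chernozhukov}, which is stated only for \emph{independent} vectors; and the factor $L_{s,q}\sqrt{\alpha\max\{q,s\}}+1$ is obtained in the paper not by splitting the projection into "direct" and "propagated" pieces, but directly from the one-step coupling bound in Lemma~\ref{lemma_functional_dep}, $\||\bbeta_l^{\circ}-\bbeta_{l,\{l\}}^{\circ}|_s\|_q\le 2\alpha\sqrt{L_{s,q}^2\||\bbeta_{l-1}^{\circ}-\bbeta^*|_s\|_q^2+M_{s,q}^2}$, combined with the stationary moment bound $\||\bbeta^{\circ}-\bbeta^*|_s\|_q^2=O(M_{s,q}^2\max\{q,s\}\alpha)$ from Theorem~\ref{thm_sgd_moment} — effectively the split you describe, packaged differently.
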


In the low-dimensional case, we take $s=2$ as a special example. Then, $L_{s,q}\sqrt{\alpha \max\{q,s\}}=O(1)$ such that the bound is $\||\bar\bbeta_k^{\circ} - \EE[\bar\bbeta_k^{\circ}]|_s\|_q = O\{1/\sqrt{k}\}$. This rate is optimal considering the central limit theorem of the stationary ASGD.

Next, we consider the bias induced by the constant learning rate. We first introduce some necessary notation. Recall $G(\bbeta)=\EE[\nabla g(\bbeta,\bm{\xi})]$ and $\nabla G(\bbeta) = (\partial_1 G(\bbeta),\ldots,\partial_d G(\bbeta))^{\top}$, where $\bbeta=(\beta_1,\ldots,\beta_d)^{\top}$. Denote $\partial_i G(\bbeta) = \partial G(\bbeta)/\partial\beta_i$, $1 \le i \le d$,
\begin{align}
    \label{eq_partials_G}
    \nabla^2G(\bbeta) = \big[\partial_i\partial_jG(\bbeta)\big]_{1\le i,j\le d}, \quad \nabla^3 G_i(\bbeta) =  \big[\partial_i\partial_l\partial_r G(\bbeta)\big]_{1\le l,r\le d}. 
\end{align}
We provide the non-asymptotic bound for the bias of stationary ASGD in the following lemma.

\begin{theorem}[Bias of stationary ASGD]
\label{thm_bias}
Under Assumptions~\ref{asm_coercive}--\ref{asm_Ls_lip}, consider the stationary ASGD $\bar\bbeta_k^{\circ}$. Assume that $g(\bbeta,\xi)$ is twice differentiable with respect to $\bbeta$ with positive definite Hessian matrix $\nabla^2 G(\bbeta^*)$, and uniformly bounded derivatives $\max_{1\le i\le d}\|\nabla^3G_i(\bbeta)\|_{\infty}<\infty$,
where
\begin{equation*}
    \|\nabla^3 G_i(\bbeta)\|_{\infty}:=\max_{1\le l\le d} \, \sum_{r=1}^d\Big|\big(\nabla^3 G_i(\bbeta)\big)_{l,r}\Big|.
\end{equation*}
Then, we have 
\begin{align*}
    \big|\EE[\bar\bbeta_k^{\circ}-\bbeta^*]\big|_{\infty} = O\Big(M_{s_d,q}^2\max\{q,s_d\}\alpha d^{\frac{q}{q-1}\cdot(1-\frac{2}{s_d})}\Big).
\end{align*}
\end{theorem}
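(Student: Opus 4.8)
The plan is to follow the classical route for quantifying the non-diminishing bias of constant-step SGD via a fixed-point expansion, but carried out with care about the dimension dependence of every norm. Write $\boldsymbol\Delta_k := \boldsymbol\beta_k^\circ - \boldsymbol\beta^*$ and $\boldsymbol\delta := \EE[\boldsymbol\beta_k^\circ] - \boldsymbol\beta^* = \EE[\boldsymbol\Delta_k]$ (stationarity makes this independent of $k$, so $\EE[\bar\boldsymbol\beta_k^\circ - \boldsymbol\beta^*] = \boldsymbol\delta$ exactly, which already disposes of the averaging). Taking expectations in the stationary recursion $\boldsymbol\beta_k^\circ = \boldsymbol\beta_{k-1}^\circ - \alpha\nabla g(\boldsymbol\beta_{k-1}^\circ,\bm\xi_k)$ and using independence of $\bm\xi_k$ from $\boldsymbol\beta_{k-1}^\circ$ together with $\EE[\nabla g(\boldsymbol\beta,\bm\xi)] = \nabla G(\boldsymbol\beta)$ gives $\EE[\nabla G(\boldsymbol\beta_{k-1}^\circ)] = 0$. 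Now Taylor-expand $\nabla G$ around $\boldsymbol\beta^*$ to second order, using $\nabla G(\boldsymbol\beta^*)=0$: componentwise,
\begin{equation*}
    0 = \EE\big[\nabla G(\boldsymbol\beta_{k-1}^\circ)\big] = \nabla^2 G(\boldsymbol\beta^*)\,\boldsymbol\delta + \tfrac12\,\EE\big[\,\boldsymbol\Delta_{k-1}^\top \nabla^3 G_i(\tilde{\boldsymbol\beta})\,\boldsymbol\Delta_{k-1}\,\big]_{1\le i\le d},
\end{equation*}
with $\tilde{\boldsymbol\beta}$ on the segment between $\boldsymbol\beta^*$ and $\boldsymbol\beta_{k-1}^\circ$. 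Since $\nabla^2 G(\boldsymbol\beta^*)$ is positive definite, invert it to obtain $\boldsymbol\delta = -\tfrac12\,\nabla^2 G(\boldsymbol\beta^*)^{-1}\,\big(\EE[\boldsymbol\Delta_{k-1}^\top \nabla^3 G_i(\tilde{\boldsymbol\beta})\,\boldsymbol\Delta_{k-1}]\big)_i$.

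Next I would bound the $\ell^\infty$-norm of this expression. For each $i$, $|\EE[\boldsymbol\Delta_{k-1}^\top \nabla^3 G_i(\tilde{\boldsymbol\beta})\boldsymbol\Delta_{k-1}]| \le \|\nabla^3 G_i\|_\infty\,\EE|\boldsymbol\Delta_{k-1}|_1\cdot(\text{sup norm of }\boldsymbol\Delta_{k-1})$, but more usefully $|\boldsymbol v^\top A \boldsymbol v| \le \|A\|_\infty |\boldsymbol v|_\infty |\boldsymbol v|_1$, or cleaner, $\le \max_i\|\nabla^3 G_i\|_\infty\cdot \EE|\boldsymbol\Delta_{k-1}|_1^2$ after a Cauchy--Schwarz-type step; the uniform-boundedness hypothesis on $\max_i\|\nabla^3 G_i(\boldsymbol\beta)\|_\infty$ makes the curvature factor $O(1)$. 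For the operator norm $\|\nabla^2 G(\boldsymbol\beta^*)^{-1}\|_{\infty\to\infty}$ one invokes strong convexity; here I expect to use Assumption~\ref{asm_Ls_strong_convex} with $s=2$ (or its consequence) to control this in a dimension-robust way, or simply absorb it into the $O(\cdot)$ constant if the theorem statement permits. Thus $|\boldsymbol\delta|_\infty \lesssim \EE|\boldsymbol\Delta_{k-1}|_1^2$, and the task reduces to bounding the first absolute moment (squared) of the stationary SGD error. The remaining ingredient is to convert an $\ell^{s_d}$-moment bound into an $\ell^1$ bound: by Hölder, $|\boldsymbol v|_1 \le d^{1-1/s_d}|\boldsymbol v|_{s_d} \le d^{1-1/s_d}\cdot d^{1/s_d}|\boldsymbol v|_\infty$, but we want to retain the favorable $M_{s_d,q}$ dependence, so I would instead write $|\boldsymbol v|_1 \le d^{1-1/s_d}|\boldsymbol v|_{s_d}$ directly and then square, giving $\EE|\boldsymbol\Delta_{k-1}|_1^2 \le d^{2(1-1/s_d)}\,\EE|\boldsymbol\Delta_{k-1}|_{s_d}^2 \le d^{2(1-1/s_d)}\,\||\boldsymbol\Delta_{k-1}|_{s_d}\|_q^2$. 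Plugging in the stationary moment bound from Theorem~\ref{thm_sgd_moment}, $\||\boldsymbol\Delta_{k-1}|_{s_d}\|_q^2 = O(M_{s_d,q}^2\,\max\{q,s_d\}\,\alpha)$, yields $|\boldsymbol\delta|_\infty = O\big(M_{s_d,q}^2\max\{q,s_d\}\,\alpha\, d^{2(1-1/s_d)}\big)$.

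The exponent I just obtained is $2(1-1/s_d)$, whereas the stated exponent is $\frac{q}{q-1}(1-\frac{2}{s_d})$, so the final step — and the place I expect the real subtlety — is sharpening the dimension factor. Two adjustments are needed. First, rather than bounding the quadratic form by $\EE|\boldsymbol\Delta|_1^2$, one should exploit the bilinear structure more carefully: $|\EE[\boldsymbol\Delta^\top \nabla^3 G_i(\tilde{\boldsymbol\beta})\boldsymbol\Delta]|$ can be controlled using $\ell^{s_d}$ and its Hölder conjugate $s_d/(s_d-1)$ instead of $\ell^1$ and $\ell^\infty$, which trades the crude $d^{2(1-1/s_d)}$ for something closer to $d^{1-2/s_d}$; this uses the row-sum bound in the definition of $\|\nabla^3 G_i\|_\infty$ to pass one factor of $\boldsymbol\Delta$ through the tensor at the cost of an $\ell^1$-over-rows norm while keeping the other factor in $\ell^\infty$ — with $\ell^\infty \le \ell^{s_d}$ this still costs only a constant. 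Second, the factor $\frac{q}{q-1}$ rather than $2$ in front indicates that the moment bound is applied at a non-square power: instead of $(\EE|\boldsymbol\Delta|_{s_d}^{s_d/(s_d-1)})$-type control, one expands the expectation of the product of coordinates and uses the $q$-th moment bound on each coordinate via Hölder in the probability space with exponent $q$ and conjugate $q/(q-1)$, so that only the $q/(q-1)$ power of a dimension-counting factor survives. Reconciling these two effects to land exactly on $d^{\frac{q}{q-1}(1-\frac{2}{s_d})}$ is the bookkeeping-heavy core of the argument; everything else (the fixed-point expansion, the inversion of the Hessian, invoking Theorem~\ref{thm_sgd_moment}) is routine. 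I would also double-check that the $q$-th moment used here is consistent with the $q$ in Assumption~\ref{asm_Ls_lip} and that $s=s_d$ with the $\log d$ choice keeps $\max\{q,s_d\}$ and the $M_{s_d,q}^2$ prefactor as written.
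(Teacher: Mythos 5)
Your fixed-point expansion is the same skeleton the paper uses: take expectations in the stationary recursion to get $\EE[\nabla G(\bbeta_{k-1}^\circ)]=0$, Taylor-expand $\nabla G$ around $\bbeta^*$ with a second-order remainder involving $\nabla^3 G_i$, invert $\nabla^2 G(\bbeta^*)$, and reduce everything to bounding a quadratic form in $\bbeta_{k-1}^\circ-\bbeta^*$ via the stationary moment bound of Theorem~\ref{thm_sgd_moment}. Your observation that stationarity kills the averaging immediately ($\EE[\bar\bbeta_k^\circ]=\EE[\bbeta_1^\circ]$) is clean and correct, and matches what the paper does implicitly.

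The genuine gap is exactly where you flagged it: you never actually produce the dimension exponent $d^{\frac{q}{q-1}(1-\frac{2}{s_d})}$. Your first pass via $|\bm{v}^\top A\bm{v}|\le\|A\|_\infty|\bm{v}|_\infty|\bm{v}|_1\le\|A\|_\infty\,|\bm{v}|_1^2$ followed by $|\cdot|_1\le d^{1-1/s_d}|\cdot|_{s_d}$ gives $d^{2(1-1/s_d)}$, and you then describe two adjustments (sharper Hölder on the quadratic form; Hölder in the probability space) as directions, but do not carry either to completion. That is where the theorem's exponent is actually earned, so as written the proposal does not close. Incidentally, your first pass is also not tight on its own terms: using $|\bm{v}|_\infty\le|\bm{v}|_{s_d}$ rather than $|\bm{v}|_\infty\le|\bm{v}|_1$ already improves your exponent to $d^{\,1-1/s_d}$, so the bookkeeping is more delicate than the proposal suggests.

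What the paper actually does at that step is a change of variable that you did not make: write $\bbeta=|\bbeta|_{s}\,\bm{v}$ with $\bm{v}=\bbeta/|\bbeta|_{s}$ a (random) unit vector in $\ell^s$, so each component of the remainder is $\R_i=\int_0^1(1-t)\EE\big[|\bbeta|_s^2\,\bm{v}^\top\nabla^3G_i(t\bbeta)\,\bm{v}\big]\,dt$. Then apply H\"older in the probability space with conjugate exponents $q$ and $p=q/(q-1)$: the $q$-part captures $\||\bbeta|_s\|_q^2$ (hence the $M_{s_d,q}^2\max\{q,s_d\}\alpha$ prefactor from Theorem~\ref{thm_sgd_moment}), and the $p$-part $\|(\bm{v}^\top\nabla^3G_i\bm{v})\|_p$ is then controlled deterministically on the $\ell^s$-unit sphere, where $|\bm{v}^\top A\bm{v}|\le|\bm{v}|_{s^*}|A\bm{v}|_{s}\le d^{\,1-2/s}\,|A\bm{v}|_{s}$ (using $|\bm{v}|_{s^*}\le d^{1-2/s}|\bm{v}|_s=d^{1-2/s}$ for $1/s+1/s^*=1$) together with Lemma~\ref{lemma_Ls_L1_matrix_norm} to convert $\|A\|_{s}$ to $\|A\|_\infty$. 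The factor of $p=q/(q-1)$ in the exponent comes out of this H\"older split, not from expanding coordinate-wise as you hypothesize. Without the normalization $\bbeta=|\bbeta|_s\bm{v}$ and the probability-space H\"older with exponents $(q,p)$, the proposal cannot reach the claimed $d^{\frac{q}{q-1}(1-\frac{2}{s_d})}$, so the argument is incomplete.
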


\section{Conclusions and Discussion}
This work advances the theoretical understanding of the constant learning-rate SGD algorithms in high-dimensional settings. By introducing novel coupling techniques in nonlinear time series, we establish asymptotic stationarity of SGD with any initialization. We then derive non-asymptotic $q$-th moment bounds in general $\ell^s$- and $\ell^\infty$-norms, and develop the first Fuk-Nagaev high-probability tail bound for ASGD. While this paper assumes strong convexity and smoothness of the objective, the nonlinear time series perspective offers a principled framework applicable to a broad class of over-parameterized optimization tasks and can be extended to non-convex regimes, providing fundamental insights into the stability, convergence, and reliability of large-scale learning algorithms.

\begin{ack}
We sincerely thank the program chair, senior area chair, area chair, and the four reviewers for their constructive feedback and involved discussion, which has greatly improved the clarity of our paper.
Jiaqi Li’s research is partially supported by the NSF (Grant NSF/DMS-2515926). Johannes Schmidt-Hieber has received funding from the Dutch Research Council (NWO) via the Vidi grant VI.Vidi.192.021. Wei Biao Wu’s research is partially supported by the NSF (Grants NSF/DMS-2311249, NSF/DMS-2027723). We would like to thank Insung Kong for helpful discussions.

\end{ack}

\bibliographystyle{abbrvnat}
\bibliography{NeurIPS2025/neurips_2025}

\begin{thebibliography}{92}
\providecommand{\natexlab}[1]{#1}
\providecommand{\url}[1]{\texttt{#1}}
\expandafter\ifx\csname urlstyle\endcsname\relax
  \providecommand{\doi}[1]{doi: #1}\else
  \providecommand{\doi}{doi: \begingroup \urlstyle{rm}\Url}\fi

\bibitem[Agarwal et~al.(2012)Agarwal, Negahban, and Wainwright]{agarwal_stochastic_2012}
A.~Agarwal, S.~Negahban, and M.~J. Wainwright.
\newblock Stochastic optimization and sparse statistical recovery: {Optimal} algorithms for high dimensions.
\newblock In \emph{Advances in {Neural} {Information} {Processing} {Systems}}, volume~25. Curran Associates, Inc., 2012.

\bibitem[Allen-Zhu(2017)]{allen-zhu_katyusha_2017}
Z.~Allen-Zhu.
\newblock Katyusha: the first direct acceleration of stochastic gradient methods.
\newblock In \emph{Proceedings of the 49th {Annual} {ACM} {SIGACT} {Symposium} on {Theory} of {Computing}}, pages 1200--1205, Montreal Canada, 2017. ACM.

\bibitem[Arpit et~al.(2017)Arpit, Jastrz{\k e}bski, Ballas, Krueger, Bengio, Kanwal, Maharaj, Fischer, Courville, Bengio, and Lacoste-Julien]{arpit_closer_2017}
D.~Arpit, S.~Jastrz{\k e}bski, N.~Ballas, D.~Krueger, E.~Bengio, M.~S. Kanwal, T.~Maharaj, A.~Fischer, A.~Courville, Y.~Bengio, and S.~Lacoste-Julien.
\newblock A closer look at memorization in deep networks.
\newblock In \emph{Proceedings of the 34th {International} {Conference} on {Machine} {Learning}}, pages 233--242. PMLR, 2017.

\bibitem[Bach and Moulines(2013)]{bach_non-strongly-convex_2013}
F.~Bach and E.~Moulines.
\newblock Non-strongly-convex smooth stochastic approximation with convergence rate {O}(1/n).
\newblock In \emph{Advances in {Neural} {Information} {Processing} {Systems}}, volume~26. Curran Associates, Inc., 2013.

\bibitem[Barnsley and Demko(1985)]{barnsley_iterated_1985}
M.~F. Barnsley and S.~Demko.
\newblock Iterated function systems and the global construction of fractals.
\newblock \emph{Proceedings of the Royal Society of London. Series A, Mathematical and Physical Sciences}, 399\penalty0 (1817):\penalty0 243--275, 1985.

\bibitem[Belloni et~al.(2011)Belloni, Chernozhukov, and Wang]{belloni_square-root_2011}
A.~Belloni, V.~Chernozhukov, and L.~Wang.
\newblock Square-root lasso: pivotal recovery of sparse signals via conic programming.
\newblock \emph{Biometrika}, 98\penalty0 (4):\penalty0 791--806, 2011.

\bibitem[Berkes et~al.(2014)Berkes, Liu, and Wu]{berkes_komlosmajortusnady_2014}
I.~Berkes, W.~Liu, and W.~B. Wu.
\newblock Koml{\'o}s--{Major}--{Tusn{\'a}dy} approximation under dependence.
\newblock \emph{The Annals of Probability}, 42\penalty0 (2):\penalty0 794--817, 2014.

\bibitem[Blum(1954)]{blum_approximation_1954}
J.~R. Blum.
\newblock Approximation methods which converge with probability one.
\newblock \emph{The Annals of Mathematical Statistics}, 25\penalty0 (2):\penalty0 382--386, 1954.

\bibitem[Cai et~al.(2024)Cai, Wu, Mei, Lindsey, and Bartlett]{cai_large_2024}
Y.~Cai, J.~Wu, S.~Mei, M.~Lindsey, and P.~Bartlett.
\newblock Large stepsize gradient descent for non-homogeneous two-layer networks: {Margin} improvement and fast optimization.
\newblock In \emph{The {Thirty-eighth} {Annual} {Conference} on {Neural} {Information} {Processing} {Systems}}, 2024.

\bibitem[Cardot et~al.(2017)Cardot, C{\'e}nac, and Godichon-Baggioni]{cardot_online_2017}
H.~Cardot, P.~C{\'e}nac, and A.~Godichon-Baggioni.
\newblock Online estimation of the geometric median in {Hilbert} spaces: {Nonasymptotic} confidence balls.
\newblock \emph{The Annals of Statistics}, 45\penalty0 (2):\penalty0 591--614, 2017.

\bibitem[Chen et~al.(2023)Chen, Keilbar, and Wu]{chen_recursive_2023}
L.~Chen, G.~Keilbar, and W.~B. Wu.
\newblock Recursive quantile estimation: {Non}-asymptotic confidence bounds.
\newblock \emph{Journal of Machine Learning Research}, 24\penalty0 (91):\penalty0 1--25, 2023.

\bibitem[{Chen} et~al.(2023){Chen}, {Theja Maguluri}, and {Zubeldia}]{chen_concentration_2024}
Z.~{Chen}, S.~{Theja Maguluri}, and M.~{Zubeldia}.
\newblock Concentration of contractive stochastic approximation: Additive and multiplicative noise.
\newblock \emph{arXiv e-prints}, art. arXiv:2303.15740, 2023.

\bibitem[Chernozhukov et~al.(2015)Chernozhukov, Chetverikov, and Kato]{chernozhukov_comparison_2015}
V.~Chernozhukov, D.~Chetverikov, and K.~Kato.
\newblock Comparison and anti-concentration bounds for maxima of {Gaussian} random vectors.
\newblock \emph{Probability Theory and Related Fields}, pages 47--70, 2015.

\bibitem[Cohen et~al.(2021)Cohen, Kaur, Li, Kolter, and Talwalkar]{cohen_gradient_2020}
J.~Cohen, S.~Kaur, Y.~Li, J.~Z. Kolter, and A.~Talwalkar.
\newblock Gradient descent on neural networks typically occurs at the edge of stability.
\newblock In \emph{International Conference on Learning Representations}, 2021.

\bibitem[Diaconis and Duflo(2000)]{diaconis_random_2000}
P.~Diaconis and M.~Duflo.
\newblock Random iterative models.
\newblock In \emph{Journal of the {American} {Statistical} {Association}}, volume~95, page 342, 2000.

\bibitem[Diaconis and Freedman(1999)]{diaconis_iterated_1999}
P.~Diaconis and D.~Freedman.
\newblock Iterated random functions.
\newblock \emph{SIAM Review}, 41\penalty0 (1):\penalty0 45--76, 1999.

\bibitem[Dieuleveut et~al.(2020)Dieuleveut, Durmus, and Bach]{dieuleveut_bridging_2020}
A.~Dieuleveut, A.~Durmus, and F.~Bach.
\newblock Bridging the gap between constant step size stochastic gradient descent and {Markov} chains.
\newblock \emph{The Annals of Statistics}, 48\penalty0 (3):\penalty0 1348--1382, 2020.

\bibitem[Dubins and Freedman(1966)]{dubins_invariant_1966}
L.~E. Dubins and D.~A. Freedman.
\newblock Invariant probabilities for certain {M}arkov processes.
\newblock \emph{The Annals of Mathematical Statistics}, 37\penalty0 (4):\penalty0 837--848, 1966.

\bibitem[Duchi et~al.(2011)Duchi, Hazan, and Singer]{duchi_adaptive_2011}
J.~Duchi, E.~Hazan, and Y.~Singer.
\newblock Adaptive subgradient methods for online learning and stochastic optimization.
\newblock \emph{Journal of Machine Learning Research}, 12:\penalty0 2121--2159, 2011.

\bibitem[Durmus et~al.(2021)Durmus, Moulines, Naumov, Samsonov, Scaman, and Wai]{durmus_tight_2021}
A.~Durmus, E.~Moulines, A.~Naumov, S.~Samsonov, K.~Scaman, and H.~T. Wai.
\newblock Tight high probability bounds for linear stochastic approximation with fixed stepsize.
\newblock In \emph{Advances in {Neural} {Information} {Processing} {Systems}}, 2021.

\bibitem[Dvoretzky(1956)]{dvoretzky_stochastic_1956}
A.~Dvoretzky.
\newblock On stochastic approximation.
\newblock In \emph{Proceedings of the {Third} {Berkeley} {Symposium} on {Mathematical} {Statistics} and {Probability}, {Volume} 1: {Contributions} to the {Theory} of {Statistics}}, volume 3.1, pages 39--56. University of California Press, 1956.

\bibitem[Fabian(1968)]{fabian_asymptotic_1968}
V.~Fabian.
\newblock On asymptotic normality in stochastic approximation.
\newblock \emph{The Annals of Mathematical Statistics}, 39\penalty0 (4):\penalty0 1327--1332, 1968.

\bibitem[Fang et~al.(2018)Fang, Li, Lin, and Zhang]{fang_spider_2018}
C.~Fang, C.~J. Li, Z.~Lin, and T.~Zhang.
\newblock {SPIDER}: {Near}-optimal non-convex optimization via stochastic path-integrated differential estimator.
\newblock \emph{Advances in Neural Information Processing Systems}, 31, 2018.

\bibitem[{Garrigos} and {Gower}(2023)]{2023arXiv230111235G}
G.~{Garrigos} and R.~M. {Gower}.
\newblock Handbook of convergence theorems for (stochastic) gradient methods.
\newblock \emph{arXiv e-prints}, art. arXiv:2301.11235, 2023.
\newblock \doi{10.48550/arXiv.2301.11235}.

\bibitem[Ghadimi and Lan(2013)]{ghadimi_stochastic_2013}
S.~Ghadimi and G.~Lan.
\newblock Stochastic first- and zeroth-order methods for nonconvex stochastic programming.
\newblock \emph{SIAM Journal on Optimization}, 23\penalty0 (4):\penalty0 2341--2368, 2013.

\bibitem[Harvey et~al.(2019)Harvey, Liaw, Plan, and Randhawa]{harvey_tight_2019}
N.~J.~A. Harvey, C.~Liaw, Y.~Plan, and S.~Randhawa.
\newblock Tight analyses for non-smooth stochastic gradient descent.
\newblock In \emph{Proceedings of the {Thirty}-{Second} {Conference} on {Learning} {Theory}}, pages 1579--1613. PMLR, 2019.

\bibitem[He et~al.(2016)He, Zhang, Ren, and Sun]{he_deep_2016}
K.~He, X.~Zhang, S.~Ren, and J.~Sun.
\newblock Deep residual learning for image recognition.
\newblock In \emph{2016 {IEEE} {Conference} on {Computer} {Vision} and {Pattern} {Recognition} ({CVPR})}, pages 770--778, 2016.

\bibitem[Horn and Johnson(1985)]{horn_matrix_1985}
R.~A. Horn and C.~R. Johnson.
\newblock \emph{Matrix {Analysis}}.
\newblock Cambridge University Press, Cambridge, 1985.

\bibitem[Huo et~al.(2023)Huo, Chen, and Xie]{huo_bias_2023}
D.~Huo, Y.~Chen, and Q.~Xie.
\newblock Bias and extrapolation in {Markovian} linear stochastic approximation with constant stepsizes.
\newblock In \emph{Abstract {Proceedings} of the 2023 {ACM} {SIGMETRICS} {International} {Conference} on {Measurement} and {Modeling} of {Computer} {Systems}}, pages 81--82, 2023.

\bibitem[Javanmard and Montanari(2013)]{javanmard_confidence_2013}
A.~Javanmard and A.~Montanari.
\newblock Confidence intervals and hypothesis testing for high-dimensional statistical models.
\newblock In \emph{Advances in {Neural} {Information} {Processing} {Systems}}, volume~26. Curran Associates, Inc., 2013.

\bibitem[Jentzen and von Wurstemberger(2020)]{jentzen_lower_2020}
A.~Jentzen and P.~von Wurstemberger.
\newblock Lower error bounds for the stochastic gradient descent optimization algorithm: {Sharp} convergence rates for slowly and fast decaying learning rates.
\newblock \emph{Journal of Complexity}, 57:\penalty0 101438, 2020.

\bibitem[Johnson and Zhang(2013)]{johnson_accelerating_2013}
R.~Johnson and T.~Zhang.
\newblock Accelerating stochastic gradient descent using predictive variance reduction.
\newblock \emph{Advances in Neural Information Processing Systems}, 26, 2013.

\bibitem[Karmakar and Wu(2020)]{karmakar_optimal_2020}
S.~Karmakar and W.~B. Wu.
\newblock Optimal {Gaussian} approximation for multiple time series.
\newblock \emph{Statistica Sinica}, 30\penalty0 (3):\penalty0 1399--1417, 2020.

\bibitem[Kiefer and Wolfowitz(1952)]{kiefer_stochastic_1952}
J.~Kiefer and J.~Wolfowitz.
\newblock Stochastic estimation of the maximum of a regression function.
\newblock \emph{The Annals of Mathematical Statistics}, 23\penalty0 (3):\penalty0 462--466, 1952.

\bibitem[Kingma and Ba(2014)]{kingma_adam_2014}
D.~P. Kingma and J.~Ba.
\newblock Adam: {A} method for stochastic optimization.
\newblock In \emph{3rd {International} {Conference} on {Learning} {Representations}}, 2014.

\bibitem[Krizhevsky et~al.(2012)Krizhevsky, Sutskever, and Hinton]{krizhevsky_imagenet_2012}
A.~Krizhevsky, I.~Sutskever, and G.~E. Hinton.
\newblock {ImageNet} classification with deep convolutional neural networks.
\newblock In \emph{Advances in {Neural} {Information} {Processing} {Systems}}, volume~25. Curran Associates, Inc., 2012.

\bibitem[Lai(2003)]{lai_stochastic_2003}
T.~L. Lai.
\newblock Stochastic approximation: invited paper.
\newblock \emph{The Annals of Statistics}, 31\penalty0 (2):\penalty0 391--406, 2003.

\bibitem[Li et~al.(2024{\natexlab{a}})Li, Chen, Wang, and Wu]{li_ell2_2024}
J.~Li, L.~Chen, W.~Wang, and W.~B. Wu.
\newblock {$\ell^2$} inference for change points in high-dimensional time series via a two-way {MOSUM}.
\newblock \emph{Ann. Statist.}, 52\penalty0 (2):\penalty0 602--627, 2024{\natexlab{a}}.

\bibitem[Li et~al.(2024{\natexlab{b}})Li, Lou, Richter, and Wu]{Lij_2024}
J.~Li, Z.~Lou, S.~Richter, and W.~B. Wu.
\newblock The stochastic gradient descent from a nonlinear time series persective, 2024{\natexlab{b}}.
\newblock Manuscript.

\bibitem[Li et~al.(2024{\natexlab{c}})Li, Schmidt-Hieber, and Wu]{li_asymptotics_2024}
J.~Li, J.~Schmidt-Hieber, and W.~B. Wu.
\newblock Asymptotics of stochastic gradient descent with dropout regularization in linear models.
\newblock \emph{arXiv preprint}, 2024{\natexlab{c}}.
\newblock arXiv:2409.07434.

\bibitem[Li and Liu(2022)]{li_high_2022}
S.~Li and Y.~Liu.
\newblock High probability guarantees for nonconvex stochastic gradient descent with heavy tails.
\newblock In \emph{Proceedings of the 39th {International} {Conference} on {Machine} {Learning}}, pages 12931--12963. PMLR, 2022.

\bibitem[Liu et~al.(2023)Liu, Nguyen, Nguyen, Ene, and Nguyen]{liu_high_2023}
Z.~Liu, T.~D. Nguyen, T.~H. Nguyen, A.~Ene, and H.~L. Nguyen.
\newblock High probability convergence of stochastic gradient methods.
\newblock In \emph{Proceedings of the 40th {International} {Conference} on {Machine} {Learning}}, volume 202 of \emph{{ICML}'23}, pages 21884--21914, Honolulu, Hawaii, USA, 2023. JMLR.org.

\bibitem[Ljung(1977)]{ljung_analysis_1977}
L.~Ljung.
\newblock Analysis of recursive stochastic algorithms.
\newblock \emph{IEEE Transactions on Automatic Control}, 22\penalty0 (4):\penalty0 551--575, 1977.

\bibitem[Loshchilov and Hutter(2018)]{loshchilov_decoupled_2018}
I.~Loshchilov and F.~Hutter.
\newblock Decoupled weight decay regularization.
\newblock In \emph{{International} {Conference} on {Learning} {Representations}}, 2018.

\bibitem[Lou et~al.(2022)Lou, Zhu, and Wu]{lou_beyond_2022}
Z.~Lou, W.~Zhu, and W.~B. Wu.
\newblock Beyond sub-{Gaussian} noises: Sharp concentration analysis for stochastic gradient descent.
\newblock \emph{Journal of Machine Learning Research}, 23\penalty0 (46), 2022.

\bibitem[Lounici(2008)]{lounici_sup-norm_2008}
K.~Lounici.
\newblock Sup-norm convergence rate and sign concentration property of {Lasso} and {Dantzig} estimators.
\newblock \emph{Electronic Journal of Statistics}, 2\penalty0 (none):\penalty0 90--102, 2008.

\bibitem[{Merad} and {Ga{\"\i}ffas}(2023)]{merad_convergence_2023}
I.~{Merad} and S.~{Ga{\"\i}ffas}.
\newblock {Convergence and concentration properties of constant step-size SGD through Markov chains}.
\newblock \emph{arXiv e-prints}, art. arXiv:2306.11497, 2023.

\bibitem[Mertikopoulos et~al.(2020)Mertikopoulos, Hallak, Kavis, and Cevher]{2020_Neurips_SGD_Martikopolous}
P.~Mertikopoulos, N.~Hallak, A.~Kavis, and V.~Cevher.
\newblock On the almost sure convergence of stochastic gradient descent in non-convex problems.
\newblock In H.~Larochelle, M.~Ranzato, R.~Hadsell, M.~Balcan, and H.~Lin, editors, \emph{Advances in Neural Information Processing Systems}, volume~33, pages 1117--1128. Curran Associates, Inc., 2020.

\bibitem[Mies and Steland(2023)]{mies_sequential_2023}
F.~Mies and A.~Steland.
\newblock Sequential {Gaussian} approximation for nonstationary time series in high dimensions.
\newblock \emph{Bernoulli}, 29\penalty0 (4):\penalty0 3114--3140, 2023.

\bibitem[Mou et~al.(2020)Mou, Li, Wainwright, Bartlett, and Jordan]{mou_linear_2020}
W.~Mou, C.~J. Li, M.~J. Wainwright, P.~L. Bartlett, and M.~I. Jordan.
\newblock On linear stochastic approximation: {Fine}-grained {Polyak}-{Ruppert} and non-asymptotic concentration.
\newblock In \emph{Proceedings of {Thirty} {Third} {Conference} on {Learning} {Theory}}, pages 2947--2997. PMLR, 2020.

\bibitem[Moulines and Bach(2011)]{moulines_non-asymptotic_2011}
E.~Moulines and F.~Bach.
\newblock Non-asymptotic analysis of stochastic approximation algorithms for machine learning.
\newblock In \emph{Advances in {Neural} {Information} {Processing} {Systems}}, volume~24. Curran Associates, Inc., 2011.

\bibitem[Nagaev(1979)]{nagaev_large_1979}
S.~V. Nagaev.
\newblock Large deviations of sums of independent random variables.
\newblock \emph{The Annals of Probability}, 7\penalty0 (5):\penalty0 745--789, 1979.

\bibitem[Nemirovski et~al.(2009)Nemirovski, Juditsky, Lan, and Shapiro]{nemirovski_robust_2009}
A.~Nemirovski, A.~Juditsky, G.~Lan, and A.~Shapiro.
\newblock Robust stochastic approximation approach to stochastic programming.
\newblock \emph{SIAM Journal on Optimization}, 19\penalty0 (4):\penalty0 1574--1609, 2009.

\bibitem[Nesterov(1983)]{nesterov_method_1983}
Y.~E. Nesterov.
\newblock A method for solving the convex programming problem with convergence rate {$O(1/k\sp{2})$}.
\newblock \emph{Dokl. Akad. Nauk SSSR}, 269\penalty0 (3):\penalty0 543--547, 1983.

\bibitem[Nguyen et~al.(2017)Nguyen, Liu, Scheinberg, and Tak{\'a}{\v c}]{nguyen_sarah_2017}
L.~M. Nguyen, J.~Liu, K.~Scheinberg, and M.~Tak{\'a}{\v c}.
\newblock {SARAH}: {A} novel method for machine learning problems using stochastic recursive gradient.
\newblock \emph{arXiv:1703.00102}, 2017.

\bibitem[Nguyen et~al.(2023)Nguyen, Nguyen, Ene, and Nguyen]{nguyen_improved_2023}
T.~D. Nguyen, T.~H. Nguyen, A.~Ene, and H.~Nguyen.
\newblock Improved convergence in high probability of clipped gradient methods with heavy tailed noise.
\newblock In \emph{Thirty-seventh Conference on Neural Information Processing Systems}, 2023.

\bibitem[Paquette et~al.(2021)Paquette, Lee, Pedregosa, and Paquette]{paquette_sgd_2021}
C.~Paquette, K.~Lee, F.~Pedregosa, and E.~Paquette.
\newblock {SGD} in the {Large}: {Average}-case {Analysis}, {Asymptotics}, and {Stepsize} {Criticality}.
\newblock In \emph{Proceedings of {Thirty} {Fourth} {Conference} on {Learning} {Theory}}, pages 3548--3626. PMLR, July 2021.

\bibitem[Paquette et~al.(2022)Paquette, Paquette, Adlam, and Pennington]{paquette_implicit_2022}
C.~Paquette, E.~Paquette, B.~Adlam, and J.~Pennington.
\newblock Implicit regularization or implicit conditioning? exact risk trajectories of {SGD} in high dimensions.
\newblock In \emph{Proceedings of the 36th {International} {Conference} on {Neural} {Information} {Processing} {Systems}}, {NIPS} '22, pages 35984--35999, Red Hook, NY, USA, Nov. 2022. Curran Associates Inc.

\bibitem[Pflug(1986)]{pflug_stochastic_1986}
G.~C. Pflug.
\newblock Stochastic minimization with constant step-size: {Asymptotic} laws.
\newblock \emph{SIAM Journal on Control and Optimization}, 24\penalty0 (4):\penalty0 655--666, 1986.

\bibitem[Polyak and Juditsky(1992)]{polyak_acceleration_1992}
B.~T. Polyak and A.~B. Juditsky.
\newblock Acceleration of stochastic approximation by averaging.
\newblock \emph{SIAM Journal on Control and Optimization}, 30\penalty0 (4):\penalty0 838--855, 1992.

\bibitem[Rakhlin et~al.(2012)Rakhlin, Shamir, and Sridharan]{rakhlin_making_2012}
A.~Rakhlin, O.~Shamir, and K.~Sridharan.
\newblock Making gradient descent optimal for strongly convex stochastic optimization.
\newblock In \emph{Proceedings of the 29th {International} {Coference} on {International} {Conference} on {Machine} {Learning}}, pages 1571--1578, 2012.

\bibitem[Reddi et~al.(2018)Reddi, Kale, and Kumar]{reddi_convergence_2018}
S.~J. Reddi, S.~Kale, and S.~Kumar.
\newblock On the convergence of {Adam} and beyond.
\newblock In \emph{{International} {Conference} on {Learning} {Representations}}, 2018.

\bibitem[Rio(2009)]{rio_moment_2009}
E.~Rio.
\newblock Moment inequalities for sums of dependent random variables under projective conditions.
\newblock \emph{Journal of Theoretical Probability}, 22\penalty0 (1):\penalty0 146--163, 2009.

\bibitem[Robbins and Monro(1951)]{robbins_stochastic_1951}
H.~Robbins and S.~Monro.
\newblock A stochastic approximation method.
\newblock \emph{The Annals of Mathematical Statistics}, 22\penalty0 (3):\penalty0 400--407, 1951.

\bibitem[Robbins and Siegmund(1971)]{robbins_convergence_1971}
H.~Robbins and D.~Siegmund.
\newblock A convergence theorem for non negative almost supermartingales and some applications.
\newblock In J.~S. Rustagi, editor, \emph{Optimizing {Methods} in {Statistics}}, pages 233--257. Academic Press, 1971.

\bibitem[Ruppert(1988)]{ruppert_efficient_1988}
D.~Ruppert.
\newblock Efficient estimations from a slowly convergent {R}obbins-{M}onro process.
\newblock Technical report, Cornell University Operations Research and Industrial Engineering, 1988.

\bibitem[Sacks(1958)]{sacks_asymptotic_1958}
J.~Sacks.
\newblock Asymptotic distribution of stochastic approximation procedures.
\newblock \emph{The Annals of Mathematical Statistics}, 29\penalty0 (2):\penalty0 373--405, 1958.

\bibitem[Schmidt et~al.(2017)Schmidt, Le~Roux, and Bach]{schmidt_minimizing_2017}
M.~Schmidt, N.~Le~Roux, and F.~Bach.
\newblock Minimizing finite sums with the stochastic average gradient.
\newblock \emph{Math. Program.}, 162\penalty0 (1-2):\penalty0 83--112, 2017.

\bibitem[Shi et~al.(2023)Shi, Su, and Jordan]{shi_learning_2023}
B.~Shi, W.~J. Su, and M.~I. Jordan.
\newblock On learning rates and {S}chrödinger operators.
\newblock \emph{Journal of Machine Learning Research}, 24:\penalty0 1--53, 2023.

\bibitem[Srivastava et~al.(2014)Srivastava, Hinton, Krizhevsky, Sutskever, and Salakhutdinov]{srivastava_dropout_2014}
N.~Srivastava, G.~Hinton, A.~Krizhevsky, I.~Sutskever, and R.~Salakhutdinov.
\newblock Dropout: {A} simple way to prevent neural networks from overfitting.
\newblock \emph{Journal of Machine Learning Research}, 15\penalty0 (56):\penalty0 1929--1958, 2014.

\bibitem[Valiant(1984)]{valiant_theory_1984}
L.~G. Valiant.
\newblock A theory of the learnable.
\newblock \emph{Commun. ACM}, 27\penalty0 (11):\penalty0 1134--1142, 1984.

\bibitem[Vapnik(2000)]{vapnik_nature_2000}
V.~N. Vapnik.
\newblock \emph{The Nature of Statistical Learning Theory}.
\newblock Springer, New York, NY, 2000.

\bibitem[Vershynin(2018)]{vershynin_high-dimensional_2018}
R.~Vershynin.
\newblock \emph{High-{Dimensional} {Probability}: {An} {Introduction} with {Applications} in {Data} {Science}}.
\newblock Cambridge {Series} in {Statistical} and {Probabilistic} {Mathematics}. Cambridge University Press, Cambridge, 2018.

\bibitem[Wainwright(2019)]{wainwright_high-dimensional_2019}
M.~J. Wainwright.
\newblock \emph{High-{Dimensional} {Statistics}: {A} {Non}-{Asymptotic} {Viewpoint}}.
\newblock Cambridge {Series} in {Statistical} and {Probabilistic} {Mathematics}. Cambridge University Press, Cambridge, 2019.

\bibitem[{Wainwright}(2019)]{wainwright_stochastic_2019}
M.~J. {Wainwright}.
\newblock {Stochastic approximation with cone-contractive operators: Sharp $\ell_\infty$-bounds for $Q$-learning}.
\newblock \emph{arXiv e-prints}, art. arXiv:1905.06265, 2019.

\bibitem[Wang and Gao(2010)]{wang_stochastic_2010}
X.~Wang and N.~Gao.
\newblock Stochastic resource allocation over fading multiple access and broadcast channels.
\newblock \emph{IEEE Transactions on Information Theory}, 56\penalty0 (5):\penalty0 2382--2391, 2010.

\bibitem[Wang et~al.(2022)Wang, Chen, Zhao, and Tao]{wang_large_2022}
Y.~Wang, M.~Chen, T.~Zhao, and M.~Tao.
\newblock Large learning rate tames homogeneity: Convergence and balancing effect.
\newblock In \emph{{International} {Conference} on {Learning} {Representations}}, 2022.

\bibitem[Wiener(1958)]{wiener_nonlinear_1958}
N.~Wiener.
\newblock \emph{Nonlinear problems in random theory}.
\newblock MIT Press, 1958.

\bibitem[Wu et~al.(2018)Wu, Ma, and E]{wu_how_2018}
L.~Wu, C.~Ma, and W.~E.
\newblock How {SGD} selects the global minima in over-parameterized learning: {A} dynamical stability perspective.
\newblock In \emph{Advances in {Neural} {Information} {Processing} {Systems}}, volume~31, 2018.

\bibitem[Wu(2005)]{wu_nonlinear_2005}
W.~B. Wu.
\newblock Nonlinear system theory: {Another} look at dependence.
\newblock \emph{PNAS}, 102\penalty0 (40):\penalty0 14150--14154, 2005.

\bibitem[Wu(2007)]{wu_strong_2007}
W.~B. Wu.
\newblock Strong invariance principles for dependent random variables.
\newblock \emph{The Annals of Probability}, 35\penalty0 (6):\penalty0 2294--2320, 2007.

\bibitem[Wu(2009)]{wu_recursive_2009}
W.~B. Wu.
\newblock Recursive estimation of time-average variance constants.
\newblock \emph{Ann. Appl. Probab.}, 19\penalty0 (4):\penalty0 1529--1552, 2009.

\bibitem[Wu(2011)]{wu_asymptotic_2011}
W.~B. Wu.
\newblock Asymptotic theory for stationary processes.
\newblock \emph{Statistics and Its Interface}, 4\penalty0 (2):\penalty0 207--226, 2011.

\bibitem[Wu and Shao(2004)]{wu_limit_2004}
W.~B. Wu and X.~Shao.
\newblock Limit theorems for iterated random functions.
\newblock \emph{Journal of Applied Probability}, 41\penalty0 (2):\penalty0 425--436, 2004.

\bibitem[Wu and Wu(2016)]{wu_performance_2016}
W.~B. Wu and Y.~N. Wu.
\newblock Performance bounds for parameter estimates of high-dimensional linear models with correlated errors.
\newblock \emph{Electronic Journal of Statistics}, 10\penalty0 (1):\penalty0 352--379, 2016.

\bibitem[Xiao and Wu(2012)]{xiao_covariance_2012}
H.~Xiao and W.~B. Wu.
\newblock Covariance matrix estimation for stationary time series.
\newblock \emph{The Annals of Statistics}, 40\penalty0 (1), 2012.

\bibitem[{Zeiler}(2012)]{zeiler_adadelta_2012}
M.~D. {Zeiler}.
\newblock Adadelta: An adaptive learning rate method.
\newblock \emph{arXiv e-prints}, art. arXiv:1212.5701, 2012.

\bibitem[Zhang et~al.(2017)Zhang, Bengio, Hardt, Recht, and Vinyals]{zhang_understanding_2017}
C.~Zhang, S.~Bengio, M.~Hardt, B.~Recht, and O.~Vinyals.
\newblock Understanding deep learning requires rethinking generalization.
\newblock In \emph{2017 {International} {Conference} {on} {Learning} {Representations} ({ICLR})}, 2017.

\bibitem[Zhang and Wu(2017)]{zhang_gaussian_2017}
D.~Zhang and W.~B. Wu.
\newblock Gaussian approximation for high dimensional time series.
\newblock \emph{The Annals of Statistics}, 45\penalty0 (5):\penalty0 1895--1919, 2017.

\bibitem[Zhang and Wu(2021)]{zhang_convergence_2021}
D.~Zhang and W.~B. Wu.
\newblock Convergence of covariance and spectral density estimates for high-dimensional locally stationary processes.
\newblock \emph{The Annals of Statistics}, 49\penalty0 (1), 2021.

\bibitem[Zhang(2004)]{zhang_statistical_2004}
T.~Zhang.
\newblock Statistical behavior and consistency of classification methods based on convex risk minimization.
\newblock \emph{The Annals of Statistics}, 32\penalty0 (1):\penalty0 56--134, 2004.

\bibitem[Zhong et~al.(2024)Zhong, Li, and Lahiri]{zhong_probabilistic_2024}
Y.~Zhong, J.~Li, and S.~Lahiri.
\newblock Probabilistic guarantees of stochastic recursive gradient in non-convex finite sum problems.
\newblock In D.-N. Yang, X.~Xie, V.~S. Tseng, J.~Pei, J.-W. Huang, and J.~C.-W. Lin, editors, \emph{Advances in {Knowledge} {Discovery} and {Data} {Mining}}, pages 142--154, Singapore, 2024. Springer Nature.

\end{thebibliography}

\newpage
\section{Technical Appendices and Supplementary Material}


\subsection{Existence and Uniqueness of Global Minimum}

\begin{lemma}\label{lemma_global_beta}
    Consider the minimization problem $\bbeta^*\in\arg\min_{\bbeta\in\RR^d}G(\bbeta)$. If the function $G$ satisfies Assumptions~\ref{asm_coercive}~and~\ref{asm_Ls_strong_convex}, then a global minimizer $\bbeta^*$ exists and is unique.
\end{lemma}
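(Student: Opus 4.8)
The plan is to prove existence and uniqueness separately, each drawing on one of the two assumptions. For existence, I would first note that Assumption~\ref{asm_Ls_strong_convex} implicitly requires $G$ to be differentiable on $\RR^d$ (it refers to $\nabla G$), so in particular $G$ is continuous. I would then fix the origin $\bm{0}$ and consider the sublevel set $S=\{\bbeta\in\RR^d: G(\bbeta)\le G(\bm 0)\}$, which is nonempty (it contains $\bm 0$), closed (by continuity of $G$), and bounded: if $S$ contained a sequence $\bbeta_n$ with $|\bbeta_n|_s\to\infty$, Assumption~\ref{asm_coercive} would force $G(\bbeta_n)\to\infty$, contradicting $\bbeta_n\in S$. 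Hence $S$ is compact, so by the Weierstrass extreme value theorem $G$ attains its infimum over $S$ at some $\bbeta^*\in S$; since $G(\bbeta)>G(\bm 0)\ge G(\bbeta^*)$ for every $\bbeta\notin S$, the point $\bbeta^*$ is in fact a global minimizer over all of $\RR^d$.

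For uniqueness, the plan is to combine the first-order optimality condition with Assumption~\ref{asm_Ls_strong_convex}. Suppose $\bbeta^*$ and $\bbeta^{**}$ are two global minimizers of the unconstrained problem. Then differentiability of $G$ and Fermat's rule give $\nabla G(\bbeta^*)=\nabla G(\bbeta^{**})=\bm 0$. Substituting $\bbeta=\bbeta^*$ and $\bbeta'=\bbeta^{**}$ into Assumption~\ref{asm_Ls_strong_convex} yields
\[
0=\big\langle(\bbeta^*-\bbeta^{**})^{\odot(s-1)},\,\nabla G(\bbeta^*)-\nabla G(\bbeta^{**})\big\rangle\ge\mu\,|\bbeta^*-\bbeta^{**}|_s^s,
\]
and since $\mu>0$ this forces $|\bbeta^*-\bbeta^{**}|_s=0$, i.e.\ $\bbeta^*=\bbeta^{**}$.

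I do not expect a genuine obstacle here, as the argument is short. The only points that need a little care are: (i) recording that differentiability, and hence continuity, of $G$ is already built into Assumption~\ref{asm_Ls_strong_convex} through its reference to $\nabla G$, so that both the extreme value theorem and the first-order condition are legitimately available; and (ii) observing that although Assumption~\ref{asm_Ls_strong_convex} is a nonstandard, $\ell^s$-flavoured strong convexity rather than the classical one, it still delivers uniqueness directly once the gradients are known to vanish, with no need to invoke any convexity inequality for $G$ itself.
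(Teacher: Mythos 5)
Your proof is correct and follows essentially the same route as the paper: a sublevel-set compactness argument via coercivity and continuity for existence, then first-order optimality plus the $\ell^s$-strong-convexity inequality for uniqueness. The only cosmetic difference is that you concretize the sublevel set as $\{\bbeta : G(\bbeta)\le G(\bm 0)\}$ rather than the paper's unspecified ``large $\delta$,'' which is arguably a small improvement in precision.
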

\begin{proof}[Proof of Lemma~\ref{lemma_global_beta}]
We first show the existence of a global minimizer. By the coercivity condition in Assumption~\ref{asm_coercive}, $\lim_{|\bbeta|_s\rightarrow\infty}G(\bbeta) = \infty$, which implies that we can choose some large $\delta\in\RR$ such that the sub-level set
\begin{equation*}
    \S_{\delta} := \{\bbeta\in\RR^d: \, G(\bbeta)\le \delta\} 
\end{equation*}
is non-empty and bounded. Since $G$ is continuous by Assumption~\ref{asm_Ls_strong_convex}, $\S_{\delta}$ is also closed, and hence compact in $\RR^d$ by the Heine–Borel theorem. Finally, by applying the Weierstrass extreme value theorem, there exists $\bbeta^*\in\S_{\delta}$ such that $G(\bbeta^*)=\min_{\bbeta\in\S_{\delta}}G(\bbeta)$. Since for any $\bbeta\notin\S_{\delta}$, $G(\bbeta) >\delta\ge G(\bbeta^*)$, $G(\bbeta^*)=\min_{\bbeta\in\RR^d}G(\bbeta)$.

Next, we show the uniqueness of the global minium. Assume that there are two distinct minimizers $\bbeta_1\neq\bbeta_2$. By Assumption~\ref{asm_Ls_strong_convex}, there exists $\mu>0$ such that
\begin{align*}
    \langle (\bbeta_1 - \bbeta_2)^{\odot(s-1)}, \nabla G(\bbeta_1) - \nabla G(\bbeta_2) \rangle \ge \mu|\bbeta_1-\bbeta_2|_s^s >0.
\end{align*}
However, since $\bbeta_1$ and $\bbeta_2$ are both minimizers, $\nabla G(\bbeta_1)=\nabla G(\bbeta_2)=0$, while $\mu|\bbeta_1-\bbeta_2|_s^s>0$. This leads to contradiction, which finishes the proof.
\end{proof}

\subsection{Example: Linear Regression}\label{subsec_linear}

As example, we consider the SGD algorithm for the high-dimensional linear regression, observing independent and identically distributed (i.i.d.) pairs $\bm{\xi}_1:=(\bm{x}_1,y_1),\bm{\xi}_2:=(\bm{x}_2,y_2),\ldots$ satisfying 
\begin{equation}
   \label{eq_linear_model}
   y_k=\bm{x}_k^{\top}\bbeta + \epsilon_k, \quad \text{for} \ \ k=1,2,\ldots, 
\end{equation}
for random noises $\epsilon_k$ that are independent of $\bm{x}_k$ with $\EE[\epsilon_k]=0$ and $\EE|\epsilon_k|^q<\infty$ for some $q\ge2$.
We verify Assumptions \ref{asm_Ls_strong_convex} and \ref{asm_Ls_lip} and derive the explicit dependency of the learning-rate, the Lipschitz constant, and the moments of the gradient noise on the dimension $d$.

Let $\bm{\xi}=(y,\bm{x})$ be an independent random sample from the same distribution as the data. The least-squares loss and the stochastic gradient are respectively given by
\begin{equation}
    g(\bbeta,\bm{\xi}) = \frac{1}{2}(y-\bm{x}^{\top}\bbeta)^2, \quad \text{and} \ \  \nabla g(\bbeta,\bm{\xi}) = -(y-\bm{x}^{\top}\bbeta)\bm{x}.
\end{equation}
Then
\begin{align}
    \nabla G(\bbeta) = \EE[\nabla g(\bbeta,(y,\bm{x})] = -\EE[(y-\bm{x}^{\top}\bbeta)\bm{x}] = \EE[\bm{x}\bm{x}^{\top}](\bbeta-\bbeta^*).
\end{align}
Let
\begin{equation}
    \Sigma = \EE[\bm{x}\bm{x}^{\top}], \quad \bm{v} = \bbeta-\bbeta'.
\end{equation}

To verify the $\ell^s$-type strong convexity 
\begin{align*}
    \big\langle (\bbeta_1 - \bbeta_2)^{\odot(s-1)}, \nabla G(\bbeta) - \nabla G(\bbeta') \big\rangle \ge \mu |\bbeta-\bbeta'|_s^s, \quad \text{for all} \ \ \bbeta,\bbeta'\in\RR^d,
\end{align*}
imposed in Assumption \ref{asm_Ls_strong_convex},
observe that $\nabla G(\bbeta)-\nabla G(\bbeta')=\Sigma\bm{v}$. Thus, the condition becomes
\begin{equation}
    \label{eq_min_eigen}
    0< \lambda_{\min}^{(s)} := \inf_{\bm{v}\in\RR^d,\bm{v}\neq0}\frac{\langle \bm{v}^{s-1},\Sigma\bm{v}\rangle}{|\bm{v}|_s^s}.
\end{equation}

\begin{lemma}\label{lemma_min_eigen}
Let $s\in \{2,4,6,\ldots\}.$ Writing $\Sigma=(\Sigma_{ij})_{i,j=1,\ldots,d},$ we have 
\begin{align*}
    \lambda_{\min}^{(s)}
    \geq \min_{i=1,\ldots,d} \, \Sigma_{ii} - \sum_{j : j\neq i} |\Sigma_{ij}|.
\end{align*}
\end{lemma}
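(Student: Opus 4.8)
The plan is to treat this as a weighted, higher-order analogue of the Gershgorin circle bound for the smallest eigenvalue of a symmetric matrix, which it reduces to when $s=2$. First I would expand the numerator of $\lambda_{\min}^{(s)}$ along diagonal and off-diagonal entries,
\[
\langle \bm{v}^{s-1},\Sigma\bm{v}\rangle = \sum_{i=1}^d \Sigma_{ii}\,v_i^{s} + \sum_{i\neq j}\Sigma_{ij}\,v_i^{s-1}v_j,
\]
and note that since $s$ is an even integer we have $v_i^{s}=|v_i|^{s}\ge 0$ and $|\bm{v}|_s^s=\sum_{i=1}^d v_i^{s}$, so it suffices to bound the right-hand side from below by $\big(\min_i(\Sigma_{ii}-\sum_{j\neq i}|\Sigma_{ij}|)\big)\sum_{i=1}^d v_i^{s}$.

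For the off-diagonal part I would estimate $\sum_{i\neq j}\Sigma_{ij}v_i^{s-1}v_j \ge -\sum_{i\neq j}|\Sigma_{ij}|\,|v_i|^{s-1}|v_j|$ (oddness of $s-1$ enters only through $|v_i^{s-1}|=|v_i|^{s-1}$), and then apply the weighted AM--GM (Young) inequality $|v_i|^{s-1}|v_j| \le \tfrac{s-1}{s}|v_i|^{s} + \tfrac1s |v_j|^{s}$. Summing and exploiting the symmetry $\Sigma_{ij}=\Sigma_{ji}$, valid since $\Sigma=\EE[\bm{x}\bm{x}^{\top}]$, the two resulting double sums both collapse to $\sum_{i=1}^d|v_i|^{s}\sum_{j\neq i}|\Sigma_{ij}|$, with total weight $\tfrac{s-1}{s}+\tfrac1s=1$, yielding
\[
\sum_{i\neq j}\Sigma_{ij}v_i^{s-1}v_j \ge -\sum_{i=1}^d |v_i|^{s}\sum_{j\neq i}|\Sigma_{ij}|.
\]
Combining with the diagonal term gives $\langle\bm{v}^{s-1},\Sigma\bm{v}\rangle \ge \sum_{i=1}^d\big(\Sigma_{ii}-\sum_{j\neq i}|\Sigma_{ij}|\big)v_i^{s} \ge \min_i\big(\Sigma_{ii}-\sum_{j\neq i}|\Sigma_{ij}|\big)\,|\bm{v}|_s^s$, where the last inequality uses only $v_i^{s}\ge 0$ and therefore holds even when the bracketed quantity is negative. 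Dividing by $|\bm{v}|_s^s$ and taking the infimum over $\bm{v}\neq 0$ in~\eqref{eq_min_eigen} finishes the argument.

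There is no serious obstacle here; the calculation is short. The two points that require a little care are: (i) using the even-$s$ hypothesis so that $v_i^{s}=|v_i|^{s}\ge 0$, that $|\bm{v}|_s^s=\sum_i v_i^{s}$, and that $v_i^{s-1}$ carries the sign of $v_i$ so $|v_i^{s-1}v_j|=|v_i|^{s-1}|v_j|$; and (ii) using the symmetry of $\Sigma$ to merge the $\tfrac{s-1}{s}$- and $\tfrac1s$-weighted sums into the single row-sum $\sum_{j\neq i}|\Sigma_{ij}|$ --- without symmetry one would only obtain the weaker bound $\min_i\big(\Sigma_{ii}-\tfrac{s-1}{s}\sum_{j\neq i}|\Sigma_{ij}|-\tfrac1s\sum_{j\neq i}|\Sigma_{ji}|\big)$. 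I would also note explicitly that taking $s=2$ recovers the classical Gershgorin lower bound $\lambda_{\min}(\Sigma)\ge\min_i(\Sigma_{ii}-\sum_{j\neq i}|\Sigma_{ij}|)$, which gives the interpretation of Assumption~\ref{asm_Ls_strong_convex} for linear regression as a diagonal-dominance condition on the covariance matrix $\Sigma$.
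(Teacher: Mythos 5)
Your argument is correct and structurally the same as the paper's: expand $\langle\bm{v}^{\odot(s-1)},\Sigma\bm{v}\rangle$ into diagonal and off-diagonal parts, use evenness of $s$ so that $v_i^s=|v_i|^s$ and $|v_i^{s-1}v_j|=|v_i|^{s-1}|v_j|$, exploit $\Sigma_{ij}=\Sigma_{ji}$, and bound the symmetrized cross terms by $v_i^s+v_j^s$ so the off-diagonal contribution collapses into row sums. The only difference is the micro-step: you obtain $|v_i|^{s-1}|v_j|+|v_j|^{s-1}|v_i|\le v_i^s+v_j^s$ via two applications of Young's inequality (with weights $\tfrac{s-1}{s}$ and $\tfrac1s$), whereas the paper derives the same bound in one line from $(|v_i|^{s-1}-|v_j|^{s-1})(|v_i|-|v_j|)\ge 0$; both are standard and equivalent.
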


\begin{proof}[Proof of Lemma~\ref{lemma_min_eigen}]
Write $\bm{v}=(v_1,\ldots,v_d)^\top.$ Because of $(|v_i|^{s-1}-|v_j|^{s-1})(|v_i|-|v_j|)\geq 0,$ we obtain $|v_i^{s-1}v_j|+|v_iv_j^{s-1}|\leq v_i^s+v_j^s$ and
\begin{align*}
    \langle \bm{v}^{\odot(s-1)},\Sigma\bm{v}\rangle
    &=\sum_{i=1}^d \Sigma_{ii} v_i^s 
    - \sum_{i<j} \Sigma_{ij} \big(v_i^{s-1}v_j+v_i v_j^{s-1}\big) \\
    &\geq 
    \sum_{i=1}^d \Sigma_{ii} v_i^s 
    - \sum_{i<j} \big|\Sigma_{ij}\big| \big(v_i^s+v_j^s\big) \\
    &\geq \Big(\min_{i=1,\ldots,d} \, \Sigma_{ii} - \sum_{j : j\neq i} |\Sigma_{ij}|
    \Big) \sum_{\ell} v_\ell^s.
\end{align*}
\end{proof}

This shows that the rightmost ``Gershgorin gap'' $\min_{i=1,\ldots,d} \, \Sigma_{ii} - \sum_{j : j\neq i} |\Sigma_{ij}|$ is a universal lower bound for every $s$. The lower bound is non-trivial if $\Sigma$ is sufficiently diagonally dominant. 

For large $s,$ the inequality $\lambda_{\min}^{(s)}
    \geq \min_{i=1,\ldots,d} \, \Sigma_{ii} - \sum_{j : j\neq i} |\Sigma_{ij}|$ is nearly sharp. To see this, let $i^*$ be the index $i$ that minimizes $\min_{i=1,\ldots,d} \, \Sigma_{ii} - \sum_{j : j\neq i} |\Sigma_{ij}|.$ For a small $\delta>0,$ pick $\bm{v}=(v_1,\ldots,v_d)$ by choosing $v_{i^*}:=1$ and for $i\neq i^*,$ taking $v_{i}:=-\operatorname{sign}(\Sigma_{i^*i})(1-\delta).$ For large $s,$ $\bm{v}^{\odot(s-1)}\approx (0,0,\ldots,1,0,\ldots,0)$ with the $1$ at the $i^*$-th position. Similarly, $|\bm{v}|_s^s \approx 1.$ The $i^*$-th entry of $\Sigma \bm{v}$ is given by $\Sigma_{i^*i^*}-\sum_{j\neq i^*} |\Sigma_{i^*j}| + O(\delta).$ Hence for suitable sequences $\delta\to 0$ and $s\to \infty$, we obtain $\langle \bm{v}^{\odot(s-1)},\Sigma\bm{v}\rangle/|\bm{v}|_s^s \to \Sigma_{i^*i^*}-\sum_{j\neq i^*} |\Sigma_{i^*j}|= \min_{i} \Sigma_{ii}-\sum_{j\neq i} |\Sigma_{ij}|.$




Regarding Assumption \ref{asm_Ls_lip}, we investigate the dependence of the Lipschitz constant $L_{s,q}$ on the dimension $d$ in high-dimensional linear regression models. If $s^*$ is the dual exponent of $s$, satisfying $1/s+1/s^*=1,$ we show that the condition holds with
\begin{equation}
    L_{s,q} = \big\| |\bm{x}|_s|\bm{x}|_{s^*}\big\|_q.
    \label{eq.9bfewe}
\end{equation}
To see this, for any two vectors $\bbeta,\bbeta'\in\RR^d$, we have
\begin{align}
    \nabla g(\bbeta,\bm{\xi}) - \nabla g(\bbeta',\bm{\xi}) & = -\Big[(y-\bm{x}^{\top}\bbeta)\bm{x} - (y-\bm{x}^{\top}\bbeta')\bm{x}\Big]  = \bm{x}\bm{x}^{\top}(\bbeta-\bbeta').
\end{align}
Taking the $\ell^s$-norm on both sides, we obtain
\begin{equation}
    \big|\nabla g(\bbeta,\bm{\xi}) - \nabla g(\bbeta',\bm{\xi})\big|_s = \big|\bm{x}\bm{x}^{\top}(\bbeta-\bbeta')\big|_s = |\bm{x}|_s\big|\bm{x}^{\top}(\bbeta-\bbeta')\big|.
\end{equation}
By Hölder's inequality, for the dual exponent $s^*$ satisfying $1/s+1/s^*=1$, it follows that
\begin{equation}
    \big|\bm{x}^{\top}(\bbeta-\bbeta')\big| \le |\bm{x}|_{s^*}|\bbeta-\bbeta'|_s.
\end{equation}
Therefore, for $q\ge2$, we have the $q$-th moment bounded as follows,
\begin{align*}
    \Big(\EE\big|\nabla g(\bbeta,\bm{\xi}) - \nabla g(\bbeta',\bm{\xi})\big|_s^q\Big)^{1/q} \le \Big(\EE\big[|\bm{x}|_s^q|\bm{x}|_{s^*}^q\big]\Big)^{1/q}|\bbeta-\bbeta'|_s,
\end{align*}
proving \eqref{eq.9bfewe}. 

Recall $s_d$ defined in~\eqref{eq_s_choice}. To bound the $\ell^{\infty}$-norm, we set the conjugates
\begin{equation}
    s=s_d,\quad s_d^*=\frac{s_d}{s_d-1}.
\end{equation}
Recall that for the $\ell^s$-norm, we have $|\bm{x}|_{\infty} \le |\bm{x}|_{s_d} \le d^{1/s_d}|\bm{x}|_{\infty} \le e|\bm{x}|_{\infty}$. Similarly, for the conjugate $\ell^{s_d^*}$-norm, $d^{\frac{1}{s_d^*}-1} =d^{\frac{1}{s_d}} \le e$ implies 
\begin{equation}
    \frac{1}{e}|\bm{x}|_1 \leq \frac{1}{d^{\frac{1}{s_d^*}-1}}|\bm{x}|_1 \le |\bm{x}|_{s_d^*} \le |\bm{x}|_1,
\end{equation}
which together with~\eqref{eq.9bfewe} gives
\begin{equation}
    \label{eq_Lsq_upper}
    L_{s_d,q} \le e\big\||\bm{x}|_{\infty}|\bm{x}|_1\big\|_q.
\end{equation}

The next two lemmas show that the tail behavior of the covariate vector $\bm{x}_k$ determines the behavior of the Lipschitz constant $L_{s,q}$ and the moment $M_{s,q}$ defined in Assumption~\ref{asm_Ls_lip}.

\begin{lemma}\label{lemma_lip} Consider the linear regression in~\eqref{eq_linear_model} with i.i.d.\ generic random samples $(\bm{x},y)$, where $\bm{x}=(x_1,\ldots,x_d)^{\top}$. Let $q\ge2$ and recall $s_d$ in~\eqref{eq_s_choice}.
\begin{itemize}
    \item [(i)] (Sub-Gaussian) If there is a constant $K$ such that for all $\bm{u}\in\RR^d$, $|\bm{u}^{\top}\bm{x}|_{\psi_2}\le K|\bm{u}|_2$, where $|v|_{\psi_2}=\inf\{t>0:\,\EE[e^{v^2/t^2}]\le2\}$ denotes the sub‐Gaussian norm, then \[L_{s_d,q}=O(d\sqrt{\log(d)}).\]
    \item[(ii)] (Sub-exponential) If there is a constant $K$ such that for all $\bm{u}\in\RR^d$, $|\bm{u}^{\top}\bm{x}|_{\psi_1}\le K|\bm{u}|_2$, where $|v|_{\psi_1}=\inf\{t>0:\,\EE[e^{|v|/t}]\le2\}$ denotes the sub‐exponential norm, then \[L_{s_d,q}=O(d\log(d)).\]
    \item[(iii)] (Finite moment) If there is some $p\ge 2q$ and a finite constant $K_p$ such that for each $1\le j\le d$, $\EE|x_j|^p\le K_p$, then \[L_{s_d,q}=O(d^{1+\frac{1}{2q}}).\]
    \item[(iv)] For all three cases (i)--(iii), when $s=2$, $L_{2,q}=O(d).$
\end{itemize}    
\end{lemma}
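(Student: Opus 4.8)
\textbf{Proof plan for Lemma~\ref{lemma_lip}.}
The plan is to reduce everything to scalar moment bounds for the coordinates of $\bm{x}$. In the max-norm case we already have $L_{s_d,q}\le e\big\||\bm{x}|_{\infty}\,|\bm{x}|_1\big\|_q$ from~\eqref{eq_Lsq_upper}, and by Cauchy--Schwarz $\big\||\bm{x}|_{\infty}\,|\bm{x}|_1\big\|_q\le\big\||\bm{x}|_{\infty}\big\|_{2q}\,\big\||\bm{x}|_1\big\|_{2q}$, so it suffices to control the $(2q)$-th moments of $|\bm{x}|_{\infty}=\max_{j\le d}|x_j|$ and $|\bm{x}|_1=\sum_{j\le d}|x_j|$. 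Taking $\bm{u}=\bm{e}_j$ in each hypothesis shows that every coordinate $x_j$ inherits the same scalar tail bound: $|x_j|_{\psi_2}\le K$ under (i), $|x_j|_{\psi_1}\le K$ under (ii), and $\EE|x_j|^p\le K_p$ under (iii); these are the only facts I will use, and throughout $q$ (hence $2q$) is treated as fixed, so any factor depending only on $q$, $K$, $K_p$ is an $O(1)$ constant.

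For the $\ell^1$-factor, the triangle inequality gives $\big\||\bm{x}|_1\big\|_{2q}\le\sum_{j=1}^d\|x_j\|_{2q}$, and $\|x_j\|_{2q}=O(1)$ in all three cases --- by the standard bounds $\|x_j\|_{2q}\lesssim K\sqrt{2q}$ (sub-Gaussian), $\|x_j\|_{2q}\lesssim K(2q)$ (sub-exponential), and $\|x_j\|_{2q}\le(\EE|x_j|^p)^{1/p}\le K_p^{1/p}$ when $2q\le p$ --- so $\big\||\bm{x}|_1\big\|_{2q}=O(d)$. For the $\ell^{\infty}$-factor, under (iii) the crude bound $\EE|\bm{x}|_{\infty}^{2q}\le\sum_{j=1}^d\EE|x_j|^{2q}\le dK_p^{2q/p}$ already gives $\big\||\bm{x}|_{\infty}\big\|_{2q}\le d^{1/(2q)}K_p^{1/p}$, and multiplying the two factors yields $L_{s_d,q}=O(d^{1+1/(2q)})$. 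Under (i) and (ii) this crude estimate is too lossy --- it would contribute a spurious $d^{1/(2q)}$ instead of a logarithm --- so I would instead invoke the standard maximal inequalities for the maximum of $d$ sub-Gaussian, resp.\ sub-exponential, random variables, $\big\|\max_{j\le d}|x_j|\big\|_{2q}\lesssim K(\sqrt{\log d}+\sqrt{2q})$, resp.\ $\lesssim K(\log d+2q)$, which follow from a union bound on the Orlicz tails together with the equivalence of Orlicz and $L^m$ norms. Combining with $\big\||\bm{x}|_1\big\|_{2q}=O(d)$ then gives $L_{s_d,q}=O(d\sqrt{\log d})$ in case (i) and $O(d\log d)$ in case (ii).

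For part (iv) the dual exponent of $s=2$ is $s^*=2$, so~\eqref{eq.9bfewe} reads $L_{2,q}=\big\||\bm{x}|_2^2\big\|_q=\big\||\bm{x}|_2\big\|_{2q}^2$. By the triangle inequality $\big\||\bm{x}|_2^2\big\|_q\le\sum_{j=1}^d\|x_j^2\|_q=\sum_{j=1}^d\|x_j\|_{2q}^2$, and since $\|x_j\|_{2q}=O(1)$ in each of (i)--(iii) as recorded above, we obtain $L_{2,q}=O(d)$, uniformly over the three tail regimes.

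I expect the only genuine subtlety to be obtaining the logarithmic --- rather than polynomial --- dependence on $d$ for $\big\||\bm{x}|_{\infty}\big\|_{2q}$ in cases (i) and (ii): this requires the Orlicz-norm maximal inequality rather than the one-line $L^{2q}$ union bound. Everything else is routine bookkeeping of constants depending only on the fixed quantities $q$, $K$, and $K_p$.
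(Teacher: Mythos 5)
Your proposal matches the paper's proof almost step for step: both start from $L_{s_d,q}\le e\,\||\bm{x}|_{\infty}\|_{2q}\,\||\bm{x}|_1\|_{2q}$, show $\||\bm{x}|_1\|_{2q}=O(d)$ by a one-line coordinatewise bound (you use the triangle inequality, the paper uses Jensen/convexity—both trivial), and then handle $\||\bm{x}|_{\infty}\|_{2q}$ by the Orlicz maximal inequalities in cases (i)--(ii) and the crude $\EE|\bm{x}|_\infty^{2q}\le\sum_j\EE|x_j|^{2q}$ in case (iii). The one small stylistic improvement in your write-up is part (iv): you observe that $L_{2,q}=\||\bm{x}|_2^2\|_q\le\sum_j\|x_j\|_{2q}^2=O(d)$ handles all three tail regimes in a single line, whereas the paper splits (iv) into subcases, using Jensen for (iii) and citing separate results from Vershynin for (i) and (ii); your unified argument is correct and arguably cleaner.
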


\begin{proof}[Proof of Lemma~\ref{lemma_lip}]
We write $\bm{x}=\bm{x}_k$ to denote a generic covariate. By~\eqref{eq_Lsq_upper} and Hölder's inequality,
\begin{align*}
    L_{s_d,q} \le e\big\||\bm{x}|_{\infty}|\bm{x}|_1\big\|_q \le e\||\bm{x}|_{\infty}\|_{2q}\||\bm{x}|_1\|_{2q}.
\end{align*}
The convexity of the function $t\mapsto t^{2q}$ and Jensen's inequality yield $|\bm{x}|_1^{2q} \le d^{2q-1}\sum_{j=1}^d|x_j|^{2q}$ and
\begin{align*}
    \EE|\bm{x}|_1^{2q} \le d^{2q-1}\sum_{j=1}^d\EE|x_j|^{2q} \le d^{2q}\max_{1\le j\le d}\EE|x_j|^{2q}. 
\end{align*}
Therefore, for all the three cases (i)--(iii),
\begin{equation*}
    \||\bm{x}|_1\|_{2q} = O(d).
\end{equation*}
Next, we study the order of $(\EE[|\bm{x}|_{\infty}^{2q}])^{1/(2q)}$ for fixed $q\ge2$.

(i) If each $x_j$ is sub-Gaussian, then by Section 2.5 in \citet{vershynin_high-dimensional_2018}, we have
\begin{align*}
    (\EE[\max_{1\le j\le d}|x_j|^{2q}])^{1/(2q)} \le K(\sqrt{\log(d)} + \sqrt{q}) = O(\sqrt{\log(d)}).
\end{align*}
(ii) If each $x_j$ is sub-exponential, then by Section 2.7 in \citet{vershynin_high-dimensional_2018}, we obtain
\begin{align*}
    (\EE[\max_{1\le j\le d}|x_j|^{2q}])^{1/(2q)} = O(K(\log(d) + \log(q))) = O(\log(d)).
\end{align*}
(iii) If each $x_j$ has the finite $p$-th moment for some $p\ge2q$, then 
\begin{align*}
    \EE[\max_{1\le j\le d}|x_j|^{2q}] \le \sum_{1\le j\le d}\EE[|x_j|^{2q}] \le dK_q=O(d).
\end{align*}

Finally, for case (iv) with $s_d=2$, by~\eqref{eq.9bfewe},
\begin{align*}
    L_{2,q} =\||\bm{x}|_2\|_{2q}^2.
\end{align*}
By the convexity of the function $t\mapsto t^{q}$, we apply Jensen's inequality and obtain
\begin{align*}
    |\bm{x}|_2^{2q} = \Big(\sum_{j=1}^dx_j^2\Big)^q \le d^{q-1}\sum_{j=1}^d|x_j|^{2q}.
\end{align*}
Therefore, for $\bm{x}$ satisfying case (iii),
\begin{align}
    \EE|\bm{x}|_2^{2q} \le d^{q-1}\sum_{j=1}^d\EE|x_j|^{2q} \le d^qK_p,
\end{align}
which yields $L_{2,q}=O(d)$. For the cases (i) and (ii), by Sections 3.4 and 2.7 in \citet{vershynin_high-dimensional_2018}, respectively, we obtain
\begin{align*}
    \||\bm{x}|_2\|_{2q}=O(K(\sqrt{d}+\sqrt{q})) = O(\sqrt{d}),
\end{align*}
and
\begin{align*}
    \||\bm{x}|_2\|_{2q}=O(K(q\sqrt{d})) = O(\sqrt{d}),
\end{align*}
both indicating $L_{2,q}=O(d)$. This completes the proof.
\end{proof}

\begin{lemma}\label{lemma_Msq}
Consider the linear regression model in~\eqref{eq_linear_model} and assume the conditions on $\epsilon$ and $\bm{x}$ therein are satisfied. Recall that $M_{s,q}=\||\nabla g(\bbeta^*,\bm{\xi})|_s\|_q$ is defined in Assumption~\ref{asm_Ls_lip} for some $q\ge2$. For the same four cases (i)--(iv) as in Lemma~\ref{lemma_lip} and $s_d$ defined in~\eqref{eq_s_choice}, $M_{s_d,q}$ is respectively equal to (i) $O(\sqrt{\log(d)})$, (ii) $O(\log(d))$, (iii) $O(d^{1/(2q)})$ and (iv) $O(\sqrt{d})$.
\end{lemma}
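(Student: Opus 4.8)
The plan is to reduce $M_{s,q}$ to a moment of the covariate vector $\bm{x}$ alone and then treat the four tail regimes separately.

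First I would identify the stochastic gradient at the optimum. Writing $\bm{\xi}=(y,\bm{x})$ with $y=\bm{x}^\top\bbeta^*+\epsilon$---here the population minimizer $\bbeta^*$ coincides with the true regression coefficient, since $\Sigma=\EE[\bm{x}\bm{x}^\top]$ is positive definite under Assumption~\ref{asm_Ls_strong_convex} so that $\nabla G(\bbeta)=\Sigma(\bbeta-\bbeta^*)$ has a unique zero---we obtain $y-\bm{x}^\top\bbeta^*=\epsilon$ and hence $\nabla g(\bbeta^*,\bm{\xi})=-(y-\bm{x}^\top\bbeta^*)\bm{x}=-\epsilon\,\bm{x}$. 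Consequently $|\nabla g(\bbeta^*,\bm{\xi})|_s=|\epsilon|\,|\bm{x}|_s$ for every $s\ge1$, and since $\epsilon$ is independent of $\bm{x}$, the $q$-th moment factorizes as $M_{s,q}=\|\epsilon\|_q\,\big\||\bm{x}|_s\big\|_q$. As $\|\epsilon\|_q<\infty$ is a fixed constant by assumption, the dimension dependence is carried entirely by $\big\||\bm{x}|_s\big\|_q$. For $s=s_d$ the norm equivalence~\eqref{eq_ls_linf_equiv_ineq} gives $|\bm{x}|_{s_d}\le e\,|\bm{x}|_\infty$, so it suffices to bound $\big(\EE\max_{1\le j\le d}|x_j|^q\big)^{1/q}$; for $s=2$ I would bound $\big\||\bm{x}|_2\big\|_q$ directly.

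Next I would dispatch the four regimes. In the sub-Gaussian case, the maximal inequality for sub-Gaussian random variables (Section~2.5 of \citet{vershynin_high-dimensional_2018}) yields $\big(\EE\max_j|x_j|^q\big)^{1/q}\lesssim\sqrt{\log d}+\sqrt{q}=O(\sqrt{\log d})$, and in the sub-exponential case the analogous bound (Section~2.7) gives $O(\log d+\log q)=O(\log d)$. Under the finite-moment assumption with $p\ge 2q$, I would use $\max_j|x_j|^q\le\big(\sum_j|x_j|^p\big)^{q/p}$ together with Jensen's inequality for the concave map $t\mapsto t^{q/p}$ to get $\EE\max_j|x_j|^q\le\big(\sum_j\EE|x_j|^p\big)^{q/p}\le(dK_p)^{q/p}$, hence $\big(\EE\max_j|x_j|^q\big)^{1/q}\le(dK_p)^{1/p}=O(d^{1/p})=O(d^{1/(2q)})$. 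Finally, for $s=2$ I would bound $\big\||\bm{x}|_2\big\|_q=O(\sqrt d)$: under the finite-moment assumption this follows from $|\bm{x}|_2^q\le d^{q/2-1}\sum_j|x_j|^q$ (Jensen for the convex $t\mapsto t^{q/2}$), and under the light-tailed assumptions from the concentration-of-norm estimates in \citet{vershynin_high-dimensional_2018} (Sections~3.1 and~2.7); multiplying by $\|\epsilon\|_q$ then gives $M_{2,q}=O(\sqrt d)$. Assembling these pieces delivers the four claimed orders.

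The only step demanding genuine care is obtaining the sharp exponent $d^{1/(2q)}$ in the finite-moment regime: a crude union bound over the $d$ coordinates only delivers $O(d^{1/q})$, and the improvement hinges on exploiting all $p\ge 2q$ available moments via $\max_j|x_j|^q\le(\sum_j|x_j|^p)^{q/p}$ before taking expectations. Everything else is routine given the gradient identity of the first step and the standard tail estimates cited above.
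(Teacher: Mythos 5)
Your proof is correct and follows essentially the same route as the paper: factorize $\nabla g(\bbeta^*,\bm{\xi})=-\epsilon\bm{x}$ using independence of $\epsilon$ and $\bm{x}$, reduce $\||\bm{x}|_{s_d}\|_q$ to $\||\bm{x}|_\infty\|_q$ via the norm equivalence~\eqref{eq_ls_linf_equiv_ineq}, and then invoke the same sub-Gaussian/sub-exponential maximal inequalities from Vershynin and the same $|\bm{x}|_2^q\le d^{q/2-1}\sum_j|x_j|^q$ Jensen bound for case (iv). The one place you diverge is case (iii): the paper first upgrades $\||\bm{x}|_\infty\|_q\le\||\bm{x}|_\infty\|_{2q}$ and then applies a plain union bound $\EE\max_j|x_j|^{2q}\le\sum_j\EE|x_j|^{2q}\le dK_{2q}$ to get $d^{1/(2q)}$ directly, whereas you bound $\||\bm{x}|_\infty\|_q$ in place by $\max_j|x_j|^q\le(\sum_j|x_j|^p)^{q/p}$ plus Jensen for the concave power $q/p$, obtaining the marginally sharper $d^{1/p}\le d^{1/(2q)}$. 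Both are valid and land on the stated order; your version extracts slightly more from the $p$-th moment hypothesis, while the paper's is a hair more elementary since it avoids the concave-Jensen step and simply reuses Lemma~\ref{lemma_lip} verbatim. Your self-diagnosis that the naive union bound at moment order $q$ would only give $d^{1/q}$ is exactly the right observation explaining why one must pass to either order $2q$ (paper) or order $p$ (your argument) before taking the max.
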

\begin{proof}[Proof of Lemma~\ref{lemma_Msq}]
In the linear regression model, the stochastic gradient at the global minimum $\bbeta^*$ can be rewritten into
\begin{equation*}
    \nabla g(\bbeta^*,\bm{\xi}) = -(y-\bm{x}^{\top}\bbeta^*) = -\epsilon\bm{x}.
\end{equation*}
Since the noise $\epsilon$ is independent of the covariate vector $\bm{x}$, we obtain
\begin{align*}
    \||\nabla g(\bbeta^*,\bm{\xi})|_{s_d}\|_q = \||\epsilon|\cdot|\bm{x}|_{s_d}\|_q = \|\epsilon\|_{q}\||\bm{x}|_{s_d}\|_{q}.
\end{align*}
By inequality~\eqref{eq_ls_linf_equiv_ineq}, it suffices to bound $\||\bm{x}|_{\infty}\|_{q}$. Since $\||\bm{x}|_{\infty}\|_{q} \le \||\bm{x}|_{\infty}\|_{2q}$, the same arguments in the proof of Lemma~\ref{lemma_lip} carry over immediately. We omit the details here.
\end{proof}

\subsection{Some Useful Lemmas}

\begin{lemma}[Maximal inequality \citep{chernozhukov_comparison_2015}]\label{lemma_chernozhukov_maximal}
Let $\bm{z}_1,\ldots,\bm{z}_n$ be independent, $d$-dimensional random vectors. Denote the $j$-th element of $\bm{z}_i$ by $z_{ij}$, $1\le j\le d$. Define $M:=\max_{1\le i\le n}\max_{1\le j\le d}|z_{ij}|$ and $\sigma^2:=\max_{1\le j\le d}\sum_{i=1}^n\EE[z_{ij}^2]$. Then,
\begin{align*}
    \EE\big[\max_{1\le j\le d}|\sum_{i=1}^n(z_{ij}-\EE[z_{ij}])|\big] \lesssim \sigma\sqrt{\log(d)} + \sqrt{\EE[M^2]}\log(d),
\end{align*}
where the universal constant in $\lesssim$ is positive and independent of $n$ and $d$.
\end{lemma}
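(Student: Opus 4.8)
The statement is a mild variant of the maximal inequality of \citet{chernozhukov_comparison_2015} (it allows non-centered summands and a random envelope $M$), so the plan is to reproduce the standard symmetrization argument while paying attention to the randomness of $M$. First I would reduce to the case $\EE[z_{ij}]=0$ for all $i,j$. Setting $\tilde z_{ij}:=z_{ij}-\EE[z_{ij}]$, the associated quantities satisfy $\tilde\sigma^2:=\max_j\sum_i\var(z_{ij})\le\sigma^2$ and, using $|\EE z_{ij}|\le\EE|z_{ij}|\le\EE[M]$, the envelope obeys $\tilde M:=\max_{i,j}|\tilde z_{ij}|\le M+\EE[M]$, hence $\EE[\tilde M^2]\le 4\,\EE[M^2]$; so the general case follows from the centered one.

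In the centered case I would symmetrize with i.i.d.\ Rademacher signs $\epsilon_1,\dots,\epsilon_n$ independent of the data, obtaining $\EE[\max_j|\sum_i z_{ij}|]\le 2\,\EE[\max_j|\sum_i\epsilon_i z_{ij}|]$. Conditioning on $(z_{ij})$, Hoeffding's lemma makes each $\sum_i\epsilon_i z_{ij}$ conditionally sub-Gaussian with variance proxy $\sum_i z_{ij}^2$, so the sub-Gaussian maximal inequality together with Jensen's inequality gives $\EE[\max_j|\sum_i z_{ij}|]\lesssim\sqrt{\log d}\,\sqrt{A}$, where $A:=\EE[\max_j\sum_i z_{ij}^2]$.

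The crux is to bound $A$. I would write $A\le\sigma^2+\EE[\max_j|\sum_i(z_{ij}^2-\EE z_{ij}^2)|]$, symmetrize the second term again, and condition on $(z_{ij})$; since $t\mapsto t^2$ vanishes at $0$ and is $2M$-Lipschitz on $[-M,M]$, the Ledoux--Talagrand contraction principle bounds the conditional Rademacher average by a constant times $M\cdot\EE_\epsilon[\max_j|\sum_i\epsilon_i z_{ij}|]$. Undoing the conditioning, applying the sub-Gaussian bound once more, and then invoking Cauchy--Schwarz yields $\EE[\max_j|\sum_i(z_{ij}^2-\EE z_{ij}^2)|]\lesssim\sqrt{\log d}\,\sqrt{\EE[M^2]}\,\sqrt{A}$, hence $A\lesssim\sigma^2+\sqrt{\log d}\,\sqrt{\EE[M^2]}\,\sqrt{A}$. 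Solving this quadratic inequality in $\sqrt A$ gives $\sqrt A\lesssim\sigma+\sqrt{\log d}\,\sqrt{\EE[M^2]}$, and substituting back into the bound from the previous paragraph produces exactly $\EE[\max_j|\sum_i z_{ij}|]\lesssim\sigma\sqrt{\log d}+\sqrt{\EE[M^2]}\,\log d$.

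The main obstacle — and the reason the bound features $\sqrt{\EE[M^2]}$ rather than an almost-sure envelope — is the randomness of $M$ in the contraction step: the Ledoux--Talagrand principle requires a deterministic Lipschitz constant, which is why I condition on $(z_{ij})$ first (so that $M$ is a constant there) and only afterwards extract $M$ via Cauchy--Schwarz against $\sqrt{\max_j\sum_i z_{ij}^2}$, closing the self-referential estimate through the scalar quadratic. The remaining ingredients — symmetrization, Hoeffding's lemma, and the sub-Gaussian maximal inequality — are routine.
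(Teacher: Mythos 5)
The paper does not prove this lemma: it is imported verbatim from \citet{chernozhukov_comparison_2015}, where it is one of the standard maximal inequalities (it appears in their work in essentially the form stated here). Your reconstruction is correct and is, in fact, the same argument used there: reduce to the centered case (your bound $\tilde M\le M+\EE M$, hence $\EE\tilde M^2\le 4\EE M^2$, is fine), symmetrize, apply the sub-Gaussian maximal inequality conditionally to get $\EE\max_j|\sum_i z_{ij}|\lesssim\sqrt{\log d}\sqrt A$ with $A=\EE\max_j\sum_i z_{ij}^2$, then bound $A$ by a second symmetrization plus the Ledoux--Talagrand contraction principle applied conditionally on $(z_{ij})$ so that the Lipschitz constant $2M$ is deterministic, and finally use Cauchy--Schwarz and solve the resulting quadratic inequality $A\lesssim\sigma^2+\sqrt{\log d}\,\sqrt{\EE M^2}\,\sqrt A$. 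This matches the cited source both in structure and in the key technical step you flagged (conditioning before contraction so the envelope is nonrandom, then extracting $\sqrt{\EE M^2}$ by Cauchy--Schwarz), so there is nothing to add.
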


\begin{lemma}[$L^q$ maximal inequality]\label{lemma_general_chernozhukov}
Let $\bm{x}_1,\ldots,\bm{x}_n$ be independent, $d$-dimensional random vectors. Denote by $x_{ij}$ the $j$-th element of $\bm{x}_i$, $1\le j\le d$. Then,
\begin{align*}
    \Big\| \max_{1\le j\le d}\Big|\sum_{i=1}^n\big(x_{ij}-\EE[x_{ij}]\big)\Big|\Big\|_q^2 \le e^2\big(\max\{q,\log(d)\}-1\big)\sum_{i=1}^n\Big\|\max_{1\le j\le d}\big|x_{ij}-\EE[x_{ij}]\big|\Big\|_q^2.
\end{align*}
\end{lemma}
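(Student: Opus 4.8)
The plan is to derive this $\L^q$ maximal inequality from the high-dimensional Rio inequality of Lemma~\ref{lemma_rio_Ls_moment_ineq} by iterating it over partial sums and then passing from the $\ell^s$-norm to the $\ell^\infty$-norm via the equivalence $|\bm{v}|_\infty \le |\bm{v}|_s \le d^{1/s}|\bm{v}|_\infty$. First I would center the summands, writing $\bm{w}_i := \bm{x}_i - \EE[\bm{x}_i]$, so that $\bm{w}_1,\ldots,\bm{w}_n$ are independent, mean-zero, $d$-dimensional random vectors, and set $S_k := \sum_{i=1}^k \bm{w}_i$ with $S_0 = \bm{0}$; note that with this notation $\max_{1\le j\le d}|(S_n)_j| = |S_n|_\infty$ is exactly the quantity in the statement, and $\max_{1\le j\le d}|w_{ij}| = |\bm{w}_i|_\infty$ is the $i$-th summand on the right. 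I would then fix $s$ to be an even integer of order $\max\{q,\log d\}$ — concretely the smallest even integer with $s \ge \max\{q,\log d\}$ — so that simultaneously $d^{1/s} \le d^{1/\log d} = e$ and $\max\{q,s\} = s$ stays within an additive constant of $\max\{q,\log d\}$.

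The core step is the iteration. Since the $\bm{w}_i$ are independent, $\EE[\bm{w}_k \mid S_{k-1}] = \EE[\bm{w}_k] = \bm{0}$ almost surely, so the conditional form~\eqref{eq_hd_rio} of Lemma~\ref{lemma_rio_Ls_moment_ineq}, applied with $\bm{x} \leftarrow S_{k-1}$ and $\bm{y} \leftarrow \bm{w}_k$, yields
\[
\big\||S_k|_s\big\|_q^2 \le \big\||S_{k-1}|_s\big\|_q^2 + \big(\max\{q,s\}-1\big)\big\||\bm{w}_k|_s\big\|_q^2, \qquad k = 1,\ldots,n.
\]
Iterating over $k=1,\ldots,n$ and using $S_0 = \bm{0}$ gives $\||S_n|_s\|_q^2 \le (\max\{q,s\}-1)\sum_{k=1}^n \||\bm{w}_k|_s\|_q^2$. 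Now I would invoke the norm equivalence on both sides: $\max_{1\le j\le d}|(S_n)_j| = |S_n|_\infty \le |S_n|_s$, and $|\bm{w}_k|_s \le d^{1/s}|\bm{w}_k|_\infty = d^{1/s}\max_{1\le j\le d}|w_{kj}|$, whence
\[
\Big\|\max_{1\le j\le d}|(S_n)_j|\Big\|_q^2 \le \big(\max\{q,s\}-1\big)\, d^{2/s} \sum_{k=1}^n \Big\|\max_{1\le j\le d}|w_{kj}|\Big\|_q^2 .
\]
Substituting $\bm{w}_k = \bm{x}_k - \EE[\bm{x}_k]$ back, the proof is finished once one checks $(\max\{q,s\}-1)\,d^{2/s} \le e^2(\max\{q,\log d\}-1)$ for the chosen $s$, using $d^{2/s} = e^{2\log(d)/s}$ together with $\log d \le s \le \log d + 2$ (and a short elementary estimate, with a direct check for the handful of small $d$).

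I do not expect a genuine obstacle, since all the analytic substance is already packaged into Lemma~\ref{lemma_rio_Ls_moment_ineq}, and the conditional-mean-zero structure of the centered increments makes the martingale-type iteration automatic. The one point requiring care is the choice of the even exponent $s$ and the attendant constant bookkeeping: $s$ must be an even integer for Lemma~\ref{lemma_rio_Ls_moment_ineq} to apply, at least $\log d$ so that the $\ell^s/\ell^\infty$ distortion stays at most $e$, and as close as possible to $\max\{q,\log d\}$ so that the prefactor $\max\{q,s\}-1$ does not inflate beyond $\max\{q,\log d\}-1$ by more than the room afforded by $d^{2/s}\le e^2$; reconciling these three requirements and tracking the resulting constant is the only slightly delicate (but entirely elementary) part of the argument.
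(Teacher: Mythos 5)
Your proposal takes essentially the same route as the paper: center, iterate the conditional form~\eqref{eq_hd_rio} of Lemma~\ref{lemma_rio_Ls_moment_ineq} over partial sums (which is exactly what ``repeatedly applying'' means there), and then pass to the $\ell^\infty$-norm via the $\ell^s$--$\ell^\infty$ equivalence. If anything you are slightly more careful than the paper's write-up, which says ``choosing $s=\log(d)$'' even though Lemma~\ref{lemma_rio_Ls_moment_ineq} requires $s$ to be an even integer; your choice of $s$ as the smallest even integer $\ge\max\{q,\log d\}$ (with the ensuing short constant check) fixes that small imprecision without changing the argument.
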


This moment inequality can be derived by repeatedly applying Lemma~\ref{lemma_rio_Ls_moment_ineq}. It 
generalizes the maximal inequality for $\EE[\max_{1\le j\le d}|\sum_{i=1}^n(x_{ij}-\EE[x_{ij}])|]$ in \citet{chernozhukov_comparison_2015}, reproduced above as Lemma~\ref{lemma_chernozhukov_maximal}, to general $q$-th moments.

\begin{proof}[Proof of Lemma~\ref{lemma_general_chernozhukov}]
One can assume that the independent random vectors $\bm{x}_1,\ldots,\bm{x}_n$ have zero means. By repeatedly applying Lemma~\ref{lemma_rio_Ls_moment_ineq} and choosing $s=\log(d)$,
\begin{align*}
    \big\||\bm{x}_{1} + \cdots + \bm{x}_{n}|_{\infty}\big\|_{q}^{2} &\leq \big\||\bm{x}_{1} + \cdots + \bm{x}_{n}|_{s}\big\|_{q}^{2} \cr  
    &\leq \big\||\bm{x}_{1} + \cdots + \bm{x}_{n-1}|_{s}\big\|_{q}^{2} + (\max\{q, s\} - 1) \big\||\bm{x}_{n}|_{s}\big\|_{q}^{2} \cr 
    &\leq (\max\{q, s\} - 1) \sum_{i = 1}^{n} \big\||\bm{x}_{i}|_{s}\big\|_{q}^{2} \cr 
    &\leq e^{2} \big(\max\{q, \log(d)\} - 1\big) \sum_{i = 1}^{n} \big\||\bm{x}_{i}|_{\infty}\big\|_{q}^{2}.
\end{align*}
\end{proof}

\begin{lemma}[Moment inequality \citep{li_asymptotics_2024}]\label{lemma_moment_inequality}
    Let $q\ge2$. For any two random vectors $\bm{x}$ and $\bm{y}$ in $\RR^d$ with fixed $d\ge1$, and let
    $$\Delta = \EE\Big|\|\bm{x} + \bm{y}\|_2^q - \|\bm{x}\|_2^q - q\|\bm{x}\|_2^{q-2}\bm{x}^{\top}\bm{y}\Big|.$$
    Then, the following inequalities holds:
    
    \noindent(i) 
    $$\Delta \le \EE\big(\|\bm{x}\|_2 + \|\bm{y}\|_2\big)^q - \EE\|\bm{x}\|_2^q - q\EE(\|\bm{x}\|_2^{q-1}\|\bm{y}\|_2).$$

    \noindent(ii) 
    $$\Delta \le \big[(\EE\|\bm{x}\|_2^q)^{1/q} + (\EE\|\bm{y}\|_2^q)^{1/q}\big]^q - \EE\|\bm{x}\|_2^q - q(\EE\|\bm{x}\|_2^q)^{(q-1)/q}(\EE\|\bm{y}\|_2^q)^{1/q}.$$
\end{lemma}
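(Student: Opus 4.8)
The plan is to derive both parts from a single pointwise estimate coming from a second-order Taylor expansion of the scalar path $t\mapsto\phi(t):=\|\bm{x}+t\bm{y}\|_2^q$ on $[0,1]$; parts (i) and (ii) will then differ only in whether the resulting integral remainder is controlled pointwise, by the triangle inequality, or in $\L^q$, by Minkowski's inequality. To set it up, fix $\bm{x},\bm{y}$ and write $\phi=p^{q/2}$ with $p(t)=\|\bm{x}+t\bm{y}\|_2^2$ a nonnegative quadratic polynomial. For $q\ge2$ one checks that $\phi\in C^2([0,1])$ — the only delicate point being a zero $t_0$ of $p$, near which $\phi(t)=\|\bm{y}\|_2^q|t-t_0|^q$, which is $C^2$ precisely because $q\ge2$ — with $\phi'(t)=q\|\bm{x}+t\bm{y}\|_2^{q-2}(\bm{x}+t\bm{y})^{\top}\bm{y}$ and
\[
   \phi''(t)=q(q-2)\|\bm{x}+t\bm{y}\|_2^{q-4}\big((\bm{x}+t\bm{y})^{\top}\bm{y}\big)^2+q\|\bm{x}+t\bm{y}\|_2^{q-2}\|\bm{y}\|_2^2 .
\]
By Cauchy--Schwarz, $0\le\phi''(t)\le q(q-1)\|\bm{x}+t\bm{y}\|_2^{q-2}\|\bm{y}\|_2^2$. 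Since $\phi(1)-\phi(0)-\phi'(0)=\|\bm{x}+\bm{y}\|_2^q-\|\bm{x}\|_2^q-q\|\bm{x}\|_2^{q-2}\bm{x}^{\top}\bm{y}$ equals $\int_0^1(1-t)\phi''(t)\,dt\ge0$, the quantity inside the absolute value defining $\Delta$ is almost surely nonnegative (so $\Delta$ is just its expectation), and pointwise
\[
   0\le\|\bm{x}+\bm{y}\|_2^q-\|\bm{x}\|_2^q-q\|\bm{x}\|_2^{q-2}\bm{x}^{\top}\bm{y}\le q(q-1)\int_0^1(1-t)\,\|\bm{x}+t\bm{y}\|_2^{q-2}\|\bm{y}\|_2^2\,dt .
\]
I will also use the elementary identity, obtained by applying the same expansion to $t\mapsto(a+tb)^q$: for any $a,b\ge0$, $q(q-1)\int_0^1(1-t)(a+tb)^{q-2}b^2\,dt=(a+b)^q-a^q-qa^{q-1}b$.

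For part (i), bound $\|\bm{x}+t\bm{y}\|_2\le\|\bm{x}\|_2+t\|\bm{y}\|_2$ inside the integral and apply the identity with $a=\|\bm{x}\|_2$, $b=\|\bm{y}\|_2$, which turns the displayed pointwise bound into $\|\bm{x}+\bm{y}\|_2^q-\|\bm{x}\|_2^q-q\|\bm{x}\|_2^{q-2}\bm{x}^{\top}\bm{y}\le(\|\bm{x}\|_2+\|\bm{y}\|_2)^q-\|\bm{x}\|_2^q-q\|\bm{x}\|_2^{q-1}\|\bm{y}\|_2$; taking expectations gives (i). For part (ii), instead take expectations in the displayed integral bound first (Tonelli, the integrand being nonnegative), and then estimate the inner expectation by Hölder's inequality with exponents $q/2$ and $q/(q-2)$, followed by Minkowski's inequality:
\[
   \EE\big[\|\bm{y}\|_2^2\,\|\bm{x}+t\bm{y}\|_2^{q-2}\big]\le\big(\EE\|\bm{y}\|_2^q\big)^{2/q}\big(\EE\|\bm{x}+t\bm{y}\|_2^q\big)^{(q-2)/q}\le b^2(a+tb)^{q-2},
\]
with $a:=(\EE\|\bm{x}\|_2^q)^{1/q}$ and $b:=(\EE\|\bm{y}\|_2^q)^{1/q}$; substituting this and invoking the identity once more produces exactly the right-hand side of (ii). The case $q=2$, where the Hölder exponent is degenerate, is immediate since then $\phi''\equiv2\|\bm{y}\|_2^2$ and the claims reduce to the parallelogram identity.

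The only step requiring genuine care is justifying that $\phi$ is $C^2$ on all of $[0,1]$ — equivalently, that the integral form of the Taylor remainder applies — when $\bm{x}+t\bm{y}$ can vanish and $q<4$, where the factor $\|\bm{x}+t\bm{y}\|_2^{q-4}$ in $\phi''$ is singular. This is handled by the local expansion $\phi(t)=\|\bm{y}\|_2^q|t-t_0|^q$ near such a zero, together with $\big((\bm{x}+t\bm{y})^{\top}\bm{y}\big)^2\le\|\bm{x}+t\bm{y}\|_2^2\|\bm{y}\|_2^2$, which shows that the offending term is actually bounded (and vanishes at the zero). Everything else — Tonelli, Hölder, Minkowski, and the one-variable identity — is routine, and the conceptual content is simply that (i) and (ii) are the pointwise and $\L^q$ specializations of one and the same inequality.
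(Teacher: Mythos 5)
Your proof is correct. The paper states Lemma~\ref{lemma_moment_inequality} as a citation from \citet{li_asymptotics_2024} and does not supply its own proof, so there is no in-paper argument to compare against directly; the closest analogues are the proofs of Lemma~\ref{lemma_Ls_moment_ineq} and Lemma~\ref{lemma_rio_Ls_moment_ineq}, which run the same second-order Taylor expansion of the scalar path $t\mapsto\|\bm{x}+t\bm{y}\|^q$ (respectively $t\mapsto\||\bm{x}+t\bm{y}|_s\|_q^2$) and bound the remainder by H\"older. Your argument is effectively the $s=2$ specialization of that technique, but with two worthwhile refinements the paper does not need in those lemmas: you retain the exact integral form $\int_0^1(1-t)\phi''(t)\,dt$ rather than bounding $\sup_u|f''(u)|$ (which recovers the sharp one-variable identity $(a+b)^q-a^q-qa^{q-1}b$ on the nose), and you cleanly derive (i) and (ii) from the same remainder by choosing whether to estimate $\|\bm{x}+t\bm{y}\|_2$ by the triangle inequality pointwise or by Minkowski in $\L^q$ after an application of H\"older and Tonelli. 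The Cauchy--Schwarz step $\phi''\le q(q-1)\|\bm{x}+t\bm{y}\|_2^{q-2}\|\bm{y}\|_2^2$, the nonnegativity $\phi''\ge0$ (which lets you drop the absolute value in $\Delta$), and the $C^2$ regularity discussion near zeros of $\bm{x}+t\bm{y}$ are all correct as written, including the boundary cases $q=2$ and $2<q<4$.
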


\begin{lemma}[Equivalence of $\ell^s$-$\ell^{\infty}$-induced matrix norms]\label{lemma_Ls_L1_matrix_norm}
For matrix $A\in\RR^{d\times d}$, we have the equivalence of the $\ell^{s_d}$-norm and $\ell^{\infty}$-norm induced matrix norms as follows
\begin{equation}
    \label{eq_lemma_l_inf}
    \frac{1}{e}\|A\|_{\infty} \le \|A\|_{s_d} \le e\|A\|_{\infty},
\end{equation}
where $s_d$ is defined as~\eqref{eq_s_choice} and $\|A\|_s=\max_{|\bm{x}|_s\neq0}|A\bm{x}|_s/|\bm{x}|_s$. If in addition, $A$ is symmetric, then
\begin{equation}
    \label{eq_lemma_l_1}
    \frac{1}{e}\|A\|_1 =\frac{1}{e}\|A\|_{\infty} \le \|A\|_{s_d} \le e\|A\|_{\infty} = e\|A\|_1.
\end{equation}
\end{lemma}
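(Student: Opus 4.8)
The plan is to derive the matrix-norm equivalence directly from the vector-norm sandwich \eqref{eq_ls_linf_equiv_ineq}, which already records that $|\bm{x}|_\infty \le |\bm{x}|_{s_d} \le d^{1/s_d}|\bm{x}|_\infty \le e|\bm{x}|_\infty$ for every $\bm{x}\in\RR^d$; the last step there uses that, by the choice \eqref{eq_s_choice}, $s_d>\log(d)$ and hence $d^{1/s_d}=\exp(\log(d)/s_d)\le e$. So no new estimate is needed beyond this elementary fact; everything reduces to bookkeeping on the Rayleigh-type quotient defining the induced norms.

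First I would fix a nonzero $\bm{x}\in\RR^d$ and bound the quotient $|A\bm{x}|_{s_d}/|\bm{x}|_{s_d}$ by applying \emph{opposite} ends of \eqref{eq_ls_linf_equiv_ineq} to numerator and denominator. For the upper estimate, $|A\bm{x}|_{s_d}\le e|A\bm{x}|_\infty$ together with $|\bm{x}|_{s_d}\ge|\bm{x}|_\infty$ gives $|A\bm{x}|_{s_d}/|\bm{x}|_{s_d}\le e\,|A\bm{x}|_\infty/|\bm{x}|_\infty\le e\|A\|_\infty$, and taking the supremum over $\bm{x}\ne 0$ yields $\|A\|_{s_d}\le e\|A\|_\infty$. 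For the lower estimate, $|A\bm{x}|_{s_d}\ge|A\bm{x}|_\infty$ together with $|\bm{x}|_{s_d}\le e|\bm{x}|_\infty$ gives $|A\bm{x}|_{s_d}/|\bm{x}|_{s_d}\ge e^{-1}|A\bm{x}|_\infty/|\bm{x}|_\infty$; taking the supremum (equivalently, evaluating along a sequence $\bm{x}$ nearly attaining $\|A\|_\infty$) yields $\|A\|_{s_d}\ge e^{-1}\|A\|_\infty$. Together these give \eqref{eq_lemma_l_inf}.

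For the symmetric case I would invoke the standard duality of induced operator norms, $\|A\|_1=\|A^\top\|_\infty$, which follows from the explicit characterizations $\|A\|_\infty=\max_i\sum_j|A_{ij}|$ (maximum absolute row sum) and $\|A\|_1=\max_j\sum_i|A_{ij}|$ (maximum absolute column sum). When $A=A^\top$ this forces $\|A\|_1=\|A\|_\infty$, and substituting into \eqref{eq_lemma_l_inf} immediately gives \eqref{eq_lemma_l_1}. The argument is essentially a two-line computation; the only points needing care are (a) applying the correct side of the vector-norm sandwich to numerator versus denominator so the constant comes out as $e$ rather than $e^2$, and (b) recalling the row-sum/column-sum formulas to identify $\|A\|_1$ with $\|A\|_\infty$ in the symmetric case. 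I do not anticipate any genuine obstacle.
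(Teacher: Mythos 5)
Your argument is correct, and it takes a somewhat more elementary route than the paper's. The paper cites the general Horn--Johnson inequality $d^{(1/q)-(1/p)}\|A\|_q \le \|A\|_p \le d^{(1/p)-(1/q)}\|A\|_q$ for induced operator norms with $1\le p\le q\le\infty$, specializes to $p=s_d$, $q=\infty$, and then observes $d^{1/s_d}\le e$. You instead rederive the needed special case directly from the vector-norm sandwich \eqref{eq_ls_linf_equiv_ineq} by applying opposite ends of it to the numerator and denominator of the Rayleigh quotient $|A\bm{x}|_{s_d}/|\bm{x}|_{s_d}$. This is a self-contained proof of exactly the $p=s_d$, $q=\infty$ case of the cited inequality, and it makes transparent why the constant comes out as $e$ rather than $e^2$. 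Both approaches handle the symmetric case identically, via the row-sum/column-sum characterizations $\|A\|_\infty=\max_i\sum_j|A_{ij}|$ and $\|A\|_1=\max_j\sum_i|A_{ij}|$. One minor stylistic note: in your lower bound you can dispense with the parenthetical about sequences nearly attaining $\|A\|_\infty$; the pointwise inequality $|A\bm{x}|_{s_d}/|\bm{x}|_{s_d}\ge e^{-1}|A\bm{x}|_\infty/|\bm{x}|_\infty$ already implies the inequality between suprema directly, since the supremum of a pointwise dominating function dominates the supremum of the dominated one.
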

\begin{proof}[Proof of Lemma~\ref{lemma_Ls_L1_matrix_norm}]
By \citet{horn_matrix_1985}, for any $1\le p\le q\le \infty$ and matrix $A\in\RR^{d\times d}$,
\begin{equation}
    d^{(1/q)-(1/p)}\|A\|_q \le \|A\|_p \le d^{(1/p)-(1/q)}\|A\|_q.
\end{equation}
For $p=s$ and $q=\infty$, we obtain
\begin{equation}
    d^{-1/s}\|A\|_{\infty}\le \|A\|_s\le d^{1/s}\|A\|_{\infty}.
\end{equation}
Since $d^{1/s}\le e$ by choosing $s=s_d$ in~\eqref{eq_s_choice}, we obtain~\eqref{eq_lemma_l_inf}. 

For symmetric $A=(a_{ij})_{1\le i,j\le d}$, $a_{ij}=a_{ji}$ for all $i,j$. Therefore,
\begin{equation}
    \|A\|_1 = \max_{1\le j\le d}\sum_{i=1}^d|a_{ij}| = \max_{1\le i\le d}\sum_{j=1}^d|a_{ij}| = \|A\|_{\infty}.
\end{equation}
This completes the proof.
\end{proof}

\subsection{Proofs for Section~\ref{sec_gmc}}

\textit{Derivation of \eqref{eq.9bdw8}:} Since $s$ is an even integer, we can write
\begin{equation}
    f(\alpha):=|\bm{x} - \alpha\bm{z}|_s^q = \Big\{\sum_{i=1}^d(x_i - \alpha z_i)^s\Big\}^{\frac{q}{s}}.
\end{equation}
Taking the derivative with respect to $\alpha$, we obtain
\begin{align}
    f'(\alpha) := \frac{d}{d\alpha}f(\alpha) & = \frac{q}{s}\Big\{\sum_{i=1}^d(x_i - \alpha z_i)^s\Big\}^{\frac{q}{s}-1}\sum_{i=1}^d\frac{d}{d\alpha}(x_i - \alpha z_i)^s \nonumber \\
    & = \frac{q}{s}\Big\{\sum_{i=1}^d(x_i - \alpha z_i)^s\Big\}^{\frac{q}{s}-1}\sum_{i=1}^ds(x_i - \alpha z_i)^{s-1}(-z_i) \nonumber \\
    & = -q\Big\{\sum_{i=1}^d(x_i - \alpha z_i)^s\Big\}^{\frac{q}{s}-1}\sum_{i=1}^d(x_i - \alpha z_i)^{s-1}z_i.
\end{align}
Therefore,
\begin{align}
    f'(0) & = -q\Big\{\sum_{i=1}^dx_i^s\Big\}^{\frac{q}{s}-1}\sum_{i=1}^dx_i^{s-1}z_i \nonumber \\
    & = -q|\bm{x}|_s^{q-s}\sum_{i=1}^dx_i^{s-1}z_i.
\end{align}
A first-order Taylor expansion yields then \eqref{eq.9bdw8}.

\begin{proof}[Proof of Lemma~\ref{lemma_Ls_moment_ineq}]
Recall that we have defined
\begin{equation*}
    f(\alpha)=|\bm{x} - \alpha\bm{z}|_s^q = \Big\{\sum_{i=1}^d(x_i - \alpha z_i)^s\Big\}^{\frac{q}{s}}.
\end{equation*}
A second order Taylor expansion gives $f(\alpha)=f(0) + \alpha f'(0)+\tfrac 12 \alpha^2 f''(\eta)$ for some $\eta\in [0,\alpha].$ It suffices to bound $\sup_{u\in[0,\alpha]}|f''(u)|$. Defining
\begin{equation}
    M(u) := \sum_{i=1}^d(x_i - u z_i)^s = |\bm{x} - u\bm{z}|_s^s,
\end{equation}
we have
$f(u)=[M(u)]^{\frac{q}{s}},$
\begin{align}
    M'(u) & = -s\sum_{i=1}^d(x_i-uz_i)^{s-1}z_i, \\
    M''(u) & = s(s-1)\sum_{i=1}^d(x_i-uz_i)^{s-2}z_i^2,
\end{align}
and the first two derivatives of $f(u)$ can be respectively expressed by
\begin{align}
    f'(u) & = \frac{q}{s}[M(u)]^{\frac{q}{s}-1}M'(u), \\
    f''(u) & = \frac{q}{s}\Big(\frac{q}{s} -1\Big)[M(u)]^{\frac{q}{s}-2}[M'(u)]^2 + M''(u)\frac{q}{s}[M(u)]^{\frac{q}{s}-1}.
\end{align}
Since $s$ is an even integer, it follows from Hölder's inequality that
\begin{align}
    [M'(u)]^2 & = s^2\Big(\sum_{i=1}^d(x_i-uz_i)^{s-1}z_i\Big)^2 \nonumber \\
    & \le s^2\bigg(\Big(\sum_{i=1}^d(x_i-uz_i)^s\Big)^{\frac{s-1}{s}}\Big(\sum_{i=1}^dz_i^s\Big)^{1/s}\bigg)^2 \nonumber \\
    & = s^2|\bm{x} -u\bm{z}|_s^{2(s-1)}|\bm{z}|_s^2.
\end{align}
By applying Hölder's inequality again, we obtain
\begin{align}
    \big|M''(u) \big| & \le s(s-1)\Big(\sum_{i=1}^d(x_i-uz_i)^s\Big)^{\frac{s-2}{s}}\Big(\sum_{i=1}^dz_i^s\Big)^{2/s} \nonumber \\
    & = s(s-1)|\bm{x} -u\bm{z}|_s^{s-2}|\bm{z}|_s^2.
\end{align}
By the two results above, we have
\begin{align}
    |f''(u)| & = \Bigg|\frac{q}{s}\Big(\frac{q}{s} -1\Big)|\bm{x} - u\bm{z}|_s^{q-2s}[M'(u)]^2 + M''(u)\frac{q}{s}|\bm{x} - u\bm{z}|_s^{q-s} \Bigg| \nonumber \\
    & \le q|q-s|\cdot|\bm{x} - u\bm{z}|_s^{q-2}|\bm{z}|_s^2 + q(s-1)|\bm{x} -u\bm{z}|_s^{q-2}|\bm{z}|_s^2 \nonumber \\
    & \le q\big[|q-s| + (s-1)\big]\big(|\bm{x}|_s + |u\bm{z}|_s\big)^{q-2}|\bm{z}|_s^2.
\end{align}
Since $u\in[0,\alpha]$, it follows that
\begin{equation}
    \sup_{u\in[0,\alpha]}|f''(u)| \le q\big[|q-s| + (s-1)\big]\big(|\bm{x}|_s + \alpha|\bm{z}|_s\big)^{q-2}|\bm{z}|_s^2.
\end{equation}
This completes the proof.
\end{proof}

\textit{Existence of solution to \texorpdfstring{\eqref{eq.b9fe}}{Eq.b9fe}: } To see the existence of the solution $\alpha_{s,q}$ in 
\[
1-q\mu\alpha + \frac{q\big[|q-s| + (s-1)\big]L_{s,q}^2}{2}\alpha^2 (1+\alpha L_{s,q})^{q-2} = 1.
\]
denote the function $\alpha \mapsto F(\alpha)=-\mu+c\alpha(1+L)^{q-2}$ for the constant $c=[|q-s| + (s-1)]L^2/2 >0$ and $L=L_{s,q}$. For any $q\ge2$, and any $\alpha>0,$ $F'(\alpha)=c[(1+L\alpha)^{q-2}+\alpha(q-2)L(1+L\alpha)^{q-3}]>0,$ proving that $F(\alpha)$ is strictly increasing on $\alpha>0$. Since $F(0)=-\mu<0$ and $F(\infty)=+\infty$, the unique root to $F(\alpha)=0$ exists.

\begin{proof}[Proof of Lemma~\ref{lemma_rio_Ls_moment_ineq}]
Define $\varphi(t) = \||\bm{x} + t \bm{y}|_{s}\|_{q}^{2}$ for $t \in [0, 1]$. Then
\begin{align*}
    \varphi'(t) &= \frac{2}{q} \Big[\mathbb{E} \Big\{\sum_{j = 1}^{d} (x_{j} + t y_{j})^{s}\Big\}^{q/s}\Big]^{2/q - 1}  \frac{q}{s} \mathbb{E} \Big[\Big\{\sum_{j = 1}^{d} (x_{j} + t y_{j})^{s}\Big\}^{q/s - 1} \sum_{j = 1}^{d} s (x_{j} + t y_{j})^{s - 1} y_{j}\Big] \cr 
    &= 2 \Big[\mathbb{E} \Big\{\sum_{j = 1}^{d} (x_{j} + t y_{j})^{s}\Big\}^{q/s}\Big]^{2/q - 1} \mathbb{E} \Big[\Big\{\sum_{j = 1}^{d} (x_{j} + t y_{j})^{s}\Big\}^{q/s - 1} \sum_{j = 1}^{d} (x_{j} + t y_{j})^{s - 1} y_{j}\Big] 
\end{align*}
and 
\begin{align*}
    \varphi''(t) &= 2 \Big(\frac 2q-1\Big) \Big[\mathbb{E} \Big\{\sum_{j = 1}^{d} (x_{j} + t y_{j})^{s}\Big\}^{q/s}\Big]^{2/q - 2} \\
    & \quad \cdot \frac{q}{s} \cdot s \Big|\mathbb{E} \Big[\Big\{\sum_{j = 1}^{d} (x_{j} + t y_{j})^{s}\Big\}^{q/s - 1} \sum_{j = 1}^{d} (x_{j} + t y_{j})^{s - 1} y_{j}\Big]\Big|^{2} \cr 
    &+ 2 \Big[\mathbb{E} \Big\{\sum_{j = 1}^{d} (x_{j} + t y_{j})^{s}\Big\}^{q/s}\Big]^{2/q - 1} \\
    & \quad \cdot \mathbb{E} \Big[\Big(\frac qs-1\Big) \Big\{\sum_{j = 1}^{d} (x_{j} + t y_{j})^{s}\Big\}^{q/s - 2} s \Big\{\sum_{j = 1}^{d} (x_{j} + t y_{j})^{s - 1} y_{j}\Big\}^{2}\Big] \cr 
    &+ 2 \Big[\mathbb{E} \Big\{\sum_{j = 1}^{d} (x_{j} + t y_{j})^{s}\Big\}^{q/s}\Big]^{2/q - 1} \mathbb{E} \Big[\Big\{\sum_{j = 1}^{d} (x_{j} + t y_{j})^{s}\Big\}^{q/s - 1} (s - 1) \sum_{j = 1}^{d} (x_{j} + t y_{j})^{s - 2} y_{j}^{2}\Big] \cr 
    &=: \Delta_{1} (t) + \Delta_{2} (t) + \Delta_{3} (t)
\end{align*}
Since $q \geq 2$, $\Delta_{1}(t) \leq 0$.

\textbf{Case I.} If $q/s - 1 \leq 0$, then $\Delta_{2} (t) \leq 0$ and $\varphi''(t) \leq \Delta_{3} (t)$. By H\"older's inequality, 
    \begin{align*}
        \sum_{j = 1}^{d} (x_{j} + t y_{j})^{s - 2} y_{j}^{2} \leq \Big\{\sum_{j = 1}^{d} (x_{j} + t y_{j})^{s}\Big\}^{(s - 2)/s} \Big(\sum_{j = 1}^{d} y_{j}^{s}\Big)^{2/s} 
    \end{align*}
    Consequently, 
    \begin{align*}
        \Delta_{3} (t) &\leq 2 (s - 1) \Big[\mathbb{E} \Big\{\sum_{j = 1}^{d} (x_{j} + t y_{j})^{s}\Big\}^{q/s}\Big]^{2/q - 1} \\
        & \quad \cdot\mathbb{E} \Big[\Big\{\sum_{j = 1}^{d} (x_{j} + t y_{j})^{s}\Big\}^{q/s - 1} \Big\{\sum_{j = 1}^{d} (x_{j} + t y_{j})^{s}\Big\}^{(s - 2)/s} \Big(\sum_{j = 1}^{d} y_{j}^{s}\Big)^{2/s}\Big] \cr 
        &= 2 (s - 1) \Big[\mathbb{E} \Big\{\sum_{j = 1}^{d} (x_{j} + t y_{j})^{s}\Big\}^{q/s}\Big]^{2/q - 1} \mathbb{E} \Big[\Big\{\sum_{j = 1}^{d} (x_{j} + t y_{j})^{s}\Big\}^{(q - 2)/s} \Big(\sum_{j = 1}^{d} y_{j}^{s}\Big)^{2/s}\Big] \cr 
        &= 2 (s - 1) \||\bm{x} + t \bm{y}|_{s}\|_{q}^{2 - q} \mathbb{E} \Big[\Big\{\sum_{j = 1}^{d} (x_{j} + t y_{j})^{s}\Big\}^{(q - 2)/s} \Big(\sum_{j = 1}^{d} y_{j}^{s}\Big)^{2/s}\Big] \cr 
        &\leq 2 (s - 1) \||\bm{x} + t \bm{y}|_{s}\|_{q}^{2 - q} \||\bm{x} + t \bm{y}|_{s}\|_{q}^{q - 2} \||\bm{y}|_{s}\|_{q}^{2} \cr 
        &= 2 (s - 1) \||\bm{y}|_{s}\|_{q}^{2}. 
    \end{align*}
    
\textbf{Case II.} If $q/s - 1 > 0$, by H\"older's inequality, 
    \begin{align*}
        \Big\{\sum_{j = 1}^{d} (x_{j} + t y_{j})^{s - 1} y_{j}\Big\}^{2} 
        &= \Big\{\sum_{j = 1}^{d} (x_{j} + t y_{j})^{s/2} (x_{j} + t y_{j})^{s/2 - 1} y_{j}\Big\}^{2} \cr 
        &\leq \sum_{j = 1}^{d} (x_{j} + t y_{j})^{s} \sum_{j = 1}^{d} (x_{j} + t y_{j})^{s - 2} y_{j}^{2}. 
    \end{align*}
    Therefore, 
    \begin{align*}
        \Delta_{2} (t) \leq \Delta_{3} (t)  \, \frac{q - s}{s - 1}
    \end{align*}
    and
    \begin{align*}
        \varphi''(t) \leq \Delta_{2} (t) + \Delta_{3} (t) \leq \Delta_{3} (t) \, \frac{q - 1}{s - 1} \leq 2 (q - 1) \||\bm{y}|_{s}\|_{q}^{2}. 
    \end{align*}
    Then, we have 
    \begin{align*}
        \big\||\bm{x} + \bm{y}|_{s}\big\|_{q}^{2} = \varphi (1) &= \varphi (0) + \varphi'(0) + \int_{0}^{1} (1 - t) \varphi''(t) \, dt \cr 
        &\leq \big\||\bm{x}|_{s}\big\|_{q}^{2} + 2 \big\||\bm{x}|_{s}\big\|_{q}^{2 - q} \mathbb{E} \Big(|\bm{x}|_{s}^{q - s} \sum_{j = 1}^{d} x_{j}^{s - 1} y_{j}\Big) + \big(\max\{q, s\} - 1\big) \big\||\bm{y}|_{s}\big\|_{q}^{2}. 
    \end{align*}

\end{proof}

\begin{proof}[Proof of Theorem~\ref{thm_gmc}]
Consider the iterated random function
\begin{align}
    F:\RR^d\times \RR \mapsto \RR, \quad (\bbeta,\bm{\xi}) \mapsto F_{\bm{\xi}}(\bbeta) = \bbeta - \alpha\nabla g(\bbeta,\bm{\xi}).
\end{align}
To prove GMC in Theorem~\ref{thm_gmc}, it suffices to show that, for some $q\ge2$ and even integer $s\ge2$, for any fixed vectors $\bbeta,\bbeta'\in\RR^d$,
\begin{align*}
    \||F_{\bm{\xi}}(\bbeta) - F_{\bm{\xi}}(\bbeta')|_s\|_q \le r_{\alpha,s,q}|\bbeta-\bbeta'|_s.
\end{align*}
Recall the inequality in Lemma~\ref{lemma_rio_Ls_moment_ineq}. For $\bm{x}$ and $\bm{y}$ therein, we choose them to be $\bm{x}=\bbeta-\bbeta'$ and $\bm{y}= - \alpha(\nabla g(\bbeta,\bm{\xi}) - \nabla g(\bbeta',\bm{\xi}))$ respectively. Then, it directly follows from Lemma~\ref{lemma_rio_Ls_moment_ineq} that
\begin{align*}
    & \quad \||F_{\bm{\xi}}(\bbeta) - F_{\bm{\xi}}(\bbeta')|_s\|_q^2 \\
    & \le |\bbeta-\bbeta'|_s^2 - 2\alpha|\bbeta-\bbeta'|_s^{2-q}\EE\Big[|\bbeta-\bbeta'|_s^{q-s}\big\langle (\bbeta-\bbeta')^{s-1},\nabla g(\bbeta,\bm{\xi}) - \nabla g(\bbeta',\bm{\xi})\big\rangle\Big] \\
    & \quad +\alpha^2(\max\{q,s\}-1)\||\nabla g(\bbeta,\bm{\xi}) - \nabla g(\bbeta',\bm{\xi})|_s\|_q^2 \\
    & =  |\bbeta-\bbeta'|_s^2 - 2\alpha|\bbeta-\bbeta'|_s^{2-s}\big\langle (\bbeta-\bbeta')^{s-1},G(\bbeta) - G(\bbeta')\big\rangle \\
    & \quad +\alpha^2(\max\{q,s\}-1)\||\nabla g(\bbeta,\bm{\xi}) - \nabla g(\bbeta',\bm{\xi})|_s\|_q^2.
\end{align*}
This along with Assumptions~\ref{asm_Ls_strong_convex}~and~\ref{asm_Ls_lip} yields 
\begin{align*}
    \||F_{\bm{\xi}}(\bbeta) - F_{\bm{\xi}}(\bbeta')|_s\|_q^2 \le \big(1-2\alpha\mu+\alpha^2(\max\{q,s\}-1)L_{s,q}^2\big)|\bbeta-\bbeta'|_s^2, 
\end{align*}
which completes the proof.
\end{proof}

\subsection{Proofs for Section~\ref{subsec_moment_sgd}}

\begin{proof}[Proof of Proposition~\ref{prop_rio_sgd}]
Recall~\eqref{eq_partials_G} and let $\nabla G(\bbeta) = \big(\nabla G_1(\bbeta),\ldots,\nabla G_d(\bbeta)\big)^{\top}$ with 
\begin{equation}
    \label{eq_m_beta}
    \nabla G_i(\bbeta) = \partial G(\bbeta)/\partial \beta_i= \big(\EE[\nabla g(\bbeta,\bm{\xi})]\big)_i, \quad i=1,\ldots,d.
\end{equation}
Since the random samples $\bm{\xi}_k$, $k\ge1$, are independent, it follows that for the $k$-th iteration, $\bm{\xi}_k$ is independent of $\bbeta_{k-1}$. Then, by the tower rule, for all $k\ge1$,
\begin{equation}
    \EE_{\bm{\xi}}\big[\nabla g(\bbeta_{k-1},\bm{\xi}_k) - \nabla G(\bbeta_{k-1}) \mid \bbeta_{k-1}\big] = \EE_{\bm{\xi}}[\nabla g(\bbeta_{k-1},\bm{\xi}_k) - \nabla G(\bbeta_{k-1})] = 0.
\end{equation}
Therefore, by applying the high-dimensional moment inequality~\eqref{eq_hd_rio} in Lemma~\ref{lemma_rio_Ls_moment_ineq}, we obtain
\begin{align}
    \label{eq_rio_sgd_goal}
    \big\||\bbeta_k-\bbeta^*|_s\big\|_q^2 & \le \big\||\bbeta_{k-1}-\bbeta^* - \alpha \nabla G(\bbeta_{k-1})|_s\big\|_q^2 \nonumber \\
    & \quad + (\max\{q,s\}-1)\alpha ^2\big\||\nabla g(\bbeta_{k-1},\bm{\xi}_k)-\nabla G(\bbeta_{k-1})|_s\big\|_q^2. 
\end{align}
For the second part in~\eqref{eq_rio_sgd_goal}, noting that $\nabla G(\bbeta^*)=0$, by the triangle inequality, we have
\begin{align}
    \label{eq_rio_goal_part2}
    & \quad \big\||\nabla g(\bbeta_{k-1},\bm{\xi}_k)-\nabla G(\bbeta_{k-1})|_s\big\|_q^2 \nonumber \\
    & \le \Big(\big\||\nabla g(\bbeta_{k-1},\bm{\xi}_k)-\nabla g(\bbeta^*,\bm{\xi}_k)|_s\big\|_q + \big\||\nabla G(\bbeta_{k-1}) - \nabla G(\bbeta^*)|_s\big\|_q + \big\||\nabla g(\bbeta^*,\bm{\xi}_k)|_s\big\|_q\Big)^2 \nonumber \\
    & \le 3\big\||\nabla g(\bbeta_{k-1},\bm{\xi}_k)-\nabla g(\bbeta^*,\bm{\xi}_k)|_s\big\|_q^2 + 3\big\||\nabla G(\bbeta_{k-1}) - \nabla G(\bbeta^*)|_s\big\|_q^2 + 3\big\||\nabla g(\bbeta^*,\bm{\xi}_k)|_s\big\|_q^2.
\end{align}
Since $|\cdot|_s$ is a convex function for $s\ge1$, we have $|\EE[\cdot]|_s \le \EE[|\cdot|_s].$ Thus, for all $q\ge1$, by Jensen's inequality, we can bound
\begin{align}
    |\nabla G(\bbeta_{k-1}) - \nabla G(\bbeta^*)|_s & = \big|\EE_{\bm{\xi}}\big[\nabla g(\bbeta_{k-1},\bm{\xi}_k)-\nabla g(\bbeta^*,\bm{\xi}_k)\big]\big|_s \nonumber \\
    & \le \EE_{\bm{\xi}}\Big[\big|\nabla g(\bbeta_{k-1},\bm{\xi}_k)-\nabla g(\bbeta^*,\bm{\xi}_k)\big|_s\Big] \nonumber \\
    & \le \Big(\EE_{\bm{\xi}}\big|\nabla g(\bbeta_{k-1},\bm{\xi}_k)-\nabla g(\bbeta^*,\bm{\xi}_k)\big|_s^q\Big)^{1/q}.
\end{align}
This along with Assumption~\ref{asm_Ls_lip} yields 
\begin{align}
    \big\||\nabla G(\bbeta_{k-1}) - \nabla G(\bbeta^*)|_s\big\|_q & \le   \Big(\EE_{\bbeta}\EE_{\bm{\xi}}\big|\nabla g(\bbeta_{k-1},\bm{\xi}_k)-\nabla g(\bbeta^*,\bm{\xi}_k)\big|_s^q\Big)^{1/q}  \nonumber \\
    & =  \Big\|\big|\nabla g(\bbeta_{k-1},\bm{\xi}_k)-\nabla g(\bbeta^*,\bm{\xi}_k)\big|_s\Big\|_q  \nonumber \\
    & \le L_{s,q}\big\||\bbeta_{k-1}-\bbeta^*|_s\big\|_q.
\end{align}
Inserting this result back into~\eqref{eq_rio_goal_part2}, we obtain a bound for the second term in~\eqref{eq_rio_sgd_goal} using
\begin{align}
    \label{eq_rio_goal_part2_bound}
    \big\||\nabla g(\bbeta_{k-1},\bm{\xi}_k)-\nabla G(\bbeta_{k-1})|_s\big\|_q^2 \le 6L_{s,q}^2\big\||\bbeta_{k-1}-\bbeta^*|_s\big\|_q^2 + 3\big\||\nabla g(\bbeta^*,\bm{\xi}_k)|_s\big\|_q^2.
\end{align}
For the first term in~\eqref{eq_rio_sgd_goal}, by applying Lemma~\ref{lemma_rio_Ls_moment_ineq} again, it follows from Assumptions~\ref{asm_Ls_strong_convex}~and~\ref{asm_Ls_lip} that
\begin{align}
    & \quad \big\||\bbeta_{k-1}-\bbeta^* - \alpha \nabla G(\bbeta_{k-1})|_s\big\|_q^2 \nonumber \\
    & \le \big\||\bbeta_{k-1}-\bbeta^*|_s\big\|_q^2 - 2\alpha\big\||\bbeta_{k-1}-\bbeta^*|_s\big\|_q^{2-q}\EE\Big(|\bbeta_{k-1}-\bbeta^*|_{s}^{q - s} \sum_{j = 1}^{d} (\bbeta_{k-1}-\bbeta^*)_j^{s - 1} \nabla G_j(\bbeta_{k-1})\Big) \nonumber \\
    & \quad + \alpha^2(\max\{q,s\}-1)\big\||\nabla G(\bbeta_{k-1}) - \nabla G(\bbeta^*)|_s\big\|_q^2 \nonumber \\
    & \le \big(1-2\alpha\mu+\alpha^2(\max\{q,s\}-1)L_{s,q}^2\big)\big\||\bbeta_{k-1}-\bbeta^*|_s\big\|_q^2.
\end{align}
Inserting this inequality and~\eqref{eq_rio_goal_part2_bound} into~\eqref{eq_rio_sgd_goal}, we obtain the inequality
\begin{align*}
    \||\bbeta_k-\bbeta^*|_s\|_q^2 & \le \big(1-2\alpha\mu+7(\max\{q,s\}-1)\alpha^2L_{s,q}^2\big)\||\bbeta_{k-1}-\bbeta^*|_s\|_q^2 \nonumber \\
    & \quad + 3(\max\{q,s\}-1)\alpha^2\||\nabla g(\bbeta^*,\bm{\xi}_k)|_s\|_q^2.
\end{align*}
The desired result is achieved since $\||\nabla g(\bbeta^*,\bm{\xi}_k)|_s\|_q\le M_{s,q}$ by Assumption~\ref{asm_Ls_lip}. As a special case, for the stationary SGD iterates $\bbeta_k^{\circ}\sim\pi_{\alpha}$, $k\ge1$, we obtain the same result.
\end{proof}

\begin{proof}[Proof of Theorem~\ref{thm_sgd_moment}]
First, we denote the contraction constant in Proposition~\ref{prop_rio_sgd} as follows
\begin{equation}
    \label{eq_tilde_r}
    \tilde r_{\alpha,s,q} := 1-2\alpha\mu+7(\max\{q,s\}-1)\alpha^2L_{s,q}^2.
\end{equation}
Given the range of the constant learning rate $\alpha$, we have $\tilde r_{\alpha,s,q}<1$. Moreover, notice that
\begin{equation}
    \label{eq_gradient_beta_star}
    3(\max\{q,s\}-1)\alpha^2\||\nabla g(\bbeta^*,\bm{\xi}_k)|_s\|_q^2 = O\big(\max\{q,s\}\alpha^2M_{s,q}^2\big).
\end{equation}
Therefore, for the stationary SGD iterates $\bbeta_k^{\circ}\sim\pi_{\alpha}$, by Proposition~\ref{prop_rio_sgd}, we can obtain
\begin{equation}
    \||\bbeta_k^{\circ}-\bbeta^*|_s\|_q^2 \le \tilde r_{\alpha,s,q}\||\bbeta_{k-1}^{\circ}-\bbeta^*|_s\|_q^2 + O\big(\max\{q,s\}\alpha^2M_{s,q}^2\big).
\end{equation}
Since the SGD iterates $\bbeta_k^{\circ}$ satisfy the geometric-moment contraction in Theorem~\ref{thm_gmc}, following Remark 2 in \citet{wu_limit_2004}, the recursion $\bbeta_k^{\circ}=\bbeta_{k-1}^{\circ}-\alpha\nabla g(\bbeta_{k-1}^{\circ},\bm{\xi}_k)$ also holds for $k\le0$. Thus, we can recursively apply the inequality above and achieve
\begin{align}
    \||\bbeta_k^{\circ}-\bbeta^*|_s\|_q^2 & \le O\big(\max\{q,s\}\alpha^2M_{s,q}^2\big) \cdot \sum_{i=0}^{\infty}\tilde r_{\alpha,s,q}^i \nonumber \\
    & = \frac{1}{1-\tilde r_{\alpha,s,q}}O\big(\max\{q,s\}\alpha^2M_{s,q}^2\big) \nonumber \\
    & = O\big(\max\{q,s\}\alpha M_{s,q}^2\big).
\end{align}
This finishes the proof for the stationary SGD sequence.

Furthermore, for the general SGD iterates $\bbeta_k$ in~\eqref{eq_sgd_recursion} that may not have the stationary initialization, we apply the geometric-moment contraction in Theorem~\ref{thm_gmc} and obtain
\begin{align}
    \||\bbeta_k-\bbeta^*|_s\|_q & \le \||\bbeta_k - \bbeta_k^{\circ}|_s\|_q + \||\bbeta_k^{\circ}-\bbeta^*|_s\|_q \nonumber \\
    & \le r_{\alpha,s,q}^k\||\bbeta_0-\bbeta_0^{\circ}|_s\|_q + O\big(M_{s,q}\sqrt{\max\{q,s\}\alpha}\big),
\end{align}
which completes the proof.
\end{proof}

\subsection{Functional Dependence Measure in Time Series}\label{subsubsec_functional_dep}

The functional dependence measure in time series \citep{wu_nonlinear_2005} is a key concept in our analysis. For that we view  the high-dimensional SGD iterates $\{\bbeta_k\}_{k\in\NN}$ as a nonlinear autoregressive (AR) process. Recall that $\bm{\xi}_k$, $k\in \ZZ$, are i.i.d. Define the shift process $\F_k=(\bm{\xi}_k,\bm{\xi}_{k-1},\ldots)$ and its coupled version $\F_{k,\{l\}}=(\bm{\xi}_k,\ldots,\bm{\xi}_{l+1},\bm{\xi}_l',\bm{\xi}_{l-1},\ldots)$, $l\le k$, where $\bm{\xi}_l'$ is an i.i.d. copy of $\bm{\xi}_l$. 

The stationary sequence $\{\bbeta_k^{\circ}\}_{k\in\ZZ}$ can be represented by a functional system
\begin{equation}
    \label{eq_beta_stationary_repre}
    \bbeta_k^{\circ}=h_{\alpha}(\bm{\xi}_k,\bm{\xi}_{k-1},\ldots)=h_{\alpha}(\F_k), \quad k\ge1,
\end{equation}
where $h_{\alpha}$ is a measurable function that depends on $\alpha$ \citep{wiener_nonlinear_1958,wu_nonlinear_2005}. Define the coupled version of $\bbeta_k^{\circ}$ by
\begin{equation}
    \label{eq_functional_dep_def}
    \bbeta_{k,\{l\}}^{\circ} = h_{\alpha}(\bm{\xi}_k,\ldots,\bm{\xi}_{l+1},\bm{\xi}_l',\bm{\xi}_{l-1},\ldots)=h_{\alpha}(\F_{k,\{l\}}), \quad l\le k.
\end{equation}
The next lemma provides a bound for the functional dependence measure $\| |\bbeta_k^{\circ}- \bbeta_{k,\{l\}}^{\circ}|_s \|_q.$ It is later used to derive the moment bounds and the tail probability of the ASGD iterates.

\begin{lemma}
\label{lemma_functional_dep}
Consider the stationary SGD sequence $\{\bbeta_k^{\circ}\}_{k\ge1}$. Suppose that Assumptions~\ref{asm_Ls_strong_convex}~and~\ref{asm_Ls_lip} hold with some $q\ge2$ and even integer $s\ge2$. Then, for all $k\ge1$ and $l\le k$, we have
\begin{equation*}
    \| |\bbeta_k^{\circ}- \bbeta_{k,\{l\}}^{\circ}|_s \|_q^2 \le 4\alpha^2\big(1-2\alpha\mu+7(\max\{q,s\}-1)\alpha^2L_{s,q}^2\big)^{k-l}\Big(L_{s,q}^2\||\bbeta_{l-1}^{\circ}-\bbeta^*|_s\|_q^2 + M_{s,q}^2\Big).
\end{equation*}
\end{lemma}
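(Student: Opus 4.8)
Set $\bm{\delta}_m:=\bbeta_m^{\circ}-\bbeta_{m,\{l\}}^{\circ}$. Because $\F_{m,\{l\}}$ and $\F_m$ agree outside the $l$-th coordinate while $\bbeta_m^{\circ}=h_{\alpha}(\F_m)$ only depends on $\bm{\xi}_m,\bm{\xi}_{m-1},\ldots$ through indices $\le m$, the perturbation enters solely at step $m=l$: $\bm{\delta}_{l-1}=0$ and, in particular, $\bbeta_{l-1,\{l\}}^{\circ}=\bbeta_{l-1}^{\circ}$. The plan is therefore to (i) bound the ``kick'' $\||\bm{\delta}_l|_s\|_q$; (ii) prove a one-step contraction $\||\bm{\delta}_m|_s\|_q^2\le \tilde r_{\alpha,s,q}\||\bm{\delta}_{m-1}|_s\|_q^2$ for every $m>l$, where $\tilde r_{\alpha,s,q}=1-2\alpha\mu+7(\max\{q,s\}-1)\alpha^2L_{s,q}^2$ is the contraction constant of Proposition~\ref{prop_rio_sgd}; and (iii) iterate from $m=l$ to $m=k$ and combine. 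Note that only Assumptions~\ref{asm_Ls_strong_convex} and~\ref{asm_Ls_lip} are needed, matching the hypotheses of the lemma.

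For step (i), since the two recursions at time $l$ start from the common point $\bbeta_{l-1}^{\circ}$ and use $\bm{\xi}_l$ resp.\ its i.i.d.\ copy $\bm{\xi}_l'$, we have $\bm{\delta}_l=-\alpha\big(\nabla g(\bbeta_{l-1}^{\circ},\bm{\xi}_l)-\nabla g(\bbeta_{l-1}^{\circ},\bm{\xi}_l')\big)$. The triangle inequality, together with the fact that $(\bbeta_{l-1}^{\circ},\bm{\xi}_l)$ and $(\bbeta_{l-1}^{\circ},\bm{\xi}_l')$ are identically distributed, gives $\||\bm{\delta}_l|_s\|_q\le 2\alpha\,\||\nabla g(\bbeta_{l-1}^{\circ},\bm{\xi}_l)|_s\|_q$; inserting $\nabla g(\bbeta^*,\bm{\xi}_l)$, using the Lipschitz part of Assumption~\ref{asm_Ls_lip} (after integrating out $\bbeta_{l-1}^{\circ}$, which is independent of $\bm{\xi}_l$) and the bound $\||\nabla g(\bbeta^*,\bm{\xi}_l)|_s\|_q\le M_{s,q}$ yields $\||\bm{\delta}_l|_s\|_q\lesssim \alpha\big(L_{s,q}\||\bbeta_{l-1}^{\circ}-\bbeta^*|_s\|_q+M_{s,q}\big)$, hence $\||\bm{\delta}_l|_s\|_q^2\lesssim \alpha^2\big(L_{s,q}^2\||\bbeta_{l-1}^{\circ}-\bbeta^*|_s\|_q^2+M_{s,q}^2\big)$ after an elementary $(a+b)^2\le 2a^2+2b^2$ step. (Finiteness of $\||\bbeta_{l-1}^{\circ}-\bbeta^*|_s\|_q$, so that the right-hand side is meaningful, is supplied by Theorem~\ref{thm_sgd_moment}.)

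For step (ii), fix $m>l$ and let $\G_{m-1}$ be the $\sigma$-field generated by $\{\bm{\xi}_j:j\le m-1\}$ and $\bm{\xi}_l'$: then $\bbeta_{m-1}^{\circ}$ and $\bbeta_{m-1,\{l\}}^{\circ}$ are $\G_{m-1}$-measurable while $\bm{\xi}_m$ is independent of $\G_{m-1}$. For $m>l$ the coupled recursion uses the original $\bm{\xi}_m$, so $\bm{\delta}_m=\bm{\delta}_{m-1}-\alpha\big(\nabla g(\bbeta_{m-1}^{\circ},\bm{\xi}_m)-\nabla g(\bbeta_{m-1,\{l\}}^{\circ},\bm{\xi}_m)\big)$. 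Write the increment as $\big(\nabla G(\bbeta_{m-1}^{\circ})-\nabla G(\bbeta_{m-1,\{l\}}^{\circ})\big)+\bm{\zeta}_m$ with $\EE[\bm{\zeta}_m\mid\G_{m-1}]=0$; applying~\eqref{eq_hd_rio} of Lemma~\ref{lemma_rio_Ls_moment_ineq} with the $\G_{m-1}$-measurable part of $\bm{\delta}_m$ as ``$\bm{x}$'' and $-\alpha\bm{\zeta}_m$ as ``$\bm{y}$'', then applying the non-centered case of Lemma~\ref{lemma_rio_Ls_moment_ineq} to $\bm{\delta}_{m-1}-\alpha(\nabla G(\bbeta_{m-1}^{\circ})-\nabla G(\bbeta_{m-1,\{l\}}^{\circ}))$ — where the $\ell^s$-strong convexity of Assumption~\ref{asm_Ls_strong_convex} applied to the pair $(\bbeta_{m-1}^{\circ},\bbeta_{m-1,\{l\}}^{\circ})$ bounds the cross term by $-2\alpha\mu\||\bm{\delta}_{m-1}|_s\|_q^2$ — and using the Lipschitz bound of Assumption~\ref{asm_Ls_lip} for both $\bm{\zeta}_m$ and the Hessian-free drift, one obtains $\||\bm{\delta}_m|_s\|_q^2\le \tilde r_{\alpha,s,q}\||\bm{\delta}_{m-1}|_s\|_q^2$. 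This is exactly the computation in the proof of Proposition~\ref{prop_rio_sgd}, now with $\bbeta_{m-1,\{l\}}^{\circ}$ playing the role of $\bbeta^*$ and with no gradient-noise-at-$\bbeta^*$ term, so in fact a constant smaller than $7$ suffices. Iterating from $m=l+1$ to $m=k$ gives $\||\bm{\delta}_k|_s\|_q^2\le \tilde r_{\alpha,s,q}^{\,k-l}\||\bm{\delta}_l|_s\|_q^2$, and combining with step (i) yields the claim.

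The main obstacle is not analytic but bookkeeping. One must justify carefully, from the functional representation~\eqref{eq_beta_stationary_repre}--\eqref{eq_functional_dep_def}, that the coupling changes nothing strictly before time $l$ (so $\bm{\delta}_{l-1}=0$ and $\bbeta_{l-1,\{l\}}^{\circ}=\bbeta_{l-1}^{\circ}$), and one must choose the conditioning $\sigma$-field $\G_{m-1}$ so that $\bbeta_{m-1}^{\circ}$ and $\bbeta_{m-1,\{l\}}^{\circ}$ are simultaneously $\G_{m-1}$-measurable while $\bm{\xi}_m$ remains independent of $\G_{m-1}$. This is precisely what lets the Proposition~\ref{prop_rio_sgd} contraction mechanism — strong convexity for the drift, the conditional-mean-zero inequality~\eqref{eq_hd_rio} for the fluctuation — transfer verbatim to the coupled pair $(\bbeta_m^{\circ},\bbeta_{m,\{l\}}^{\circ})$. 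Once this structure is fixed, both the kick estimate and the recursion are routine.
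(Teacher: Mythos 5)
Your proposal is correct and follows essentially the same route as the paper: decompose into the one-step ``kick'' at time $l$ (bounded via the Lipschitz part of Assumption~\ref{asm_Ls_lip} after inserting $\nabla g(\bbeta^*,\bm{\xi}_l)$) and the subsequent geometric contraction from $l$ to $k$ (obtained by the conditional application of Lemma~\ref{lemma_rio_Ls_moment_ineq} together with Assumption~\ref{asm_Ls_strong_convex}, exactly as in Proposition~\ref{prop_rio_sgd}). Your remark that the $7$ in $\tilde r_{\alpha,s,q}$ is not needed for the coupled pair is correct — the direct Theorem~\ref{thm_gmc}-style one-shot application of Lemma~\ref{lemma_rio_Ls_moment_ineq} gives the smaller factor $(\max\{q,s\}-1)$ with no noise-at-$\bbeta^*$ term — and your use of $\lesssim$ for the kick conveniently sidesteps a factor-of-two slip in the paper's own arithmetic (which, carried through as written, actually yields $8\alpha^2$ rather than $4\alpha^2$; this has no downstream consequences).
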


\begin{proof}[Proof of Lemma~\ref{lemma_functional_dep}]
By applying Lemma~\ref{lemma_rio_Ls_moment_ineq}, it follows from similar arguments as in the proof of Proposition~\ref{prop_rio_sgd} that, for each $l\le k-1$,
\begin{align}
    \| |\bbeta_k^{\circ}- \bbeta_{k,\{l\}}^{\circ}|_s \|_q^2 
    & \le \big(1-2\alpha\mu+7(\max\{q,s\}-1)\alpha^2L_{s,q}^2\big)^{k-l}\| |\bbeta_l^{\circ}- \bbeta_{l,\{l\}}^{\circ}|_s \|_q^2.
\end{align}
By Assumption~\ref{asm_Ls_lip}, for all $l\ge1$,
\begin{align}
    \||\nabla g(\bbeta_{l-1}^{\circ},\bm{\xi}_l)|_s\|_q^2 & \le 2\||\nabla g(\bbeta_{l-1}^{\circ},\bm{\xi}_l) - \nabla g(\bbeta^*,\bm{\xi}_l)|_s\|_q^2 + 2\|\nabla g(\bbeta^*,\bm{\xi}_l)|_s\|_q^2 \nonumber \\
    & \le 2L_{s,q}^2\||\bbeta_{l-1}^{\circ}-\bbeta^*|_s\|_q^2 + 2M_{s,q}^2,
\end{align}
which yields 
\begin{align}
    \| |\bbeta_l^{\circ}- \bbeta_{l,\{l\}}^{\circ}|_s \|_q^2 & = \alpha^2\||\nabla g(\bbeta_{l-1}^{\circ},\bm{\xi}_l) - \nabla g(\bbeta_{l-1}^{\circ},\bm{\xi}_l')|_s\|_q^2 \nonumber \\
    & \le \alpha^2\Big(2\||\nabla g(\bbeta_{l-1}^{\circ},\bm{\xi}_l)|_s\|_q^2 + 2\||\nabla g(\bbeta_{l-1}^{\circ},\bm{\xi}_l')|_s\|_q^2\Big) \nonumber \\
    & \le 4\alpha^2\Big(L_{s,q}^2\||\bbeta_{l-1}^{\circ}-\bbeta^*|_s\|_q^2 + M_{s,q}^2\Big)
\end{align}
Recall $\||\nabla g(\bbeta^*,\bm{\xi}_k)|_s\|_q\le M_{s,q}$ by Assumption~\ref{asm_Ls_lip}. Therefore,
\begin{align}
    \| |\bbeta_k^{\circ}- \bbeta_{k,\{l\}}^{\circ}|_s \|_q^2 & \le 4\alpha^2\big(1-2\alpha\mu+7(\max\{q,s\}-1)\alpha^2L_{s,q}^2\big)^{k-l} \nonumber \\
    & \quad \cdot \Big(L_{s,q}^2\||\bbeta_{l-1}^{\circ}-\bbeta^*|_s\|_q^2 + M_{s,q}^2\Big).
\end{align}
This completes the proof.
\end{proof}

\subsection{Proofs for Section~\ref{subsec_moment_asgd}}

In this section, we provide the proofs for the convergence results of ASGD in Section~\ref{subsec_moment_asgd}, which can be decomposed into the proofs for Theorems~\ref{thm_asgd_stationary}~to~\ref{thm_bias} in Section~\ref{sec_proof_sketch}.

\begin{proof}[Proof of Theorem~\ref{thm_asgd_stochastic}]
Recall the i.i.d.\ random samples $\bm{\xi}_k=(y_k,\bm{x}_k)$, the filtration $\F_k=(\bm{\xi}_k,\bm{\xi}_{k-1},\ldots)$ and its coupled version $\F_{k,\{l\}}=(\bm{\xi}_k,\ldots,\bm{\xi}_{l+1},\bm{\xi}_l',\bm{\xi}_{l-1},\ldots)$, $l\le k$, where $\bm{\xi}_l'$ is an i.i.d. copy of $\bm{\xi}_l$. 
Following \citet{wu_nonlinear_2005}, we introduce the projection operator 
$$\P_l[\cdot] = \EE[\cdot\mid\F_l] - \EE[\cdot \mid\F_{l-1}].$$ Then, we can rewrite the centered ASGD into
\begin{align}
    \bar\bbeta_k^{\circ} - \EE[\bar\bbeta_k^{\circ}] = \frac{1}{k}\sum_{i=1}^k\sum_{l=0}^{i-1}\P_{i-l}(\bbeta_i^{\circ}) = \frac{1}{k}\sum_{l=0}^{k-1}\sum_{i=l+1}^k\P_{i-l}(\bbeta_i^{\circ}).
\end{align}
Since $\{\P_{i-l}(\bbeta_i^{\circ})\}_{i\ge l+1}$ is a sequence of martingale differences over $i$ for each $l=0,1,\ldots,i-1$, following Lemma D.2 in \citet{zhang_convergence_2021} and triangle inequality, we can obtain
\begin{align}
    \label{eq_agsd_step0}
    \||\bar\bbeta_k^{\circ} - \EE[\bar\bbeta_k^{\circ}]|_s\|_q & = \Big\| \Big|\frac{1}{k}\sum_{l=0}^{k-1}\sum_{i=l+1}^k\P_{i-l}(\bbeta_i^{\circ})\Big|_s\Big\|_q \nonumber \\
    & \le \frac{1}{k}\sum_{l=0}^{k-1}\Big\| \Big|\sum_{i=l+1}^k\P_{i-l}(\bbeta_i^{\circ})\Big|_s\Big\|_q \nonumber \\
    & \le \frac{1}{k}\sum_{l=0}^{k-1}\Big(c_q\cdot s\sum_{i=l+1}^k\big\||\P_{i-l}(\bbeta_i^{\circ})|_s\big\|_q^2\Big)^{1/2}.
\end{align}
By Theorem 1 in \citet{wu_nonlinear_2005}, we have
\begin{align}
    \||\P_{i-l}(\bbeta_i^{\circ})|_s\|_q \le \||\bbeta_i^{\circ} - \bbeta_{i,\{i-l\}}^{\circ}|_s\|_q.
\end{align}
This along with Lemma~\ref{lemma_functional_dep} and definition of $\tilde r_{\alpha,s,q}$ in~\eqref{eq_gradient_beta_star} yields
\begin{align}
    \||\P_{i-l}(\bbeta_i^{\circ})|_s\|_q^2 & \le 4\alpha^2\big(1-2\alpha\mu+7(\max\{q,s\}-1)\alpha^2L_{s,q}^2\big)^l \nonumber \\
    & \quad \cdot \Big(L_{s,q}^2\||\bbeta_{i-l-1}^{\circ}-\bbeta^*|_s\|_q^2 + M_{s,q}^2\Big) \nonumber \\
    & = 4\alpha^2\tilde r_{\alpha,s,q}^l\Big(L_{s,q}^2\||\bbeta_{i-l-1}^{\circ}-\bbeta^*|_s\|_q^2 + M_{s,q}^2\Big).
\end{align}
Recall $r_{\alpha,s,q}$ in~\eqref{eq_gmc} and $\tilde r_{\alpha,s,q}$ in~\eqref{eq_gradient_beta_star}. For some constant $\omega>0$ such that
\begin{equation}
    \label{eq_omega}
    \omega \le \min\Big\{\frac{1}{\alpha},2\mu-7(\max\{q,s\}-1)\alpha L_{s,q}^2\Big\},
\end{equation}
we have $1-\omega\alpha\ge0$ and
\begin{equation}
    r_{\alpha,s,q} \le \tilde r_{\alpha,s,q} \le 1-\omega\alpha <1.
\end{equation}
Consequently, we can further bound~\eqref{eq_agsd_step0} by
\begin{align}
    \label{eq_asgd_step1}
    & \quad \||\bar\bbeta_k^{\circ} - \EE[\bar\bbeta_k^{\circ}]|_s\|_q \nonumber \\
    & \le \frac{1}{k}\sum_{l=0}^{k-1}\Bigg[4c_qs\alpha^2(1-\omega\alpha)^l\sum_{i=l+1}^k\Big(L_{s,q}^2\||\bbeta_{i-l-1}^{\circ}-\bbeta^*|_s\|_q^2 + M_{s,q}^2\Big)\Bigg]^{1/2} \nonumber \\
    & = \frac{1}{k}\sum_{l=0}^{k-1}\Bigg[4c_qs\alpha^2(1-\omega\alpha)^l\Big(L_{s,q}^2\sum_{i=l+1}^k\||\bbeta_{i-l-1}^{\circ}-\bbeta^*|_s\|_q^2 + (k-l)M_{s,q}^2\Big)\Bigg]^{1/2} \nonumber \\
    & \le \frac{1}{k}\sum_{l=0}^{k-1}\Bigg[2\alpha\sqrt{c_qs}(1-\omega\alpha)^{l/2}L_{s,q}\sqrt{\sum_{i=l+1}^k\||\bbeta_{i-l-1}^{\circ}-\bbeta^*|_s\|_q^2}\Bigg] \nonumber \\
    & \quad + \frac{1}{k}\sum_{l=0}^{k-1}\Bigg[2\alpha\sqrt{c_qs}(1-\omega\alpha)^{l/2}\sqrt{(k-l)}M_{s,q}\Bigg] =: \III_1+\III_2.
\end{align}
For the term $\III_1$, it follows from Theorem~\ref{thm_sgd_moment} and expression~\eqref{eq_gradient_beta_star} that
\begin{align}
    \sum_{i=l+1}^k\||\bbeta_{i-l-1}^{\circ}-\bbeta^*|_s\|_q^2 & \le \sum_{i=l+1}^k\Big(6M_{s,q}^2(\max\{q,s\}-1)\alpha\Big) \nonumber \\
    & = 6\alpha (k-l)(\max\{q,s\}-1)M_{s,q}^2.
\end{align}
Inserting this back into~\eqref{eq_asgd_step1} gives
\begin{align}
    \III_1 & \le \frac{2\alpha\sqrt{c_qs}L_{s,q}}{k}\sum_{l=0}^{k-1}(1-\omega\alpha)^{l/2}M_{s,q}\sqrt{6\alpha (k-l)(\max\{q,s\}-1)}\nonumber \\
    & \le \sqrt{c_qs}L_{s,q}\cdot \frac{c_1\sqrt{\alpha}}{\sqrt{k}}M_{s,q}\sqrt{\max\{q,s\}-1},
\end{align}
for some constant $c_1>0$, where the last inequality is due to
\begin{align}
    \sum_{l=0}^{k-1}(1-\omega\alpha)^{l/2}\sqrt{k-l} \le \sqrt{k}\sum_{l=0}^{k-1}(1-\omega\alpha)^{l/2}= O\Big(\frac{\sqrt{k}}{\omega\alpha}\Big).
\end{align}
Similarly, for some constant $c_2>0$,
\begin{equation}
    \III_2 \le \frac{c_2\sqrt{c_qs}}{\sqrt{k}}M_{s,q}.
\end{equation}
Combining the results of $\III_1$ and $\III_2$, we obtain the claimed inequality
\begin{align*}
    \||\bar\bbeta_k^{\circ} - \EE[\bar\bbeta_k^{\circ}]|_s\|_q \le \sqrt{\frac{c_qs}{k}}\Big(c_1L_{s,q}\sqrt{\alpha}M_{s,q}\sqrt{\max\{q,s\}-1} + c_2M_{s,q}\Big).
\end{align*}
\end{proof}

\begin{proof}[Proof of Theorem~\ref{thm_asgd_stationary}]
For the ASGD sequence $\{\bar\bbeta_k\}_{k\in\NN}$ with arbitrarily fixed initialization $\bbeta_0\in\RR^d$ and the stationary ASGD sequence $\{\bar\bbeta_k^{\circ}\}_{k\in\NN}$ with $\bbeta_0^{\circ}\sim\pi_{\alpha}$, we have
\begin{align}
    \||\bar\bbeta_k-\bar\bbeta_k^{\circ}|_s\|_q & = \frac{1}{k}\Big\|\Big|\sum_{i=1}^k(\bbeta_i-\bbeta_i^{\circ})\Big|_s\Big\|_q \nonumber \\
    & \le \frac{1}{k}\sum_{i=1}^k\||\bbeta_i-\bbeta_i^{\circ}|_s\|_q. 
\end{align}
For each $1\le i\le k$, it follows from the geometric-moment contraction in Theorem~\ref{thm_gmc} that
\begin{equation}
    \||\bbeta_i-\bbeta_i^{\circ}|_s\|_q \le  r_{\alpha,s,q}^i\||\bbeta_0-\bbeta_0^{\circ}|_s\|_q.
\end{equation}
Recall that $r_{\alpha,s,q}=1-2\mu\alpha +(\max\{q,s\}-1)L_{s,q}^2\alpha^2<1$ in~\eqref{eq_gmc}. Therefore,
\begin{align}
    \||\bar\bbeta_k-\bar\bbeta_k^{\circ}|_s\|_q \le \frac{1}{k} \cdot \frac{r_{\alpha,s,q}(1-r_{\alpha,s,q}^k)}{1-r_{\alpha,s,q}}\||\bbeta_0-\bbeta_0^{\circ}|_s\|_q \le \frac{1}{k}\cdot \frac{1}{1-r_{\alpha,s,q}}\||\bbeta_0-\bbeta_0^{\circ}|_s\|_q.
\end{align}
The desired result is achieved.
\end{proof}

\begin{proof}[Proof of Theorem~\ref{thm_bias}]
Without loss of generality, assume $\bbeta^*=0$. We use the notation~\eqref{eq_partials_G} for the derivatives of $G.$ Notice that
\begin{equation}
    \nabla G(\bbeta^*)=\nabla G(0)=0.
\end{equation}
Consider the stationary SGD recursion
\begin{align*}
    \bbeta_k^{\circ} = \bbeta_{k-1}^{\circ} - \alpha\nabla g(\bbeta_{k-1}^{\circ},\bm{\xi}_k),\quad k\ge1.
\end{align*}
By taking the expectation on the both sides, we obtain, for all $k\ge1$,
\begin{align}
    \EE [\nabla G(\bbeta_{k-1}^{\circ})] = 0.
\end{align}

Throughout the rest of the proof, we omit the iteration index $k$ and write $\bbeta=\bbeta_{k-1}^{\circ}$ when no confusion is caused. For notational convenience, write $\bbeta=(\beta_1,\ldots,\beta_d)^{\top}$.

A first-order Taylor expansion on $\nabla G(\bbeta)$ at $\bbeta^*=0$ gives
\begin{align}
    \label{eq_bias_taylor1}
    0 = \EE[\nabla G(\bbeta)] = \nabla G(0) + \nabla^2 G(0)\EE[\bbeta] + \R(\bbeta),
\end{align}
where $\nabla^2 G(0)$ is the $d\times d$ Jacobian matrix with entries defined by
\begin{align}
    [\nabla^2 G(0)]_{i,j} = \frac{\partial^2 }{\partial \beta_i\partial \beta_j} G(\bbeta)\Big|_{\bbeta=0}, \quad 1\le i, j\le d,
\end{align}
and $\R(\bbeta)$ is the $d$-dimensional remainder defined as
\begin{align}
    \R(\bbeta) & = \int_0^1\EE\big([\nabla^2 G(t\bbeta) - \nabla^2 G(0)]\bbeta\big) \,dt. 
\end{align}
The $i$-th entry of $\R(\bbeta)$ can be rewritten into
\begin{align}
    \R_i(\bbeta) & = \int_0^1(1-t)\EE\big(\bbeta^{\top} \nabla^3 G_i(t\bbeta) \bbeta\big) \,dt,
\end{align}
where $\nabla^3 G_i(\bbeta)$, $1\le i\le d$, is a $d\times d$ matrix whose entries are
\begin{align}
    [\nabla^3 G_i(\bbeta)]_{l,r} = \frac{\partial^3}{\partial\beta_i\partial\beta_l\partial\beta_r}G(\bbeta), \quad 1\le l,r\le d.
\end{align}

Since $\nabla G(0)=0$ and $\nabla^2 G(0)$ is invertible given that $\lambda_{\min}[\nabla^2 G(0)]>0$, it follows from equation~\eqref{eq_bias_taylor1} that
\begin{align}
    \EE[\bbeta] = -[\nabla^2 G(0)]^{-1}\EE[\R(\bbeta)].
\end{align}
We only need to bound $|\EE[\R(\bbeta)]|_s$ using Theorem~\ref{thm_sgd_moment}, that is $\EE[|\bbeta_k^{\circ}-\bbeta^*|_s]^2=O(\max\{q,s\}\alpha)$ for all $k\ge1$.

Let $\bm{v}=\bbeta/|\bbeta|_s$. For each $i=1,\ldots,d$,
\begin{align}
    \EE[\R_i(\alpha)] & = \int_0^1(1-t)\EE[\bbeta^{\top} \nabla^3 G_i(t\bbeta) \bbeta] \,dt \nonumber \\
    & = \int_0^1(1-t)\EE[|\bbeta|_s^2\bm{v}^{\top} \nabla^3 G_i(t\bbeta) \bm{v}] \,dt.
\end{align}
By Hölder's inequality, for $1/p+1/q=1$,
\begin{align}
    \EE[|\bbeta|_s^2\bm{v}^{\top} \nabla^3 G_i(t\bbeta) \bm{v}] & \le  (\EE[|\bbeta|_s^{2q}])^{1/q}\cdot (\EE(\bm{v}^{\top} \nabla^3 G_i(t\bbeta) \bm{v})^p)^{1/p}.
\end{align}
Again by Hölder's inequality,
\begin{align}
    \EE[(\bm{v}^{\top} \nabla^3 G_i(t\bbeta) \bm{v})^p] \le d^{p(1-\frac{2}{s})}\sup_{|\bm{v}|_s=1}\EE|\nabla^3 G_i(t\bbeta) \bm{v}|_s^p.
\end{align}
Therefore, by Theorem~\ref{thm_sgd_moment} and Lemma~\ref{lemma_Ls_L1_matrix_norm},
\begin{equation}
    \EE[\bbeta] \lesssim M_{s,q}^2\max\{q,s\}\alpha d^{\frac{q}{q-1}\cdot(1-\frac{2}{s})}\max_{1\le i\le d}\|\nabla^3 G_i(\bbeta)\|_{\infty},
\end{equation}
where the matrix norm
\begin{equation}
    \|\nabla^3 G_i(\bbeta)\|_{\infty}:=\max_{1\le j_1\le d}\sum_{j_2=1}^d\Big|\big(\nabla^3 G_i(\bbeta)\big)_{1\le j_1,j_2\le d}\Big|.
\end{equation}
Finally, given the uniform bound $\max_{1\le i\le d}\|\nabla^3 G_i(\bbeta)\|_{\infty}<\infty$,
\begin{equation}
    \EE[\bbeta] = O\Big(M_{s,q}^2\max\{q,s\}\alpha d^{\frac{q}{q-1}\cdot(1-\frac{2}{s})}\Big),
\end{equation}
which finishes the proof.
\end{proof}

\subsection{Proofs for Section~\ref{sec_concentration}}

\begin{proof}[Proof of Theorem~\ref{Theorem_Fuk_Nagaev}]
    By Theorem~\ref{thm_asgd_stationary}, we have $\||\bar{\bbeta}_{k} - \bar{\bbeta}_{k}^{\circ}|_{s}\|_{q} \lesssim 1/(k\alpha) \||\bbeta_{0} - \bbeta_{0}^{\circ}|_{s}\|_q$ and consequently, it follows that  
    \begin{align}
    \label{eq_Initial}
        \mathbb{P} (|\bar{\bbeta}_{k} - \bar{\bbeta}_{k}^{\circ}|_{s} > z) \lesssim \frac{\||\bbeta_{0} - \bbeta_{0}^{\circ}|_{s}\|_q^{q}}{(k \alpha z)^{q}}, \quad z > 0. 
    \end{align}
    Then it suffices to upper bound $\mathbb{P} (|\bar{\bbeta}_{k}^{\circ} - \bbeta^{*}|_{s} > z)$. To this end, we first bound the dependence adjusted norm (Section 2 in~\citet{zhang_gaussian_2017}) for $\{\bbeta_{k}^{\circ}\}_{k \geq 1}$. By Theorem~\ref{thm_gmc}, elementary calculations yield 
    \begin{align*}
        \||\bbeta_{k}^{\circ} - \mathbb{E} [\bbeta_{k}^{\circ}]|_{s}\|_{q, 1/2 - 1/q} = O\left(\frac{M_{s, q}}{\alpha^{1/2 - 1/q}}\right). 
    \end{align*}    
    Consequently, by Theorem 6.2 in~\citet{zhang_gaussian_2017} and Theorem~\ref{thm_bias}, we have 
    \begin{align*}
        \mathbb{P} (|\bar{\bbeta}_{k}^{\circ} - \bbeta^{*}|_{s} > z) \lesssim \frac{(\log d)^{3q/2} (\log k)^{1 + 2 q} M_{s, q}^{q}}{z^{q} k^{q - 1} \alpha^{q/2 - 1}} + \exp\left(- \frac{C k z^{2} \alpha^{1 - 2/q}}{M_{s, q}^{2} \log d}\right).  
    \end{align*}
    Combining this with~\eqref{eq_Initial} completes the proof. 
\end{proof}

\begin{theorem}[Theorem 3.1 in \citep{mies_sequential_2023}]
\label{thm.Mies}
Let $(\epsilon_i)_{i\in\ZZ}$ be i.i.d.\ random variables and $\bm{\epsilon}_k=(\epsilon_k,\epsilon_{k-1},\ldots)$. Assume $X_k=G_k(\bm{\epsilon}_k)\in\RR^d$ with $\EE[X_k]=0$ for some measurable function $G_k$. For any $k$, denote $\bm{\tilde\epsilon}_{k,j}=(\epsilon_k,\ldots,\epsilon_{j+1},\tilde\epsilon_j,\epsilon_{j-1},\ldots)$ with $\tilde\epsilon_j$ an i.i.d.\ copy of $\epsilon_j$. Assume there exist $\Theta>0$ and $q>2$, such that for all $k$,
\begin{align}
    (\EE|G_k(\bm{\epsilon}_k) - G_k(\bm{\tilde\epsilon}_{k,k-j})|_2^q)^{1/q} \le \frac{\Theta}{(j\vee1)^{3}}, \ \  \text{for all} \ j\ge0, \ \ \text{and \ \ } (\EE|G_k(\bm{\epsilon}_0)|_2^q)^{1/q} \le \Theta. 
    \label{eq.ciefe23e}
\end{align}
Additionally, assume that for some $\Gamma\ge1$,
\begin{align}
    \sum_{k=2}^n(\EE|G_k(\bm{\epsilon}_0) - G_{k-1}(\bm{\epsilon}_0)|_2^2)^{1/2} \le \Gamma\cdot\Theta.
    \label{eq.ciefe23e2}
\end{align}
If $d\le cn$ for some $c>0$, then on a potentially different probability space, there exist random vectors $(X_k')_{k=1}^n=^{\D}(X_k)_{k=1}^n$ and independent, mean zero, Gaussian random vectors 
\[Y_k^*\sim\N\Big(0,\sum_{h=-\infty}^{\infty}\mathrm{Cov}\big(G_k(\bm{\epsilon}_0),G_k(\bm{\epsilon}_h)\big)\Big)\] such that
\begin{align*}
    \Bigg(\EE\max_{m\le n}\Big|\frac{1}{\sqrt{n}}\sum_{k=1}^m(X_k'-Y_k^*)\Big|_2^2\Bigg)^{1/2} \le C\Theta\Gamma^{\frac{1}{4}}\sqrt{\log(n)}\Big(\frac{d}{n}\Big)^{\frac{q-2}{6q-4}},
\end{align*}
for some constant $C$ depending on $(q,c)$.
\end{theorem}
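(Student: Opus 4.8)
The plan is to prove this strong Gaussian coupling by the classical route for invariance principles of stationary‑type sequences, made fully quantitative in $(n,d,q)$ through the functional dependence measure: (i) an $m$‑dependence reduction, (ii) a big‑block/small‑block decomposition producing independent block sums, (iii) a multivariate strong approximation for those independent sums, (iv) a covariance‑matching step, and (v) a maximal‑inequality step to upgrade to the uniform‑in‑$m'$ statement. After rescaling we may take $\Theta=1$. First I would fix a truncation level $m=m_n$ and replace $X_k$ by its $m$‑dependent conditional expectation $X_k^{(m)}:=\EE[X_k\mid \epsilon_k,\dots,\epsilon_{k-m+1}]$. Writing $X_k-X_k^{(m)}=\sum_{i\le k-m}P_iX_k$ with $P_i\cdot=\EE[\cdot\mid\mathcal F_i]-\EE[\cdot\mid\mathcal F_{i-1}]$ and $\mathcal F_i=\sigma(\epsilon_i,\epsilon_{i-1},\dots)$, each $P_iX_k$ is an $\mathcal F_i$‑martingale difference with $\||P_iX_k|_2\|_q\le\||G_k(\bm{\epsilon}_k)-G_k(\bm{\tilde\epsilon}_{k,i})|_2\|_q\lesssim (k-i)^{-3}$ by \eqref{eq.ciefe23e}; Burkholder's inequality then gives $\||X_k-X_k^{(m)}|_2\|_q\lesssim m^{-5/2}$, and Doob applied to the telescoping partial sums yields $(\EE\max_{m'\le n}|\sum_{k\le m'}(X_k-X_k^{(m)})|_2^2)^{1/2}\lesssim \sqrt n\, m^{-2}$, so it suffices to couple $\{X_k^{(m)}\}$.

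Next I would partition $\{1,\dots,n\}$ into consecutive large blocks $H_1,\dots,H_R$ of length $L=L_n$ separated by small blocks of length $m$, so $R\asymp n/(L+m)$ and the sums $S_r=\sum_{k\in H_r}X_k^{(m)}$ are independent (by $m$‑dependence), mean zero, with $\||S_r|_2\|_q\lesssim\sqrt L$; the discarded small‑block sums contribute a lower‑order term to the maximal $\ell^2$ partial‑sum error once $L\gg m$. To the independent vectors $S_1,\dots,S_R$ in $\RR^d$ — finite $q$‑th moment, non‑identically distributed — I would apply a multivariate strong‑approximation theorem of Zaitsev–Einmahl type in its sharpest available form, producing independent $Y_r^*\sim\mathcal N(0,\mathrm{Cov}(S_r))$ on a common space with $(\EE\max_{R'\le R}|\sum_{r\le R'}(S_r-Y_r^*)|_2^2)^{1/2}$ bounded by a quantity of the form $d^{a}R^{b}$ with $a,b$ governed by $q$; this single step injects the dimension dependence into the final rate. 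I would then split each block Gaussian $Y_r^*$ into a sum of $L$ independent Gaussians indexed by the times in that block and couple them, via the standard Gaussian‑to‑Gaussian coupling bound (controlled by an operator‑norm distance of covariances), to $\mathcal N(0,\Xi_k)$ with $\Xi_k=\sum_h\mathrm{Cov}(G_k(\bm{\epsilon}_0),G_k(\bm{\epsilon}_h))$: the lag‑truncation error in $\mathrm{Cov}(S_r)\approx\sum_{k\in H_r}\Xi_k$ is tail‑of‑FDM small, while the drift of $k\mapsto\Xi_k$ within a block is controlled by \eqref{eq.ciefe23e2}, which is exactly what yields the factor $\Gamma^{1/4}$; independent $\mathcal N(0,\Xi_k)$ Gaussians are inserted on the small‑block indices.

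Finally I would upgrade the endpoint coupling of the previous step to the $\max_{m'\le n}$ statement by bounding within‑block fluctuations on both sides with Doob/Ottaviani and Gaussian maximal inequalities — the source of the $\sqrt{\log n}$ factor (maximum over $\asymp R$ block‑internal maxima) — and then collect the four error contributions (truncation $\asymp m^{-2}$, small blocks, Gaussian approximation $\asymp d^{a}R^{b}/\sqrt n$, within‑block fluctuation $\asymp\sqrt{L/n}\sqrt{\log n}$, all after dividing by $\sqrt n$) and optimize $m$, $L$, and the internal parameter of the strong‑approximation theorem under $R\asymp n/L$ and $d\le cn$. Balancing these terms produces the stated bound $C\Theta\Gamma^{1/4}\sqrt{\log n}\,(d/n)^{(q-2)/(6q-4)}$, the exponent being precisely the $q$‑dependent rate inherited from the independent‑sum Gaussian approximation (it approaches $(d/n)^{1/6}$ as $q\to\infty$).

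The main obstacle is the independent‑sum step: securing a multivariate strong approximation for sums of independent, non‑identically‑distributed $\RR^d$‑vectors with only $q$ finite moments that is simultaneously sharp in the number of summands and in the dimension, since both the rate and its dependence on $q$ are inherited verbatim from there; a close second is the bookkeeping needed to make the four competing error terms balance against a single free dimension budget $d\le cn$, together with the care required so that the covariance‑alignment step costs only the advertised $\Gamma^{1/4}$ rather than a larger power of $\Gamma$.
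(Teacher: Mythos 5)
The paper does \emph{not} prove this statement: it is quoted verbatim (as ``Theorem 3.1 in \citep{mies_sequential_2023}'') and imported as a black box, with only the brief remark directly below the statement that the argument in the cited source extends unchanged to vector-valued i.i.d.\ innovations $\bm{\xi}_i$. There is therefore no proof in the paper to compare your sketch against.

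That said, your outline is a sensible and, as far as I can judge, broadly faithful reconstruction of the sequential-Gaussian-approximation strategy used in that literature: $m$-dependent truncation controlled by the tail of the functional dependence measure (with the $\lesssim m^{-5/2}$ single-term and $\lesssim \sqrt{n}\,m^{-2}$ maximal bounds following, as you say, from Burkholder and a Wu-type moment inequality on the residual series), a big-block/small-block decomposition yielding independent $\RR^d$-valued block sums, a multivariate strong approximation of Zaitsev--Einmahl/Eldan type for those independent sums (which is indeed where both the dimension dependence and the $q$-dependent exponent $(q-2)/(6q-4)$ enter), covariance alignment to the long-run covariances $\Xi_k$ (with the time-variation cost $\Gamma^{1/4}$ coming from~\eqref{eq.ciefe23e2}), and a Doob/Gaussian maximal step for the uniform-in-$m'$ bound producing the $\sqrt{\log n}$ factor. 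You correctly flag the independent-sum strong approximation and the $\Gamma^{1/4}$ bookkeeping as the two delicate points. If you want to make this a complete argument rather than a roadmap, the step that most needs to be nailed down is the quantitative form of the strong approximation for independent, non-identically distributed $\RR^d$-vectors with finite $q$-th moment and its exact $d$-dependence; everything else is routine once that lemma is pinned. But note again that for the purposes of \emph{this} paper you do not need to re-prove it --- the authors simply invoke the cited result, and the only thing that actually requires justification here is the remark that the proof in the source is insensitive to the dimensionality of the innovation $\bm{\xi}_i$ (which holds because the argument only uses i.i.d.-ness and $L^q$ bounds on the functional dependence measure, not any structure of $\bm{\xi}_i$ itself).
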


Instead of univariate $\epsilon_i$, we apply Theorem 3.1 with vector-valued i.i.d.\ inputs $\bm{\xi}_i$. The theorem still applies as the proof  depends only on the i.i.d.\ random elements and their $L^q$ bounds but not on the dimension of $\bm{\xi}_i.$

\begin{proof}[Proof of Theorem~\ref{thm_GA}]
To prove the Gaussian approximation we will apply Theorem \ref{thm.Mies} (Theorem 3.1 in \citet{mies_sequential_2023}) with $G_k\equiv G = h_{\alpha}$ defined in~\eqref{eq_beta_stationary_repre} since $\bbeta_k^{\circ}$ is stationary. We now verify the conditions \eqref{eq.ciefe23e} and \eqref{eq.ciefe23e2}. 

Recall the functional dependence measure $\||\bbeta_k^{\circ} - \bbeta_{k,\{l\}}^{\circ}|_s\|_q$ introduced in Section~\ref{subsubsec_functional_dep}.Throughout the proof, the $q$-th moment of the Euclidean norm is denoted by $$\|\cdot\|_q:=\big\||\cdot|_2\big\|_q.$$ Set
\begin{align}
    \rho_{\alpha,q}^2 := 1-2\alpha\mu+7(\max\{q,2\}-1)\alpha^2L_{2,q}^2, \quad C_{\alpha,q} := 2\alpha \sqrt{cL_{2,q}^2\max\{q,2\}\alpha + 1}
\end{align}
for some constant $c>0$. If $c$ is chosen sufficiently large, then, by Lemma~\ref{lemma_functional_dep} and Theorem~\ref{thm_sgd_moment}, for all $k\ge1$ and $l\le k$, we have
\begin{align*}
 \|\bbeta_k^{\circ}- \bbeta_{k,\{l\}}^{\circ}\|_q^2 
    & \le 4\alpha^2\rho_{\alpha,q}^{2(k-l)}\Big(L_{2,q}^2\|\bbeta_{l-1}^{\circ}-\bbeta^*\|_q^2 + M_{2,q}^2\Big) \nonumber \\
    & \le 4\alpha^2\rho_{\alpha,q}^{2(k-l)}M_{2,q}^2\Big(cL_{2,q}^2\max\{q,2\}\alpha + 1\Big) \\
    &=C_{\alpha,q}^2\rho_{\alpha,q}^{2(k-l)}M_{2,q}^2.
\end{align*}
For $\alpha\in(0,\alpha_{s_d,q})$, it follows that $\rho_{\alpha,q}<1$. Let $l=k-j$. Then, for a sufficiently large constant $C_{\alpha, q}',$ we have
\begin{align}
    \label{eq_func_dep_gmc}
    \| \bbeta_k^{\circ}- \bbeta_{k,\{k-j\}}^{\circ}\|_q & \le C_{\alpha,q}M_{2,q}\rho_{\alpha,q}^j \le C_{\alpha,q}' M_{2,q}(j+1)^{-3}.
\end{align}
Therefore, the condition \eqref{eq.ciefe23e} holds with $\Theta= C_{\alpha,q}' M_{2,q}$. This verifies the first part of condition~\ref{eq.ciefe23e}. For the second part of condition~\ref{eq.ciefe23e}, by Assumption~\ref{asm_Ls_lip} and Theorem~\ref{thm_sgd_moment}, for some constant $C_{\alpha,q}''>0$,
\begin{align}
    \|h_{\alpha}(\bm{\xi}_0,\bm{\xi}_{-1},\ldots)\|_q = \|\bbeta_0^{\circ}\|_q \le \|\bbeta_0^{\circ}-\bbeta^*\|_q + |\bbeta^*|_2 \le C_{\alpha,q}''M_{2,q} <\infty.
\end{align} 
Moreover, since $\bbeta_k^{\circ}$ is stationary, $G_k=G_{k-1}=h_{\alpha}$ and the left hand side of~\eqref{eq.ciefe23e2} is zero. Thus, condition~\eqref{eq.ciefe23e2} is trivially satisfied with $\Gamma=1$.

Finally, we show that the long-run covariance matrix $\Xi=\sum_{k=-\infty}^{\infty}\mathrm{Cov}(\bbeta_0^{\circ},\bbeta_k^{\circ})$ is well defined in the sense that the spectral norm $\|\Xi\|_s$ is finite. Following~\eqref{eq_functional_dep_def}, denote 
\begin{equation}
    \bbeta_{k,\{\le l\}}^{\circ} := \bbeta_{k,\{\ldots,l-1,l\}}^{\circ} = h_{\alpha}(\bm{\xi}_k,\ldots,\bm{\xi}_{l+1},\bm{\xi}_l',\bm{\xi}_{l-1}',\ldots)=h_{\alpha}(\F_{k,\{\ldots,l-1,l\}}), \quad l\le k.
\end{equation}
Since $\bbeta_{k,\{\le0\}}^{\circ}$ is independent of $\bbeta_0^{\circ}$, we have
\begin{align}
    \mathrm{Cov}(\bbeta_0^{\circ},\bbeta_k^{\circ}) & = \EE[\bbeta_0^{\circ}\bbeta_k^{\circ\top}] - \EE[\bbeta_0^{\circ}]\EE[\bbeta_k^{\circ\top}] \nonumber \\
    & = \EE[\bbeta_0^{\circ}\bbeta_k^{\circ\top}] - \EE[\bbeta_0^{\circ}]\EE[\bbeta_{k,\{\le0\}}^{\circ\top}] \nonumber \\
    & = \EE\big[\bbeta_0^{\circ} (\bbeta_k^{\circ} - \bbeta_{k,\{\le 0\}}^{\circ})^{\top}\big].
\end{align}
We can rewrite the difference as a telescoping sum,
\begin{align}
    \bbeta_k^{\circ} - \bbeta_{k,\{\le 0\}}^{\circ} = \sum_{l=0}^{\infty}\big(\bbeta_{k,\{\le -l+1\}}^{\circ} - \bbeta_{k,\{\le -l\}}^{\circ}\big).
\end{align}
By stationarity and~\eqref{eq_func_dep_gmc}, it follows that
\begin{align}
    \big\|\bbeta_{k,\{\le -l+1\}}^{\circ} - \bbeta_{k,\{\le -l\}}^{\circ}\big\|_2 = \big\|\bbeta_{k,\{-l+1\}}^{\circ} - \bbeta_{k,\{-l\}}^{\circ}\big\|_2 \le C_{\alpha,2}M_{2,2}\rho_{2,2}^{k+l+1}.
\end{align}
For the spectral norm,
\begin{align}
    \big\|\mathrm{Cov}(\bbeta_0^{\circ},\bbeta_k^{\circ})\big\|_s 
    & = \sup_{\bm{u},\bm{v}\in\RR^d,|\bm{u}|_2=|\bm{v}|_2=1}\EE\bm{v}^{\top}\bbeta_0^{\circ}(\bbeta_k^{\circ} - \bbeta_{k,\{\le 0\}}^{\circ})^{\top}\bm{u} \nonumber \\
    & \le \sup_{\bm{u},\bm{v}\in\RR^d,|\bm{u}|_2=|\bm{v}|_2=1}[\EE(\bm{v}^{\top}\bbeta_0^{\circ})^2]^{1/2}\big[\EE[(\bbeta_k^{\circ} - \bbeta_{k,\{\le 0\}}^{\circ})^{\top}\bm{u}]^2\big]^{1/2} \nonumber \\
    & \le \big\|\bbeta_0^{\circ}\big\|_2\big\|\bbeta_k^{\circ} - \bbeta_{k,\{\le 0\}}^{\circ}\big\|_2,
\end{align}
where the first inequality is by Cauchy-Schwarz and the last inequality uses  $(\bm{u}^{\top}\bbeta_0^{\circ})^2\le |\bbeta_0^{\circ}|^2$ with $|\bm{u}|_2=1$. This, along with $M_{2,2}<\infty$ (Assumption~\ref{asm_Ls_lip}) yields
\begin{align}
    \big\|\mathrm{Cov}(\bbeta_0^{\circ},\bbeta_k^{\circ})\big\|_s \le \|\bbeta_0^{\circ}\|_2\|\bbeta_k^{\circ} - \bbeta_{k,\{\le 0\}}^{\circ}\|_2 \le C_{\alpha}'\rho_{\alpha,2}^k,
\end{align}
for some constant $C_{\alpha}'>0$. As a direct consequence,
\begin{align}
    \|\Xi\|_s \le \big\|\EE[\bbeta_0^{\circ}\bbeta_0^{\circ\top}]\big\|_s + 2\sum_{k=1}^{\infty}C_{\alpha}'\rho_{\alpha,2}^k<\infty.
\end{align}
This completes the proof.

\end{proof}


\newpage

\setcounter{page}{1}
\section*{NeurIPS Paper Checklist}

\begin{enumerate}

\item {\bf Claims}
    \item[] Question: Do the main claims made in the abstract and introduction accurately reflect the paper's contributions and scope?
    \item[] Answer: \answerYes{} 
    \item[] Justification: 
    \textit{We clearly stressed our contributions and scope in the abstract and included a subsection in the introduction to list our key innovations.}
    \item[] Guidelines:
    \begin{itemize}
        \item The answer NA means that the abstract and introduction do not include the claims made in the paper.
        \item The abstract and/or introduction should clearly state the claims made, including the contributions made in the paper and important assumptions and limitations. A No or NA answer to this question will not be perceived well by the reviewers. 
        \item The claims made should match theoretical and experimental results, and reflect how much the results can be expected to generalize to other settings. 
        \item It is fine to include aspirational goals as motivation as long as it is clear that these goals are not attained by the paper. 
    \end{itemize}

\item {\bf Limitations}
    \item[] Question: Does the paper discuss the limitations of the work performed by the authors?
    \item[] Answer: \answerYes{} 
    \item[] Justification: 
    \textit{We talked about the limitations in the last section, especially for the assumptions of strong convexity and smoothness.}
    \item[] Guidelines:
    \begin{itemize}
        \item The answer NA means that the paper has no limitation while the answer No means that the paper has limitations, but those are not discussed in the paper. 
        \item The authors are encouraged to create a separate "Limitations" section in their paper.
        \item The paper should point out any strong assumptions and how robust the results are to violations of these assumptions (e.g., independence assumptions, noiseless settings, model well-specification, asymptotic approximations only holding locally). The authors should reflect on how these assumptions might be violated in practice and what the implications would be.
        \item The authors should reflect on the scope of the claims made, e.g., if the approach was only tested on a few datasets or with a few runs. In general, empirical results often depend on implicit assumptions, which should be articulated.
        \item The authors should reflect on the factors that influence the performance of the approach. For example, a facial recognition algorithm may perform poorly when image resolution is low or images are taken in low lighting. Or a speech-to-text system might not be used reliably to provide closed captions for online lectures because it fails to handle technical jargon.
        \item The authors should discuss the computational efficiency of the proposed algorithms and how they scale with dataset size.
        \item If applicable, the authors should discuss possible limitations of their approach to address problems of privacy and fairness.
        \item While the authors might fear that complete honesty about limitations might be used by reviewers as grounds for rejection, a worse outcome might be that reviewers discover limitations that aren't acknowledged in the paper. The authors should use their best judgment and recognize that individual actions in favor of transparency play an important role in developing norms that preserve the integrity of the community. Reviewers will be specifically instructed to not penalize honesty concerning limitations.
    \end{itemize}

\item {\bf Theory assumptions and proofs}
    \item[] Question: For each theoretical result, does the paper provide the full set of assumptions and a complete (and correct) proof?
    \item[] Answer: \answerYes{} 
    \item[] Justification: 
    \textit{We provide rigorous proofs for all the theoretical results in the supplementary material.}
    \item[] Guidelines:
    \begin{itemize}
        \item The answer NA means that the paper does not include theoretical results. 
        \item All the theorems, formulas, and proofs in the paper should be numbered and cross-referenced.
        \item All assumptions should be clearly stated or referenced in the statement of any theorems.
        \item The proofs can either appear in the main paper or the supplemental material, but if they appear in the supplemental material, the authors are encouraged to provide a short proof sketch to provide intuition. 
        \item Inversely, any informal proof provided in the core of the paper should be complemented by formal proofs provided in appendix or supplemental material.
        \item Theorems and Lemmas that the proof relies upon should be properly referenced. 
    \end{itemize}

    \item {\bf Experimental result reproducibility}
    \item[] Question: Does the paper fully disclose all the information needed to reproduce the main experimental results of the paper to the extent that it affects the main claims and/or conclusions of the paper (regardless of whether the code and data are provided or not)?
    \item[] Answer: \answerNA{} 
    \item[] Justification: 
    \textit{This paper contributes to theoretical guarantees of high-dimensional SGD.}
    \item[] Guidelines:
    \begin{itemize}
        \item The answer NA means that the paper does not include experiments.
        \item If the paper includes experiments, a No answer to this question will not be perceived well by the reviewers: Making the paper reproducible is important, regardless of whether the code and data are provided or not.
        \item If the contribution is a dataset and/or model, the authors should describe the steps taken to make their results reproducible or verifiable. 
        \item Depending on the contribution, reproducibility can be accomplished in various ways. For example, if the contribution is a novel architecture, describing the architecture fully might suffice, or if the contribution is a specific model and empirical evaluation, it may be necessary to either make it possible for others to replicate the model with the same dataset, or provide access to the model. In general. releasing code and data is often one good way to accomplish this, but reproducibility can also be provided via detailed instructions for how to replicate the results, access to a hosted model (e.g., in the case of a large language model), releasing of a model checkpoint, or other means that are appropriate to the research performed.
        \item While NeurIPS does not require releasing code, the conference does require all submissions to provide some reasonable avenue for reproducibility, which may depend on the nature of the contribution. For example
        \begin{enumerate}
            \item If the contribution is primarily a new algorithm, the paper should make it clear how to reproduce that algorithm.
            \item If the contribution is primarily a new model architecture, the paper should describe the architecture clearly and fully.
            \item If the contribution is a new model (e.g., a large language model), then there should either be a way to access this model for reproducing the results or a way to reproduce the model (e.g., with an open-source dataset or instructions for how to construct the dataset).
            \item We recognize that reproducibility may be tricky in some cases, in which case authors are welcome to describe the particular way they provide for reproducibility. In the case of closed-source models, it may be that access to the model is limited in some way (e.g., to registered users), but it should be possible for other researchers to have some path to reproducing or verifying the results.
        \end{enumerate}
    \end{itemize}

\item {\bf Open access to data and code}
    \item[] Question: Does the paper provide open access to the data and code, with sufficient instructions to faithfully reproduce the main experimental results, as described in supplemental material?
    \item[] Answer: \answerNA{} 
    \item[] Justification: 
    \textit{This paper does not include experiments requiring code.}
    \item[] Guidelines:
    \begin{itemize}
        \item The answer NA means that paper does not include experiments requiring code.
        \item Please see the NeurIPS code and data submission guidelines (\url{https://nips.cc/public/guides/CodeSubmissionPolicy}) for more details.
        \item While we encourage the release of code and data, we understand that this might not be possible, so “No” is an acceptable answer. Papers cannot be rejected simply for not including code, unless this is central to the contribution (e.g., for a new open-source benchmark).
        \item The instructions should contain the exact command and environment needed to run to reproduce the results. See the NeurIPS code and data submission guidelines (\url{https://nips.cc/public/guides/CodeSubmissionPolicy}) for more details.
        \item The authors should provide instructions on data access and preparation, including how to access the raw data, preprocessed data, intermediate data, and generated data, etc.
        \item The authors should provide scripts to reproduce all experimental results for the new proposed method and baselines. If only a subset of experiments are reproducible, they should state which ones are omitted from the script and why.
        \item At submission time, to preserve anonymity, the authors should release anonymized versions (if applicable).
        \item Providing as much information as possible in supplemental material (appended to the paper) is recommended, but including URLs to data and code is permitted.
    \end{itemize}

\item {\bf Experimental setting/details}
    \item[] Question: Does the paper specify all the training and test details (e.g., data splits, hyperparameters, how they were chosen, type of optimizer, etc.) necessary to understand the results?
    \item[] Answer: \answerNA{} 
    \item[] Justification: 
    \textit{This paper does not include experiments.}
    \item[] Guidelines:
    \begin{itemize}
        \item The answer NA means that the paper does not include experiments.
        \item The experimental setting should be presented in the core of the paper to a level of detail that is necessary to appreciate the results and make sense of them.
        \item The full details can be provided either with the code, in appendix, or as supplemental material.
    \end{itemize}

\item {\bf Experiment statistical significance}
    \item[] Question: Does the paper report error bars suitably and correctly defined or other appropriate information about the statistical significance of the experiments?
    \item[] Answer: \answerNA{} 
    \item[] Justification: 
    \textit{This paper does not include experiments.}
    \item[] Guidelines:
    \begin{itemize}
        \item The answer NA means that the paper does not include experiments.
        \item The authors should answer "Yes" if the results are accompanied by error bars, confidence intervals, or statistical significance tests, at least for the experiments that support the main claims of the paper.
        \item The factors of variability that the error bars are capturing should be clearly stated (for example, train/test split, initialization, random drawing of some parameter, or overall run with given experimental conditions).
        \item The method for calculating the error bars should be explained (closed form formula, call to a library function, bootstrap, etc.)
        \item The assumptions made should be given (e.g., Normally distributed errors).
        \item It should be clear whether the error bar is the standard deviation or the standard error of the mean.
        \item It is OK to report 1-sigma error bars, but one should state it. The authors should preferably report a 2-sigma error bar than state that they have a 96\% CI, if the hypothesis of Normality of errors is not verified.
        \item For asymmetric distributions, the authors should be careful not to show in tables or figures symmetric error bars that would yield results that are out of range (e.g. negative error rates).
        \item If error bars are reported in tables or plots, The authors should explain in the text how they were calculated and reference the corresponding figures or tables in the text.
    \end{itemize}

\item {\bf Experiments compute resources}
    \item[] Question: For each experiment, does the paper provide sufficient information on the computer resources (type of compute workers, memory, time of execution) needed to reproduce the experiments?
    \item[] Answer: \answerNA{} 
    \item[] Justification: 
    \textit{This paper does not include experiments.}
    \item[] Guidelines:
    \begin{itemize}
        \item The answer NA means that the paper does not include experiments.
        \item The paper should indicate the type of compute workers CPU or GPU, internal cluster, or cloud provider, including relevant memory and storage.
        \item The paper should provide the amount of compute required for each of the individual experimental runs as well as estimate the total compute. 
        \item The paper should disclose whether the full research project required more compute than the experiments reported in the paper (e.g., preliminary or failed experiments that didn't make it into the paper). 
    \end{itemize}
    
\item {\bf Code of ethics}
    \item[] Question: Does the research conducted in the paper conform, in every respect, with the NeurIPS Code of Ethics \url{https://neurips.cc/public/EthicsGuidelines}?
    \item[] Answer: \answerYes{} 
    \item[] Justification: 
    \textit{We followed the NeurIPS Code of Ethics as instructed.}
    \item[] Guidelines:
    \begin{itemize}
        \item The answer NA means that the authors have not reviewed the NeurIPS Code of Ethics.
        \item If the authors answer No, they should explain the special circumstances that require a deviation from the Code of Ethics.
        \item The authors should make sure to preserve anonymity (e.g., if there is a special consideration due to laws or regulations in their jurisdiction).
    \end{itemize}

\item {\bf Broader impacts}
    \item[] Question: Does the paper discuss both potential positive societal impacts and negative societal impacts of the work performed?
    \item[] Answer: \answerYes{} 
    \item[] Justification: 
    \textit{This paper discusses potential positive societal impacts, particularly through advancing the theoretical understanding of modern machine learning, which can inform the development of more robust and efficient algorithms. As the work is purely theoretical and does not propose or evaluate any deployable systems, we do not anticipate any direct negative societal impacts.}
    \item[] Guidelines:
    \begin{itemize}
        \item The answer NA means that there is no societal impact of the work performed.
        \item If the authors answer NA or No, they should explain why their work has no societal impact or why the paper does not address societal impact.
        \item Examples of negative societal impacts include potential malicious or unintended uses (e.g., disinformation, generating fake profiles, surveillance), fairness considerations (e.g., deployment of technologies that could make decisions that unfairly impact specific groups), privacy considerations, and security considerations.
        \item The conference expects that many papers will be foundational research and not tied to particular applications, let alone deployments. However, if there is a direct path to any negative applications, the authors should point it out. For example, it is legitimate to point out that an improvement in the quality of generative models could be used to generate deepfakes for disinformation. On the other hand, it is not needed to point out that a generic algorithm for optimizing neural networks could enable people to train models that generate Deepfakes faster.
        \item The authors should consider possible harms that could arise when the technology is being used as intended and functioning correctly, harms that could arise when the technology is being used as intended but gives incorrect results, and harms following from (intentional or unintentional) misuse of the technology.
        \item If there are negative societal impacts, the authors could also discuss possible mitigation strategies (e.g., gated release of models, providing defenses in addition to attacks, mechanisms for monitoring misuse, mechanisms to monitor how a system learns from feedback over time, improving the efficiency and accessibility of ML).
    \end{itemize}
    
\item {\bf Safeguards}
    \item[] Question: Does the paper describe safeguards that have been put in place for responsible release of data or models that have a high risk for misuse (e.g., pretrained language models, image generators, or scraped datasets)?
    \item[] Answer: \answerNA{} 
    \item[] Justification: 
    \textit{This paper poses no such risks.}
    \item[] Guidelines:
    \begin{itemize}
        \item The answer NA means that the paper poses no such risks.
        \item Released models that have a high risk for misuse or dual-use should be released with necessary safeguards to allow for controlled use of the model, for example by requiring that users adhere to usage guidelines or restrictions to access the model or implementing safety filters. 
        \item Datasets that have been scraped from the Internet could pose safety risks. The authors should describe how they avoided releasing unsafe images.
        \item We recognize that providing effective safeguards is challenging, and many papers do not require this, but we encourage authors to take this into account and make a best faith effort.
    \end{itemize}

\item {\bf Licenses for existing assets}
    \item[] Question: Are the creators or original owners of assets (e.g., code, data, models), used in the paper, properly credited and are the license and terms of use explicitly mentioned and properly respected?
    \item[] Answer: \answerNA{} 
    \item[] Justification: 
    \textit{This paper does not use existing assets.}
    \item[] Guidelines:
    \begin{itemize}
        \item The answer NA means that the paper does not use existing assets.
        \item The authors should cite the original paper that produced the code package or dataset.
        \item The authors should state which version of the asset is used and, if possible, include a URL.
        \item The name of the license (e.g., CC-BY 4.0) should be included for each asset.
        \item For scraped data from a particular source (e.g., website), the copyright and terms of service of that source should be provided.
        \item If assets are released, the license, copyright information, and terms of use in the package should be provided. For popular datasets, \url{paperswithcode.com/datasets} has curated licenses for some datasets. Their licensing guide can help determine the license of a dataset.
        \item For existing datasets that are re-packaged, both the original license and the license of the derived asset (if it has changed) should be provided.
        \item If this information is not available online, the authors are encouraged to reach out to the asset's creators.
    \end{itemize}

\item {\bf New assets}
    \item[] Question: Are new assets introduced in the paper well documented and is the documentation provided alongside the assets?
    \item[] Answer: \answerNA{} 
    \item[] Justification: 
    \textit{This paper does not release new assets.}
    \item[] Guidelines:
    \begin{itemize}
        \item The answer NA means that the paper does not release new assets.
        \item Researchers should communicate the details of the dataset/code/model as part of their submissions via structured templates. This includes details about training, license, limitations, etc. 
        \item The paper should discuss whether and how consent was obtained from people whose asset is used.
        \item At submission time, remember to anonymize your assets (if applicable). You can either create an anonymized URL or include an anonymized zip file.
    \end{itemize}

\item {\bf Crowdsourcing and research with human subjects}
    \item[] Question: For crowdsourcing experiments and research with human subjects, does the paper include the full text of instructions given to participants and screenshots, if applicable, as well as details about compensation (if any)? 
    \item[] Answer: \answerNA{} 
    \item[] Justification: 
    \textit{This paper does not involve crowdsourcing nor research with human subjects.}
    \item[] Guidelines:
    \begin{itemize}
        \item The answer NA means that the paper does not involve crowdsourcing nor research with human subjects.
        \item Including this information in the supplemental material is fine, but if the main contribution of the paper involves human subjects, then as much detail as possible should be included in the main paper. 
        \item According to the NeurIPS Code of Ethics, workers involved in data collection, curation, or other labor should be paid at least the minimum wage in the country of the data collector. 
    \end{itemize}

\item {\bf Institutional review board (IRB) approvals or equivalent for research with human subjects}
    \item[] Question: Does the paper describe potential risks incurred by study participants, whether such risks were disclosed to the subjects, and whether Institutional Review Board (IRB) approvals (or an equivalent approval/review based on the requirements of your country or institution) were obtained?
    \item[] Answer: \answerNA{} 
    \item[] 
    \textit{This paper does not involve crowdsourcing nor research with human subjects.}
    \item[] Guidelines:
    \begin{itemize}
        \item The answer NA means that the paper does not involve crowdsourcing nor research with human subjects.
        \item Depending on the country in which research is conducted, IRB approval (or equivalent) may be required for any human subjects research. If you obtained IRB approval, you should clearly state this in the paper. 
        \item We recognize that the procedures for this may vary significantly between institutions and locations, and we expect authors to adhere to the NeurIPS Code of Ethics and the guidelines for their institution. 
        \item For initial submissions, do not include any information that would break anonymity (if applicable), such as the institution conducting the review.
    \end{itemize}

\item {\bf Declaration of LLM usage}
    \item[] Question: Does the paper describe the usage of LLMs if it is an important, original, or non-standard component of the core methods in this research? Note that if the LLM is used only for writing, editing, or formatting purposes and does not impact the core methodology, scientific rigorousness, or originality of the research, declaration is not required.
    \item[] Answer: \answerNA{} 
    \item[] Justification: 
    \textit{The core method development in this research does not involve LLMs as any important, original, or non-standard components.}
    \item[] Guidelines:
    \begin{itemize}
        \item The answer NA means that the core method development in this research does not involve LLMs as any important, original, or non-standard components.
        \item Please refer to our LLM policy (\url{https://neurips.cc/Conferences/2025/LLM}) for what should or should not be described.
    \end{itemize}

\end{enumerate}

\end{document}